\newtheorem{theorem}{Theorem}
\newtheorem{proposition}{Proposition}
\newtheorem{lemma}{Lemma}
\theoremstyle{definition}
\newtheorem{definition}{Definition}
\theoremstyle{remark}
\newcommand{\norm}[1]{\left\lVert#1\right\rVert}
\newcommand{\mynorm}[1]{\lVert#1\rVert}
\newcommand\inner[2]{\langle #1, #2 \rangle}
\newcommand{\tr}{\text{trace}}
\newcommand{\gamMax}{\gamma_{\max}}
\newcommand{\gamMin}{\gamma_{\min}}
\newcommand{\myrounds}[1]{\left ( #1 \right )}
\newcommand{\Ktilde}{\Tilde{K}}
\newcommand{\mybraces}[1]{\left \{ #1 \right \}}
\newcommand{\mysquares}[1]{\left [ #1 \right ]}
\newcommand{\slim}{\sum \limits}
\DeclarePairedDelimiter\abs{\lvert}{\rvert}%
\newcommand{\E}{\mathbb{E}}
\newcommand{\psione}{\psi_{1}}
\newcommand{\psitwo}{\psi_{2}}
\title{On the optimality of kernels for high-dimensional clustering}
\author{
  Leena Chennuru Vankadara \\
  IMPRS-IS\thanks{International Max Planck Research School for Intelligent Systems} , University of T\"{u}bingen\\
   \And
 Debarghya Ghoshdastidar \\
  Technical University of Munich \\
}
\begin{document}
\maketitle

\begin{abstract}
This paper studies the optimality of kernel methods in high-dimensional data clustering. Recent works have studied the large sample performance of kernel clustering in the high-dimensional regime, where Euclidean distance becomes less informative. However, it is unknown whether popular methods, such as kernel k-means, are optimal in this regime. We consider the problem of high-dimensional Gaussian clustering and show that, with the exponential kernel function, the sufficient conditions for partial recovery of clusters using the NP-hard kernel k-means objective matches the known information-theoretic limit up to a factor of $\sqrt{2}$ for large $k$. It also exactly matches the known upper bounds for the non-kernel setting. We also show that a semi-definite relaxation of the kernel k-means procedure matches upto constant factors, the spectral threshold, below which no polynomial time algorithm is known to succeed. This is the first work that provides such optimality guarantees for the kernel k-means as well as its convex relaxation. Our proofs demonstrate the utility of the less known polynomial concentration results for random variables with exponentially decaying tails in higher-order analysis of kernel methods.
\end{abstract}

\keywords{kernels \and high-dimensional clustering \and optimality \and statistical-computational tradeoffs}

\section{Introduction}

Kernel methods are one of the most empirically successful class of machine learning techniques. While being easy to implement, kernel methods are well known to improve empirical performance of algorithms and are also related to other successful machine learning principles such as Gaussian process and neural networks \parencite{kanagawa2018gaussian,jacot2018neural}. At the heart of kernel based learning lies the \emph{kernel trick} which implicitly maps the data to a high, possibly infinite, dimensional \emph{reproducing kernel Hilbert space} (RKHS), and hence, induces non-linearity into classical linear learning models such as support vector machines, principle component analysis or k-means. Kernel methods are based on a solid theoretical foundation, which makes them conducive to theoretical analysis. There has been considerable theoretical research on kernel based supervised learning from a statistical perspective \parencite{steinwart2008support,mendelson2010regularization}, and to some extent, in the context of semi-supervised learning \parencite{wasserman2008statistical,mai2018random}. Perhaps surprisingly, much less is known about the statistical performance of kernel methods beyond such settings, for instance, kernel based data clustering.

A long-standing issue in the theoretical study of clustering, and also kernel based clustering, has been the lack of a universally accepted notion of \emph{goodness} of clustering. A popular definition of good clustering is one that consistently or near-optimally partitions the data domain. Based on this perspective, there exist \emph{approximation guarantees} for solving kernel based cost functions \parencite{wang2019scalable} and \emph{consistency results} showing that the clustering asymptotically approaches a limiting clustering \parencite{vonluxburg20008consistency}. In such analyses, the optimal cost function is inherently tied to the chosen kernel and hence can be arbitrarily far from the ``ground truth.'' For instance, even an arbitrary clustering can be optimal for trivial kernels (constant or identity functions). 
Another approach to measure the performance of a clustering algorithm is by establishing recovery guarantees under distributional assumptions, sometimes known as \emph{planted models}. Distributional assumptions, or specifically (sub)-Gaussian mixture model assumption, is often considered in the theory of clustering. While learning a mixture of Gaussians has always been an important research problem, \citet{dasgupta1999learning}, for the time, presented a provable clustering algorithm to learn a mixture of \emph{high-dimensional} Gaussians. Theoretical research on learning high-dimensional Gaussians have ever since been highly significant, owing to the ubiquity of high-dimensional data in practice. Recent works in this direction provide \emph{phase transitions} for both clustering and parameter estimation of a mixture of high-dimensional Gaussians \parencite{banks2018information,ashtiani2018nearly}.

\citet{couillet2016kernel} initiated the theoretical study of kernel methods for high-dimensional Gaussian clustering, and in particular, presented the large sample behaviour of kernel spectral clustering in the regime where number of samples grow linearly with the data dimension. The statistical difficulty in this regime stems from the fact the Euclidean distance tends to be less informative in high dimensions and intra-cluster distances could be systematically larger than inter-cluster distances. \citet{yan2016robustness} generalised the problem setup to sub-Gaussian mixtures and derived sufficient conditions for achieving zero clustering error using convex relaxations of the kernel k-means objective. In both works, the analysis is restricted to computationally efficient clustering algorithms and the optimality of kernel methods, in terms of comparing necessary and sufficient conditions for clustering, is not addressed.

In this paper, we study the phase transition of the high-dimensional Gaussian clustering problem, similar to \citet{banks2018information}, where the number of samples is linear in the problem dimension. However, we focus on the case where one has access to only a kernel matrix. In other words, while the information-theoretic thresholds inherent to the Gaussian clustering problem, are expected to remain unchanged in the kernel setting with a non-trivial kernel, we prove that one can nearly achieve such thresholds using popular kernel methods. The \textbf{main contributions} in this paper are the following:
\\
\textbf{(1)} We identify the smallest separation between the means of latent clusters such that the clusters are statistically distinguishable under a kernel k-means objective in the sense of partial recovery, that is, error smaller than random guessing. Our result \textbf{matches the phase transition} for high-dimensional Gaussian clustering without kernels \parencite{banks2018information}. 
\\
\textbf{(2)} We analyse a common \textbf{semi-definite relaxation} of the kernel k-means objective and present sufficient conditions for partial recovery that \textbf{match, up to constant factors, the known spectral threshold} --- akin to the Kesten-Stigum threshold in the community detection under stochastic block model literature \parencite{baik2005phase,paul2007asymptotics}. 

Our main results obtained from the analysis of the two exponential-kernel clustering algorithms and the best known results for the same problem in a non-kernel setup are summarized in the table below. $k$ is the number of clusters, and $\alpha$ is the ratio of sample size to the data dimension which remains asymptotically finite in our setting.

The lower and the upper bounds are on the minimum separation of the clusters required to achieve partial recovery. The first column contains the bounds for the information-theoretic threshold. The second column contains the bounds corresponding to the computational class of poly-time algorithms. 
\begin{center}

{\renewcommand{\arraystretch}{1.7}
\begin{tabular}{  m{6em}  m{2.7cm} m{2.2cm}  } 
\toprule
& {\small\bf Information-theoretic limit} & {\small\bf Poly-time solvable} \\ 
\toprule
\small{Lower bounds} & $\sqrt{\frac{2(k-1) \log (k-1)}{\alpha}}$ & $ \qquad \frac{k-1}{\sqrt{\alpha}}$ \\ 
\hline
\small{Upper bounds (non-kernel)} & $2 \sqrt{\frac{k \log k}{\alpha}} + 2 \log k$ & $O(k-1 \vee \frac{k-1}{\sqrt{\alpha}})$ \\ 
\hline
{\color{blue}{\small{Upper bounds (kernel})}} & {\color{blue}$2 \sqrt{\frac{k \log k}{\alpha}} + 2 \log k$} & {\color{blue}$ \; \;  O(k \vee \frac{k}{\sqrt{\alpha}})$} \\ 
\bottomrule
\end{tabular}}
\end{center}

As noted in \citet{couillet2016kernel}, one requires a second-order analysis since first-order approximation of the kernel function does not suffice for the analysis in the high-dimensional setting. To this end, our proofs show that recent polynomial concentration inequalities \parencite{gotze2019concentration} can be useful for second-order analysis of kernel methods. 

\section{Background and Setting}
\textbf{Notation:} We denote denotes the set of natural numbers $\left \{1,2, \ldots ,k\right \}$ by $[k]$. For any matrix $A$, $\norm{A}_{F}$ refers to the Frobenius norm of the matrix. For any vector $x$, $\norm{x}$ refers to the Euclidean norm of the vector and $\mynorm{x}_1$ denotes the $l_1$ norm of the vector. $\mathbb{I}$ denotes the identity matrix. For any $A \in \mathbb{R}^{m \times m}$, $\norm{A}_{\infty \rightarrow 1}$ refers to the $\infty \rightarrow 1$ operator norm and defined as $ \sup \limits_{\substack{m}} (y^T A z)$. For any $n$ real numbers $\left \{ a_i \right \}_{i=1}^n$, $(a_1 \; \vee \; a_2 \; \ldots \vee \; a_n)$ refers to the maximum of the sequence: $\max \limits_i a_i$. For any random variable $x$, $\E x$ denotes the expectation of $x$.

\textbf{Setting:} Our setting is akin to the one used in \citet{banks2018information}, specifically due to the existence of a near-optimal phase transition for the information-theoretic threshold in the setting. We assume that the data is generated according to the following process. Let $k$ be the number of clusters. Then, $k$ points $\left \{ \mu'_1, \mu'_2, ..., \mu'_k \right \} \in \mathbb{R}^p$ are generated independently according to a normal distribution with mean $0$ and covariance $\frac{k}{k-1}\mathbb{I}_{p}$. The $k$ points are then centered by subtracting their sample mean from each entry and the resulting centered vectors are denoted by $\left \{ \mu_1, \mu_2, ..., \mu_k \right \}$. Let $m = \alpha p$ for some $\alpha > 0$ - a fixed parameter. Then for each $i \in [k]$, generate $\frac{m}{k}$ points from a normal distribution with mean $\sqrt{\frac{\rho}{p}}\mu_{i}$ and covariance matrix $\mathbb{I}$ for some fixed parameter $\rho > 0$. Observe that the parameter $\rho$ represents the separation between the clusters and can be treated as the parameter indicating the ``statistical ease'' with respect to the clustering problem or alternatively as the signal-to-noise ratio in this setting (we have an identity covariance matrix). We are interested in studying the large sample behaviour of clustering approaches in the high-dimensional setting: $m,p \rightarrow \infty$ and $\frac{m}{p} = O(1)$.

We denote the resulting set of $m$ points by $\left \{ x_1, x_2, ..., x_m\right \}$. Let $ \sigma :[m] \rightarrow [k]$ denote a balanced partition of $m$ points into $k$ clusters and let $\sigma_{*}$ denote the true partition: $\sigma_*(i) = s \textrm{ if } \mathbb{E} x_i = \sqrt{\frac{\rho}{p}} \mu_{s}$. Let $X^*$ denotes the ground truth clustering matrix defined as follows: 
\begin{equation*}
    X^*_{i,j}=
    \begin{cases}
      1 &\textrm{ if } \sigma_*(i) = \sigma_*(j)  \\
      0 &\textrm{  otherwise}.
    \end{cases}
\end{equation*}

For any arbitrary partition $\sigma$, define the $k \times k$ overlap matrix $\beta(\sigma,\sigma_{*})$, for each $s,t \in [k]$ as the fraction of all points assigned by $\sigma$ to the $s^{th}$ cluster \textbf{and} the fraction of all points assigned by $\sigma_{*}$ to the $t^{th}$ cluster,
\[ \beta(\sigma,\sigma_{*})_{s,t} = \frac{k \abs{\sigma^{-1}(s) \cap \sigma_*^{-1}(t) }}{m}.\]

Then $\norm{\beta(\sigma,\sigma_{*})}_{F}^2$ is a measure of similarity of the partition, $\sigma$ with the true partition, $\sigma_{*}$. Observe that if the partitions are completely uncorrelated, then $\beta$ is the constant matrix of $1/k$ and $\norm{\beta(\sigma,\sigma_{*})}_{F}^2 = 1$. If the partitions are identical up to permutations over the labels, then $\beta$ would be the permutation matrix and $\norm{\beta(\sigma,\sigma_{*})}_{F}^2 = k$.

Alternatively, for the sake of analytical tractability, we sometimes, use the quantity $err(\sigma,\sigma_{*})$ to denote the fraction of points misclassified by $\sigma$.
\[
err(\sigma,\sigma_{*}) = 1 - \frac{\max \limits_{\pi} \textrm{Trace}(\pi \beta(\sigma,\sigma_{*}))}{k}.
\]
where $\pi \beta$ refers to the matrix resulting from a permutation of $\beta$ over the cluster labels and the maximum is over all possible such permutations.

\textbf{Clustering with k-means:}
The clustering objective of the k-means procedure \parencite{pollard1981strong} is given as follows: 
\[\min \limits_{\sigma:[m] \rightarrow [k]} \sum \limits_{s = 1}^{k} \sum \limits_{i \in \sigma^{-1}(s)} \norm{x_i - \frac{k}{m}\sum \limits_{\sigma(j) = s} x_j}^2.\] 
This is equivalent to the following optimization problem: 
\[\max \limits_{\sigma:[m] \rightarrow [k]} \sum \limits_{s = 1}^{k} \sum \limits_{i,j \in \sigma^{-1}(s)} \inner{x_i}{x_j}.\] 
\textbf{Kernel k-means:}
For any partition $\sigma: [m] \rightarrow [k]$, Define, 
\[ \mathcal{F}(\sigma) = \sum \limits_{s = 1}^{k} \sum \limits_{\substack{i,j \in \sigma^{-1}(s)}} k(x_i,x_j). \]
Then, by the use of the kernel trick, we can formulate the kernel k-means clustering objective as follows: 
\begin{equation}
\label{prob:OptKmeans}
    \max \limits_{\sigma:[m] \rightarrow [k]} \mathcal{F}(\sigma)
\end{equation}
where $k:\mathbb{R}^p \times \mathbb{R}^p  \rightarrow \mathbb{R}$ is a kernel function. 
Minimizing this objective over all possible partitions is NP-hard \parencite{garey1982complexity,aloise2009np}. Several convex relaxations of the k-means procedure exist in literature. A well known Semi-definite program (SDP) relaxation of the kernel k-means \parencite{peng2007approximating} objective is given by:
\begin{align}
\max_{X} &\ \tr(KX) \label{eq:sdp1} \\
\text{s.t.,} &\  X\succeq 0, X\ge 0, \ X{\bf{1}}=\frac{m}{k} {\bf{1}},\ \text{diag}(X)={\bf{1}} \nonumber.
\end{align}
 where, $K$ refers to the kernel matrix for a given kernel function $k$: $K_{i,j} = k(x_i,x_j)$. This SDP can be solved in polynomial time. To obtain a partitioning $\hat{\sigma}$ of the data based on the optimal solution $\hat{X}$ of the SDP, a 7-approximate k-medians's procedure \parencite{charikar2002constant} is applied on the rows of the matrix $\hat{X}$ in a similar fashion as \citet{fei2018exponential}. We denote the partition inferred by this procedure as $\hat{\sigma}$. The details of the k-median procedure can be found in \citet[Algorithm 1]{fei2018exponential}. 
 
 \textbf{Choice of kernel function:} In our analysis, for simplicity, we considered the following dot product kernel: $k(x,y) = f(\inner{x}{y}) = \exp \left (\frac{\inner{x}{y}}{p} \right )$. However, the analysis can be extended to a larger class of dot-product kernels. 

\section{Our Results}

We denote the upper bound on the information-theoretic threshold derived from the analysis of the maximum likelihood estimator, in the non-kernel setting, as $\rho^{upper}_{linear\,NP}$ and the best known lower bound as $\rho^{lower}_{NP}$. Similarly, we denote the upper bound from the analysis of the NP-hard kernel k-means procedure as $\rho^{upper}_{kernel\,NP}$. 

The central question we address in this section is the following: Does any exponential-kernel clustering procedure asymptotically achieve information-theoretic optimality for high-dimensional clustering? 
\[ \rho^{upper}_{kernel\,NP} \stackrel{?}{=} \rho^{lower}_{NP}.\]
The maximum likelihood estimator in the non-kernel setting is already known to achieve near optimality in an information-theoretic sense. Therefore, our principal objective can be rephrased, in essence, as: Is the exponential kernel more(or less) informative than the linear kernel?
\[\rho^{upper}_{kernel\,NP} \stackrel{?}{\leq} \rho^{upper}_{linear\,NP}.\]

 As noted earlier, optimizing the kernel k-means objective is NP-hard. Therefore, for practical significance, it is also interesting to understand the information-theoretic optimality of kernels via kernelized, computationally efficient clustering algorithms. To this end, we analyze the kernel SDP given in (\ref{eq:sdp1}). It has been observed in several clustering and community detection problems that the parameter space of the signal to noise ratio $\rho$ where polynomial time algorithms are known to succeed is, typically, strictly above the information-theoretic threshold. To evaluate if there is any information loss, due to the use of kernels in polynomial time clustering algorithms, we compare the signal-to-noise ratio above which the kernel SDP can provably recover the true clustering, $\rho^{upper}_{kernel\,P}$, with the known spectral threshold, $\rho^{lower}_{P}$ below which no known poly-time algorithm is known to succeed. We also compare $\rho^{upper}_{kernel\,P}$
 to the upper bound($\rho^{upper}_{linear\,P}$) derived from the analysis of a similar semidefinite relaxation of the linear k-means algorithm. 
\[ \rho^{upper}_{kernel\,P} \stackrel{?}{=} \rho^{lower}_{P} \qquad ; \rho^{upper}_{kernel\,P} \stackrel{?}{=} \rho^{upper}_{linear\,P}.\]
We pictorially demonstrate all our results in Figure \ref{fig:stat-comp}. 
\setlength{\belowcaptionskip}{-12pt}
\begin{figure}
    \centering
    \includegraphics[width=0.49\textwidth]{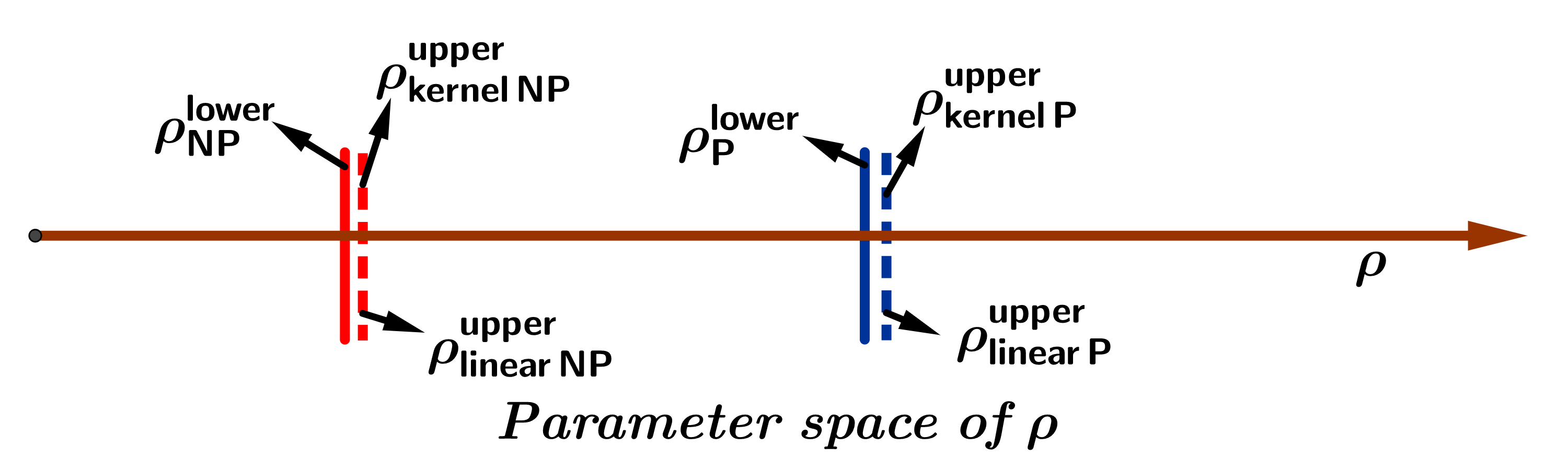}
    \caption{Our upper bounds are near optimal and exactly match those of the non-kernel setting.}
    \label{fig:stat-comp}
\end{figure}
\subsection{Optimality of kernel k-means}

The following lower and upper bounds, $\rho^{lower}_{NP}$ and $\rho^{upper}_{linear\,NP}$ respectively on the information-theoretic threshold appeared in \citet{banks2018information}.
\begin{align}
		\rho^{upper}_{linear\,NP} &= 2\sqrt{ \frac{k \log k}{\alpha} } + 2\log k. 
	\end{align}
	\begin{align}
	\rho^{lower}_{NP} &= \begin{cases}
		          \sqrt{1/\alpha} & k=2 \\
		          \sqrt{\frac{2(k-1)\log(k-1)}{\alpha}} & k\geq 3 \,.
		          \end{cases}
\end{align}

We analyze the performance of the kernel k-means clustering algorithm and give the following upper bounds on the information-theoretic threshold:
\begin{theorem}[\textbf{Optimality of kernel k-means}]
\label{thm:npHard}
Let,
\begin{align}
		\rho^{upper}_{kernel\,NP} &= 2\sqrt{ \frac{k \log k}{\alpha} } + 2\log k. 
	\end{align}
	Then if $\rho > \rho^{upper}_{kernel\,NP}$, then it is asymptotically possible to recover the true partition.
\end{theorem}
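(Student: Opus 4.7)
The plan is to prove, via comparison of objective values, that any balanced partition $\sigma$ with small overlap $\|\beta(\sigma,\sigma_*)\|_F^2$ satisfies $\mathcal{F}(\sigma)<\mathcal{F}(\sigma_*)$ with probability $1-o(1)$ under the hypothesis on $\rho$. Since the maximizer $\hat\sigma$ of the (NP-hard) kernel k-means objective satisfies $\mathcal{F}(\hat\sigma)\geq \mathcal{F}(\sigma_*)$ by definition, this comparison forces $\|\beta(\hat\sigma,\sigma_*)\|_F^2$ to be bounded away from $1$, which is precisely partial recovery in the sense of the overlap measure introduced in the setting. The technical engine is a second-order Taylor expansion of the exponential kernel combined with uniform polynomial concentration of the resulting Gaussian linear and quadratic forms.

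First I would decompose $x_i=\sqrt{\rho/p}\,\mu_{\sigma_*(i)}+z_i$ with $z_i\sim\mathcal{N}(0,\mathbb{I}_p)$ independent, and expand
\[
k(x_i,x_j)=1+\frac{\inner{x_i}{x_j}}{p}+\frac{1}{2}\left(\frac{\inner{x_i}{x_j}}{p}\right)^2+R_{ij},
\]
noting that $\inner{x_i}{x_j}/p=O(p^{-1/2})$ off-diagonal, so the remainder $R_{ij}$ is negligible after summation, while the diagonal terms contribute identically to every balanced $\sigma$ and can be discarded from the comparison. Substituting the $\mu/z$ decomposition into each power yields a deterministic signal in the means and several noise pieces that are Gaussian linear or quadratic forms in the $z_i$'s.

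Next, summing over co-clustered pairs and using $\E\inner{\mu_s}{\mu_t}=p\,\mathbb{1}\{s=t\}-\frac{p}{k-1}\mathbb{1}\{s\neq t\}$ together with the row-sum identity $\sum_t\beta(\sigma,\sigma_*)_{s,t}=1$, the first-order signal in $\E[\mathcal{F}(\sigma)]$ collapses to a constant multiple of $\rho(\|\beta(\sigma,\sigma_*)\|_F^2-1)$, and a positive but subleading signal of the same monotonicity appears at second order. Hence the deterministic gap $\E[\mathcal{F}(\sigma_*)-\mathcal{F}(\sigma)]$ is positive and proportional to the overlap deficit $k-\|\beta(\sigma,\sigma_*)\|_F^2$, favoring $\sigma_*$ whenever $\sigma$ is no better than random.

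The crux is a uniform fluctuation bound on $\mathcal{F}(\sigma)-\E\mathcal{F}(\sigma)$. I would write each noise piece as a bilinear form in cluster-indicator vectors so that its magnitude over all balanced $\sigma$ is controlled by the $\norm{\cdot}_{\infty\to 1}$ operator norm of an appropriate Gaussian or Gaussian-chaos matrix; single-matrix concentration comes from Hanson--Wright for the degree-two pieces and Gaussian concentration for the linear pieces, and the uniformization over the $\leq k^m$ balanced partitions is absorbed by a union bound. The hard part will be the second-order chaos term $\sum_{i,j\in\sigma^{-1}(s)}\inner{z_i}{z_j}^2/p^2$: ordinary Gaussian concentration is too weak to survive the $k^m$ union bound, which is why the polynomial (sub-exponential) concentration inequalities of \citet{gotze2019concentration} for degree-two chaos in Gaussian vectors are essential — they deliver a tail of the right sub-exponential form in $m/k$. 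Once uniform noise control is in place, matching it against the signal and plugging in $m=\alpha p$ shows that the hypothesis $\rho>2\sqrt{k\log k/\alpha}+2\log k$ makes the signal strictly dominate, completing the argument.
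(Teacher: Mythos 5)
Your proposal follows essentially the same route as the paper: compare $\mathcal{F}(\sigma_*)$ against the worst low-overlap balanced partition after a second-order Taylor expansion of the exponential kernel, concentrate the linear/quadratic pieces with chi-squared or Hanson--Wright style bounds, control the dominant quartic term via the polynomial concentration of \citet{gotze2019concentration}, and union-bound over the $\le k^m$ balanced partitions. One minor correction: the term $\sum_{i,j\in\sigma^{-1}(s)}\inner{x_i}{x_j}^2/p^2$ is a degree-\emph{four} polynomial in the Gaussian coordinates, not a degree-two chaos, so you must invoke the order-four case of the G\"otze--Sambale--Sinulis inequality — computing the tensor norms $\norm{\E f^{(s)}}_{\mathcal{J}}$ for all $s\le 4$ and all partitions $\mathcal{J}\in P_s$ — exactly as the paper does in the proofs of Lemmas~\ref{lemma:UBQ2} and~\ref{lemma:LBQ2sigma}.
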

Our results show that there is no loss of information incurred due to the use of the exponential kernel function in high-dimensional Gaussian clustering. This also matches the known information-theoretic lower bounds up to a factor of $\sqrt{2}$ when the number of clusters $k$ is large \parencite{banks2018information}.

\textbf{Overview of the analysis:} On a high level, the main line of argumentation of the proof is similar to the one in \citet{banks2018information}. However, note that their analysis only holds for the linear k-means algorithm and extending the analysis to a second order expansion of the exponential-kernel k-means objective is considerably more complex and requires a different set of mathematical tools and techniques (see Section \ref{sec:proofnpHard}). 

We consider the distribution of the objective of kernel k-means $\mathcal{F}(\sigma)$ as a function of the partition $\sigma$. 
We show that above the aforementioned threshold $\rho^{upper}_{kernel\,NP}$, with high probability, the distribution of $\mathcal{F}(\sigma_{*})$ is disjoint with and higher than that of the distribution of $\max \limits_{\substack{\sigma: \norm{\beta(\sigma,\sigma_{*})}^2_F  \\ \le 1+(k-1) \epsilon}} \mathcal{F}(\sigma)$, where $\epsilon >0$ is an arbitrarily small constant. 

Let $\tilde{\sigma}$ denote the optimal solution to (\ref{prob:OptKmeans}). Since, by definition, $\mathcal{F}(\tilde{\sigma}) \geq \mathcal{F}(\sigma_{*})$, it follows that the support of the distribution of $\mathcal{F}(\tilde{\sigma})$ is disjoint with and higher than that of the distribution of $\max \limits_{\substack{\sigma: \norm{\beta(\sigma,\sigma_{*})}^2_F  \\ \le 1+(k-1) \epsilon}} \mathcal{F}(\sigma)$.
\subsection{Optimality of kernel SDP}
 The following phase transition for spectral methods can be inferred from \citet{paul2007asymptotics,baik2005phase} and appeared in \citet{banks2018information}.
 \begin{align*}
   \rho^{lower}_{P} = \frac{k-1}{\sqrt{\alpha}}.
 \end{align*}
 We give the following upper bound on the threshold below which no known computationally efficient polynomial clustering approaches are guaranteed to succeed. 
 \begin{theorem}[\textbf{Optimality of kernel SDP}]
 \label{thm:conv}
 Let, 
 \begin{align}
     \rho^{upper}_{kernel\,P} =  c k \left ( 1 \vee \frac{1}{\sqrt{\alpha}} \right ).
 \end{align}
 for some fixed constant $c >0$. Then, if $\rho > \rho^{upper}_{kernel\,P}$ kernel SDP can asymptotically recover the true partition.
 \end{theorem}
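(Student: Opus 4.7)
The plan is to certify that the ground-truth matrix $X^*$ is the unique optimum of the SDP \eqref{eq:sdp1} with high probability whenever $\rho > c k (1 \vee 1/\sqrt{\alpha})$. Since the balanced-partition assumption makes $X^*$ feasible, it suffices to exhibit a dual certificate satisfying the KKT conditions at $X^*$. Following the template used for SDP community detection by \citet{fei2018exponential} and others, the dual introduces a scalar $\lambda$ (for the diagonal constraint), a vector $y$ (for the row-sum constraint), and matrices $Q \geq 0$, $Z \succeq 0$ with $Q \odot X^* = 0$ and $\inner{Z}{X^*} = 0$. One must produce $\lambda, y, Q$ such that the residual $S = \lambda I + \tfrac12(y\mathbf{1}^\top + \mathbf{1}y^\top) + Q - K$ is positive semidefinite on the $(m-k)$-dimensional subspace orthogonal to $\mathrm{range}(X^*)$.

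Because first-order approximation of the kernel is insufficient in the high-dimensional regime \parencite{couillet2016kernel}, I would Taylor-expand $K_{ij} = \exp(\inner{x_i}{x_j}/p)$ to second order, writing $K_{ij} = 1 + \inner{x_i}{x_j}/p + \inner{x_i}{x_j}^2/(2p^2) + R_{ij}$, with the remainder negligible in operator norm by crude bounds on $\norm{x_i}$. Conditioning on the centers $\mu_1,\ldots,\mu_k$, I would decompose $K = M + E$, where $M$ is a block-constant signal aligned with $X^*$ and $E$ is the zero-mean fluctuation. The signal $M$ exhibits a gap between the in-cluster and between-cluster block values scaling as $\tfrac{m}{k}\bigl((\rho/p)\norm{\mu_s - \mu_t}^2 + (\rho^2/p^2)\,\text{quadratic cross terms}\bigr)$, which by standard concentration of $\norm{\mu_s - \mu_t}^2$ and $\inner{\mu_s}{\mu_t}$ is of order $\tfrac{m}{k}(\rho + \rho^2/\alpha)$ on the cluster-constant subspace.

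With $\lambda$ and $y$ chosen to cancel the rank-one and block-constant components of $M$ on $\mathrm{range}(X^*)$, complementary slackness forces $Q = 0$ on the diagonal blocks of $X^*$, and the off-diagonal entries of $Q$ are determined so that $S$ vanishes on $\mathrm{range}(X^*)$. The PSDness of $S$ on the orthogonal complement then reduces to the inequality $\mathrm{gap}(M) \gtrsim \norm{E}_{op}$ restricted to that subspace, so the remainder of the argument boils down to comparing the signal gap from the previous step with the operator norm of the fluctuation matrix.

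The main obstacle is bounding $\norm{E}_{op}$: its entries are polynomials of degree two in jointly Gaussian data with intricate dependencies, so matrix Bernstein is inadequate. I plan to invoke the polynomial Hanson--Wright-type concentration inequalities of \citet{gotze2019concentration} separately on the linear piece $\inner{x_i}{x_j}/p$ and the quadratic piece $\inner{x_i}{x_j}^2/(2p^2)$ of the expansion, combined with an $\epsilon$-net argument on the unit sphere of the relevant subspace, to obtain $\norm{E}_{op} = O(\sqrt{m}/p + 1)$ with high probability. Balancing this against the signal gap yields $\rho \gtrsim k/\sqrt{\alpha}$ from the random fluctuations and $\rho \gtrsim k$ from the deterministic bias coming from the higher-order Taylor remainder, producing the claimed threshold $\rho > c k(1 \vee 1/\sqrt{\alpha})$.
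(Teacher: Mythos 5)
Your proposal takes a genuinely different route from the paper, but it aims at a strictly stronger conclusion than the theorem actually asserts, and at the claimed threshold that stronger conclusion does not hold. The paper's theorem is a \emph{partial recovery} statement: the guarantee is that the fraction of misclassified points satisfies $err(\hat\sigma,\sigma_*) < 1 - 1/k$, i.e.\ better than random guessing. The proof never claims $\hat X = X^*$. Instead it (i) defines a population matrix $\Ktilde$, (ii) uses the optimality of $\hat X$ together with an entrywise lower bound on the in-cluster vs.\ out-of-cluster gap of $\Ktilde$ to show $\mynorm{\hat X - X^*}_1 \lesssim \inner{K-\Ktilde}{X^*-\hat X}/\phi$ where $\phi = \Theta(\rho/p)$, (iii) controls the right-hand side by $K_G \mynorm{K-\Ktilde}_{\infty\to 1}$ via Grothendieck's inequality, (iv) bounds $\mynorm{K-\Ktilde}_{\infty\to 1}$ by combining Bernstein for the first-order Taylor piece with the polynomial concentration of \citet{gotze2019concentration} for the second-order piece (a union bound over $\pm1$ sign vectors, not an $\epsilon$-net), and (v) converts the $\ell_1$ bound to a misclassification bound through the $7$-approximate $k$-median rounding of \citet{fei2018exponential}. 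The threshold $\rho \gtrsim k(1\vee 1/\sqrt\alpha)$ is exactly what makes the resulting bound sit just below $1-1/k$.

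Your dual-certificate plan certifies that $X^*$ is the unique SDP optimum, i.e.\ exact recovery. That requires $\mathrm{gap}(M) \gtrsim \mynorm{E}_{op}$ in a hard sense and, as in the adjacency-matrix SDP literature, typically forces $\rho$ well above the weak-recovery regime (often by logarithmic factors in $m$ for the entrywise complementary-slackness conditions to hold). At $\rho = \Theta(k(1\vee 1/\sqrt\alpha))$ the optimizer $\hat X$ is not a partition matrix, the KKT system you describe has no feasible certificate, and the step where you set $Q=0$ on the diagonal blocks and "determine the off-diagonal entries of $Q$ so that $S$ vanishes on $\mathrm{range}(X^*)$" has no reason to yield $Q\ge 0$. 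So the argument does not close. A second, smaller issue: the $\infty\to 1$ norm is the natural object here because the SDP feasible set is bounded in a way Grothendieck exploits; moving to the operator norm forces you to pay for the eigenvector delocalization and the $\epsilon$-net constants, and the rate you quote for $\mynorm{E}_{op}$ is not what the polynomial Hanson–Wright machinery in \citet{gotze2019concentration} gives for the degree-four quadratic piece without the careful tensor-norm bookkeeping the paper carries out. To match the theorem you need the weaker $\ell_1$-proximity plus rounding pipeline, not a uniqueness certificate.
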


 Our results match the spectral threshold up to constant factors of approximation for large $k$. Our result also matches with the known upper bound ($\rho^{upper}_{linear\,P}$) for partial recovery via the linear k-means clustering procedure \parencite{giraud2018partial} up to a factor of $\frac{(k-1)}{k}$. In agreement with the established conjecture, it is also evident from our results that the threshold at which a computationally efficient kernel clustering procedure can be guaranteed to succeed is strictly above the information-theoretic threshold. They differ by an order of $\sqrt{\frac{k}{\log k}}$.

\textbf{Overview of the analysis:} 
Denote $\kappa = e^{\tau} e^{\frac{C_0 \log p}{\sqrt{p}}}$ We define the matrix $\Ktilde$ - which depends on the population parameters of the data distribution - as follows: 
\[ \Ktilde(i,j) = f(0) + \begin{cases} \frac{f^{'}(0) \rho \inner{\mu_{i}}{\mu_{j}}}{p^2} + \frac{ \kappa \rho^2 \inner{\mu_i}{\mu_j}^2}{p^4} + \frac{\kappa}{p} & \textrm{if } i \neq j \\
\frac{f^{'}(0)(p^2 + \rho \norm{\mu_{i}}^2)}{p^2} + \frac{ \kappa (p^2 + \rho \norm{\mu_{i}}^2)^2}{p^4} + \frac{\kappa}{p}  &\textrm{otherwise}.
\end{cases}\]

We show that the kernel matrix $K$ concentrates around $\Ktilde$ in the $\infty \rightarrow 1$ operator norm.

Let $\hat{X}$ denote the optimal solution to (\ref{eq:sdp1}). Then, using Grothendieck's inequality \parencite{grothendieck1956resume}, we derive an upper bound on $\mynorm{\hat{X} - X^*}_{1}$ in terms of $\mynorm{K - \Ktilde}_{\infty \rightarrow 1}$. Since $\hat{X}$ is not a partition matrix, we need a procedure that can infer a partition from $\hat{X}$. 
We use the 7-approximate k-median's procedure \parencite{fei2018exponential} on the rows of $\hat{X}$ to infer a partition $\hat{\sigma}$. Then \citet{fei2018exponential} showed that the fraction of mis-classified vertices by the partition $\hat{\sigma}$ denoted by $err(\sigma_{*}, \hat{\sigma})$ can be upper bounded by a constant factor of $\frac{\norm{\hat{X} - X^*}_{1}}{\norm{X^*}_1}$.

Thereby, we show that for $\rho > \rho^{upper}_{kernel\,P}$, the fraction of misclassified points $err(\hat{\sigma},\sigma_*) < (1 - 1/k)$, which is the condition required for partial recovery.
 
SDP's such as the one defined in (\ref{eq:sdp1}) have been analyzed using the Grothendieck's inequality approach in community detection literature for stochastic block models \parencite{guedon2016community}. However, the main technical challenges of our analysis lie in the choice of appropriate $\Ktilde$ and showing that the matrix $K$ concentrates around $\Ktilde$ in the $\infty \rightarrow 1$ operator norm.
Establishing the concentration results for $\mynorm{K - \Ktilde}_{\infty \rightarrow 1}$ is considerably harder compared to the analysis of similar quantities based on the adjacency matrix of a network generated from a stochastic block model. Unlike in the case of adjacency matrices, the entries of the kernel matrix encode dependencies between the data points and hence most classical concentration tools from random matrix theory fall short in the analysis of kernel matrices. Also the RKHS corresponding to the exponential class of kernel functions is infinite dimensional and hence concentration inequalities that depend on the dimension of the feature space are also not applicable for analyzing functions of kernel entries. 

To this end, we demonstrate that the polynomial concentration inequalities for exponentially decaying random variables in \citet{gotze2019concentration} can be used to analyze an entry-wise second order approximation of the kernel matrix. 
We make some further remarks about our proof, and possibilities for improving the result.

 \textbf{Remark 1:} Finer upper bounds on $\inner{K - \Ktilde}{X^* - \hat{X}}$ can be obtained by applying an analysis similar to \citet{fei2018exponential} to obtain better error rates. However, our bounds on $\rho$, essentially remain the same - which is the main emphasis of this paper. 

\textbf{Remark 2:} The choice of $\Ktilde$ can further be refined in the second order terms without changing the results of our analysis. 

\textbf{Remark 3:} One could, alternatively, infer a partition by applying the k-means procedure on the rows of the eigenvectors of $\hat{X}$. Using Davis-Khan's theorem \parencite{yu2014useful}, one may similarly upper bound the fraction of misclassified nodes by a constant factor of $\frac{\mynorm{\hat{X} - X^*}_{1}}{\mynorm{X^*}_1}$. This approach gives a slightly worse approximation constant.

\section{Proofs}

\subsection{Proof of Theorem \ref{thm:npHard}}
\label{sec:proofnpHard}

\textbf{Overview of the technical steps:}
Let $\epsilon >0$ be an arbitrarily small constant. The two main ingredients required to establish conditions of recovery are as follows:
\begin{center}
   \begin{itemize}
    \item Upper tail bounds for $\max \limits_{\substack{\sigma: \norm{\beta(\sigma,\sigma_{*})}^2_F  \\ \le 1+(k-1) \epsilon}} \mathcal{F}(\sigma)$.
      \item  Lower tail estimates for the distribution of $\mathcal{F}(\sigma_{*})$.
\end{itemize} 
\end{center}

To obtain these bounds, for any fixed $\sigma$, we first apply the Taylor's theorem - with mean value form of the reminder - to obtain a 2nd order polynomial approximation of each of the kernel entry and obtain a tight lower bound $\mathcal{F}_l(\sigma)$ and an upper bound $\mathcal{F}_{u}(\sigma)$ on $\mathcal{F}(\sigma)$.

For any fixed $\sigma$ such that $\norm{\beta(\sigma,\sigma_{*})}^2_F  \le 1+(k-1) \epsilon$ we compute $\mathcal{F}_{u}(\sigma)$ which is a 4th order polynomial of normally distributed random variables and carefully upper bound all the terms of this polynomial using various known concentration results in literature. By an union bound over all such partitions, we obtain upper tail bounds for $\max \limits_{\substack{\sigma: \norm{\beta(\sigma,\sigma_{*})}^2_F \le 1+(k-1) \epsilon}} \mathcal{F}_{u}(\sigma)$. Similarly, we compute $\mathcal{F}_{l}(\sigma_{*})$ which is a 4th order polynomial of normally distributed random variables and obtain lower bounds for all the involved terms.

Therefore, we obtain:
\begin{align*}
   \mathcal{F}(\sigma_{*}) &\geq \mathcal{F}_{l}(\sigma_{*}) > \omega_{l} \quad \textrm{ and }
   \max \limits_{\substack{\sigma: \norm{\beta(\sigma,\sigma_{*})}^2_F  \\ \le 1+(k-1) \epsilon}} \mathcal{F}(\sigma) \leq \max \limits_{\substack{\sigma: \norm{\beta(\sigma,\sigma_{*})}^2_F  \\ \le 1+(k-1) \epsilon}} \mathcal{F}_{u}(\sigma) \leq  \omega_{u}.
\end{align*}

By comparing $\omega_{u}$ and $\omega_{l}$, we obtain the conditions on $\rho$ under which $\omega_{l} \geq \omega_{u}$.

\textbf{Notation:}
We use the following notation for some recurring terms for improved readability.

For any $i,j \in [m]$, set $\tau = \frac{\mathbb{E}\norm{x_i}^2}{p} = 1 + O(1/p)$. For any $\sigma$, we use the following notation:
\begin{align*}
    Q_{1 \sigma} = \frac{k}{m} \slim_{s \in [k]} \slim_{i,j \in \sigma^{-1}(s)} \frac{\inner{x_i}{x_j}}{p} \; \; ; &Q_{2 \sigma} = \frac{k}{m} \slim_{s \in [k]} \slim_{i,j \in \sigma^{-1}(s)} \frac{\inner{x_i}{x_j}^2}{p^2} \\
    \qquad \qquad  Q_{3} \; \; = \frac{k (f'(\tau) - 1)}{m} \slim_{i \in [m]} \big(\frac{\mynorm{x_i}^2}{p} - \tau \big ) \; \; ; &Q_{4} = \frac{k (e^{\tau} - 1)}{2 m} \slim_{i \in [m]} \big (\frac{\mynorm{x_i}^2}{p} - \tau \big)^2; \; \; Q_{5} = \slim_{i \in [m]} \frac{k \tau \mynorm{x_i}^2}{p}; 
\end{align*}
and set,
  $ \gamma_{\max} = \exp \left ( \frac{C \log p}{\sqrt{p}} \right ); \gamma_{\min} = \exp \left ( - \frac{C \log p}{\sqrt{p}} \right )$
for some constant $C > 0$.

All the lemmas we state below hold with high probability ($1 - \Omega(\frac{1}{p})$) and the proofs of all the lemmas are provided in the supplementary.

\textbf{Outline of the proof:}

Recall that for any partition $\sigma:[m] \rightarrow [k]$, $\mathcal{F}(\sigma) = \frac{k}{m} \sum \limits_{s \in [k]} \sum \limits_{i,j \in \sigma^{-1}(s)} k(x_i,x_j)$.
\begin{lemma}[\textbf{Upper and lower bounds for inner products}]
\label{lemma:MaxDpMaxNorm}
\begin{align*}
     \max \limits_{i, j} \frac{\abs{\inner{x_i}{x_j}}}{p} =
    \tau \mathbf{1}_{i=j} +  O\left(\frac{\log p}{\sqrt{p}}\right)  \; \; ; 
    \min \limits_{i, j} \frac{\inner{x_i}{x_j}}{p} = \tau \mathbf{1}_{i=j}  + \Omega \left(- \frac{\log p}{\sqrt{p}}\right) .
\end{align*}
\end{lemma}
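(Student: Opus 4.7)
The plan is to decompose $x_i = \sqrt{\rho/p}\,\mu_{\sigma_*(i)} + z_i$ with $z_i \sim N(0,\mathbb{I}_p)$ independent across $i$ and of the cluster means, and then to control each of the resulting terms via standard Gaussian and chi-square concentration, closed off by a union bound over the $O(p^2)$ index pairs. A preliminary step handles the randomness in the means themselves: since the $k$ centered cluster means are jointly Gaussian with covariance proportional to $\mathbb{I}_p$, chi-square tails give $\|\mu_s\|^2 = O(p)$, and conditioning on one mean with Gaussian tails for the orthogonal part gives $|\langle \mu_s,\mu_t\rangle| = O(\sqrt{p\log p})$ for $s\neq t$. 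A union bound over the (constant) cluster indices produces a high-probability event $\mathcal{A}$ on which all $\mu_s$ are well-controlled, after which I work conditionally on the means.

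For the diagonal bound, expand $\|x_i\|^2 = \|z_i\|^2 + 2\sqrt{\rho/p}\,\langle\mu_{\sigma_*(i)}, z_i\rangle + (\rho/p)\|\mu_{\sigma_*(i)}\|^2$. The first term is chi-square with $p$ degrees of freedom, so $|\|z_i\|^2 - p| \leq C\sqrt{p\log p}$ with probability at least $1 - p^{-10}$; the second, conditionally on $\mu$, is a centered Gaussian of variance $O(1)$, hence of magnitude $O(\sqrt{\log p})$; the third is $O(1)$ on $\mathcal{A}$. Dividing by $p$ gives $\|x_i\|^2/p = \tau + O(\log p/\sqrt{p})$ uniformly, yielding both the $i=j$ case of the maximum bound and the $i=j$ case of the minimum bound.

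For the off-diagonal case $i\neq j$, expand
\[\langle x_i, x_j\rangle = \langle z_i, z_j\rangle + \sqrt{\rho/p}\,\bigl(\langle\mu_{\sigma_*(i)}, z_j\rangle + \langle\mu_{\sigma_*(j)}, z_i\rangle\bigr) + (\rho/p)\,\langle\mu_{\sigma_*(i)}, \mu_{\sigma_*(j)}\rangle.\]
The dominant term $\langle z_i, z_j\rangle$ is a sum of $p$ i.i.d.\ products of standard Gaussians and is sub-exponential, so Bernstein's inequality (or Hanson--Wright) yields $|\langle z_i, z_j\rangle| \leq C\sqrt{p\log p}$ with probability at least $1 - p^{-10}$. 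The two Gaussian cross terms are, conditionally on the means, centered with variance $\rho\|\mu\|^2/p = O(1)$ and hence of magnitude $O(\sqrt{\log p})$; the deterministic piece is $O(\sqrt{\log p / p})$ on $\mathcal{A}$. Dividing by $p$, the dominant contribution is $O(\log p / \sqrt{p})$, which supplies both the upper bound on $|\langle x_i, x_j\rangle|/p$ for $i\neq j$ and the matching lower bound needed for the minimum claim.

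Finally, selecting the constants so each tail event holds with probability $1 - p^{-10}$ and union-bounding over the $m = O(p)$ diagonal entries and $\binom{m}{2} = O(p^2)$ off-diagonal pairs gives a total failure probability of $O(p^{-8})$, comfortably inside the $1 - \Omega(1/p)$ high-probability regime declared for the lemmas. The proof is largely bookkeeping; the only genuine subtleties are (i) ensuring the scale is exactly $\log p/\sqrt{p}$, which forces the use of sub-exponential (rather than sub-Gaussian) tail bounds for $\langle z_i, z_j\rangle$ so that the polynomial-in-$p$ decay survives the $O(p^2)$ union bound, and (ii) carefully disentangling the joint randomness of the cluster means $\mu_s$ and the noise $z_i$ by conditioning on $\mathcal{A}$ throughout.
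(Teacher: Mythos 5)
Your proposal is correct and follows essentially the same route as the paper: control the random cluster means on a high-probability event, reduce $\inner{x_i}{x_j}$ to sums of independent sub-exponential summands, apply Bernstein and chi-square tail bounds, and close with a union bound over the $O(p^2)$ index pairs. The paper absorbs the mean shift into the sub-Gaussian norm of $x_{id}$ and applies Bernstein directly to $\inner{x_i}{x_j}-\E\inner{x_i}{x_j}$, whereas you decompose $x_i = \sqrt{\rho/p}\,\mu_{\sigma_*(i)}+z_i$ and bound the three resulting terms separately; this is a cosmetic difference, not a change in method.
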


By a second order Taylor expansion of each $k(x_i,x_j)$ where $i \neq j$ around $0$ and expanding each $k(x_i,x_i)$ around $\tau $, 

and using Lemma \ref{lemma:MaxDpMaxNorm}, for any $\sigma$, we can write:
\begin{equation*}
 \mathcal{F}(\sigma) \leq \mathcal{F}_{u}(\sigma) = Q_{1 \sigma} + \gamMax Q_{2 \sigma} +  Q_{3} + \gamma_{\max}(Q_{4} -  Q_{5}) +  k f(\tau) + (m-k)f(0) + \frac{k \gamMax \tau^2}{2} .
\end{equation*}
and 
\begin{equation*}
 \mathcal{F}(\sigma) \geq \mathcal{F}_{l}(\sigma) = Q_{1 \sigma} + \gamMin Q_{2 \sigma} +  Q_{3} + \gamMin(Q_{4} -  Q_{5}) +  k f(\tau) + (m-k)f(0) + \frac{k \gamMin \tau^2}{2} .
\end{equation*}
\textbf{Upper bounds for $\max_{\substack{\sigma: \norm{\beta(\sigma,\sigma_{*})}^2_F  \\ \le 1+(k-1) \epsilon}} \mathcal{F}(\sigma)$:}
We derive upper bounds for all the terms that constitute $\mathcal{F}_{u}(\sigma)$, which simultaneously hold for all $\sigma: \norm{\beta(\sigma,\sigma_{*})}^2_F  \le 1+(k-1) \epsilon$.

\begin{lemma}[\textbf{Upper bounds - $Q_{1 \sigma}$, $Q_{5}$}]
\label{lemma:UBQ1Q5}
\begin{equation*}
	\max_{\substack{\sigma: \norm{\beta(\sigma,\sigma_{*})}^2_F  \\ \le 1+(k-1) \epsilon}} Q_{1\sigma} \leq k +  \alpha\rho\epsilon +  2(1+\epsilon) \alpha \log k + 2 \sqrt{ (1+\epsilon) \left(k + 2 \alpha\rho \epsilon \right)\alpha \log k }
	 + O(\sqrt{\log p/p} ).
\end{equation*}
\begin{equation*}
    \max_{\substack{\sigma: \norm{\beta(\sigma,\sigma_{*})}^2_F  \\ \le 1+(k-1) \epsilon}}  \gamma_{\max}Q_{5}  \leq -\frac{k\gamma_{\max}\tau}{mp} \left ( mp + p \alpha \rho - 2 \sqrt{\left ( mp + 2p\alpha\rho \right ) \log p} \right ).
\end{equation*}
\end{lemma}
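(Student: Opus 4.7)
The plan is to use the generative decomposition $x_i = \sqrt{\rho/p}\,\mu_{\sigma_*(i)} + z_i$ with $z_i \sim N(0, I_p)$ to split each quantity into signal, cross, and pure-noise parts, and then to combine Gaussian quadratic-form concentration with a union bound over the overlap-constrained set of partitions. Since $Q_5$ does not depend on $\sigma$, only the $Q_{1\sigma}$ bound requires the union bound.

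For $Q_{1\sigma}$, I would first rewrite
\[
Q_{1\sigma} \;=\; \frac{k}{mp}\sum_{s \in [k]} \Big\|\sum_{i \in \sigma^{-1}(s)} x_i\Big\|^2
\]
and substitute $\sum_{i \in \sigma^{-1}(s)} x_i = \sqrt{\rho/p}\,M_s + Z_s$ with $M_s = \sum_t n_{s,t}(\sigma)\,\mu_t$, $n_{s,t}(\sigma) = \abs{\sigma^{-1}(s) \cap \sigma_*^{-1}(t)}$, and $Z_s = \sum_{i \in \sigma^{-1}(s)} z_i$. Expanding the square decomposes $Q_{1\sigma}$ into a signal piece $\tfrac{k\rho}{mp^2}\sum_s \|M_s\|^2$, a bilinear cross piece $\tfrac{2k}{mp}\sqrt{\rho/p}\sum_s \langle M_s, Z_s\rangle$, and a chi-squared noise piece $\tfrac{k}{mp}\sum_s \|Z_s\|^2$. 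Since the $\mu_t$ are the centered versions of i.i.d.\ $N(0,\tfrac{k}{k-1}I_p)$ vectors, a direct calculation gives $\E\|\mu_t\|^2 = p$ and $\E\langle \mu_t, \mu_{t'}\rangle = -p/(k-1)$ for $t \neq t'$; this collapses the expected signal piece to $\tfrac{\alpha\rho}{k-1}(\|\beta(\sigma,\sigma_*)\|_F^2 - 1) \leq \alpha\rho\epsilon$ under the overlap constraint, and Hanson--Wright-type concentration absorbs the $\mu_t$-driven fluctuations into the $O(\sqrt{\log p/p})$ residual.

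The main obstacle is the uniform control of the noise and cross pieces over $\{\sigma : \|\beta(\sigma,\sigma_*)\|_F^2 \leq 1+(k-1)\epsilon\}$. Conditional on the $\mu_t$'s, the noise piece has mean $k$ and chi-squared tails, while the cross piece is centered Gaussian with variance at most $\tfrac{4\alpha\rho\epsilon}{p}$. A Laurent--Massart chi-squared tail bound at level $t$ gives a deviation of $2\sqrt{kt/p} + 2t/p$ for the noise, and a Stirling-type count shows that the number of balanced partitions compatible with the overlap constraint is at most $\exp((1+\epsilon)\alpha p \log k)$. Choosing $t = (1+\epsilon)\alpha p \log k$ makes the union bound feasible and produces the sub-exponential contribution $2(1+\epsilon)\alpha\log k$ together with a sub-Gaussian contribution $2\sqrt{(1+\epsilon)k\alpha\log k}$; combining the latter with the cross-piece deviation via $\sqrt{a}+\sqrt{b}\geq \sqrt{a+b}$ repackages the two square-root terms into $2\sqrt{(1+\epsilon)(k+2\alpha\rho\epsilon)\alpha\log k}$. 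Adding the signal $\alpha\rho\epsilon$ and the noise mean $k$ then yields the stated bound.

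For $Q_5$, the sum $\sum_i \|x_i\|^2$ does not depend on $\sigma$ and is a sum of $m$ independent non-central chi-squared random variables with total mean $m(p+\rho) = mp + p\alpha\rho$ and sub-exponential tails, so a Laurent--Massart lower tail bound gives $\sum_i \|x_i\|^2 \geq mp + p\alpha\rho - 2\sqrt{(mp + 2p\alpha\rho)\log p}$ with probability $1 - O(1/p)$, and rescaling by $k\tau\gamma_{\max}/(mp)$ produces the claimed expression. The hardest part of the overall argument is therefore the entropy/concentration balance in the $Q_{1\sigma}$ union bound: the count of overlap-constrained balanced partitions must be sharp enough to preserve the leading $(1+\epsilon)$ prefactor, and the $\sigma$-dependent variance proxy coming from the $\mu_t$'s must be controlled by conditioning, both of which I expect to handle by adapting the overlap-profile counting argument already used for the linear k-means setting in \citet{banks2018information}.
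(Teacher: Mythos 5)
Your plan is the right one and, for the $Q_5$ bound, is exactly what the paper does: $\sum_i \lVert x_i\rVert^2$ is a non-central $\chi^2_{mp}(p\alpha\rho)$, a single application of the Birg\'e lower-tail bound (Proposition~\ref{thm:chisquared}) at level $t=\log p$ suffices, no union bound needed. Your recipe for $Q_{1\sigma}$ also identifies all the right ingredients: rewriting $Q_{1\sigma}$ as $\tfrac{k}{mp}\sum_s\lVert\sum_{\sigma(i)=s}x_i\rVert^2$, computing $\E\langle\mu_t,\mu_{t'}\rangle$ for the centred Gaussians (your $p$ and $-p/(k-1)$ values are correct), reducing the expected signal piece to $\tfrac{\alpha\rho}{k-1}(\lVert\beta\rVert_F^2-1)\le\alpha\rho\epsilon$, and choosing the union-bound level $t=(1+\epsilon)m\log k$.

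However, the last step of your $Q_{1\sigma}$ argument has a gap. After splitting into noise and cross pieces and bounding each separately, the two deviation terms add to
\[
2\sqrt{(1+\epsilon)k\,\alpha\log k}\;+\;2\sqrt{2(1+\epsilon)\alpha\rho\epsilon\,\alpha\log k},
\]
and by $\sqrt{a}+\sqrt{b}\ge\sqrt{a+b}$ this sum is \emph{larger} than the target $2\sqrt{(1+\epsilon)(k+2\alpha\rho\epsilon)\alpha\log k}$, not smaller. You have quoted the inequality in the wrong direction: it does not let you compress the sum of square roots into a single square root, so your decomposition yields a strictly weaker (larger) upper bound than the lemma claims. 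The fix is not to split at all: conditional on the $\mu_t$'s, $pQ_{1\sigma}=\tfrac{k}{m}\sum_s\lVert\sqrt{\rho/p}\,M_s+Z_s\rVert^2$ is itself a non-central $\chi^2_{pk}$ with non-centrality $\tfrac{k\rho}{mp}\sum_s\lVert M_s\rVert^2=\tfrac{p\alpha\rho}{k-1}(\lVert\beta\rVert_F^2-1)+O(\sqrt{p\log p})$, and the upper tail in Proposition~\ref{thm:chisquared} already carries the combined term $2\sqrt{(d+2\mu^2)t}$, i.e.\ $2\sqrt{(k+2\alpha\rho\epsilon)(1+\epsilon)\alpha\log k}$ after dividing by $p$. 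That is precisely what the paper does; once you keep noise and cross together inside the non-central chi-square, your argument matches the paper's proof step for step. (A further minor point: splitting would also force you to divide the $\exp(-t)$ tail budget between the two events before taking the union over $\sigma$, costing an extra $\log 2$; the unified chi-square approach sidesteps this as well.)
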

\begin{proof}[Proof (sketch)]
The terms $Q_{1 \sigma}$ and $Q_{5} $ can be expressed as sums of independent non-central chi-squared random variables and applying the known upper tail bounds for such sums, followed by a union bound over all $\sigma: \norm{\beta(\sigma,\sigma_{*})}^2_F  \le 1+(k-1) \epsilon$, we have the results from Lemma \ref{lemma:UBQ1Q5}. 
\end{proof}

\begin{lemma}[\textbf{Upper bound for $Q_{2 \sigma}$}]
\label{lemma:UBQ2}
\begin{equation*}
    \max_{\substack{\sigma: \norm{\beta(\sigma,\sigma_{*})}^2_F  \\ \le 1+(k-1) \epsilon}} \gamma_{\max}  Q_{2\sigma} \leq \gamma_{\max} \left (1 + \frac{1}{k} + O(\frac{1}{p}) \right ) + C_{2} \gamma_{\max} O \left ( \sqrt{\frac{\alpha}{p}} \vee  \alpha \sqrt{\frac{\alpha}{p}} \vee \sqrt{\frac{1}{\alpha p}} \right ).
\end{equation*}
for some constant $C_{2} > 0$.
\end{lemma}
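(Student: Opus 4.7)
The plan is to split $Q_{2\sigma}$ into a partition-independent contribution and a partition-dependent fluctuation, then control each using concentration of polynomials in Gaussian variables. Writing $x_i = \sqrt{\rho/p}\,\mu_{\sigma_*(i)} + z_i$ with $z_i \sim \mathcal{N}(0,\mathbb{I}_p)$, I would first separate the $i=j$ and $i\ne j$ contributions as
$Q_{2\sigma} = \frac{k}{m}\sum_{i\in[m]} \frac{\|x_i\|^4}{p^2} + \frac{k}{m}\sum_{s\in[k]}\sum_{\substack{i\ne j \\ i,j\in\sigma^{-1}(s)}} \frac{\langle x_i,x_j\rangle^2}{p^2}$.
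The first sum is entirely independent of $\sigma$ (each $i\in[m]$ appears exactly once in any partition). Using Lemma~\ref{lemma:MaxDpMaxNorm} to replace each $\|x_i\|^2/p$ by $\tau + O(\log p/\sqrt{p})$ and a standard chi-squared tail bound for the resulting average over $m=\alpha p$ terms contributes the $\gamma_{\max}(1+O(1/p))$ piece of the claimed bound together with a fluctuation of order $\sqrt{1/(\alpha p)}$.

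For the off-diagonal sum, I would condition on the means $\{\mu_s\}_{s\in[k]}$ and expand $\langle x_i,x_j\rangle^2$ into its contributions from noise-noise, noise-signal, and signal-signal cross-products. The conditional mean is dominated by $\mathbb{E}[\langle z_i,z_j\rangle^2] = p$, and since every balanced partition contains exactly $m^2/k-m$ within-cluster off-diagonal ordered pairs, the leading $1/k$ piece of the bound is identical for every admissible $\sigma$ (after multiplying by the $k/m$ normalisation). The residual fluctuation is a degree-$4$ polynomial in the i.i.d.\ entries of $Z = (z_1,\dots,z_m) \in \mathbb{R}^{p\times m}$, which after rearrangement can be written as $\mathrm{tr}(B_\sigma Z^\top Z B_\sigma Z^\top Z)$ plus lower-order chaoses produced by the noise-signal cross terms, where $B_\sigma$ is the $m\times m$ matrix with $(B_\sigma)_{ij} = \mathbf{1}\{\sigma(i)=\sigma(j)\}\mathbf{1}\{i\ne j\}$. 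Applying the polynomial concentration inequality of \citet{gotze2019concentration} to this Gaussian chaos gives sub-Weibull tails whose coefficient norms (in Hilbert-Schmidt and operator sense) are controlled by $\|B_\sigma\|_{\mathrm{op}}$ and $\|B_\sigma\|_F$, both $O(m/k)$ uniformly over balanced $\sigma$.

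The main obstacle is combining these tail bounds with a union bound over all balanced partitions satisfying $\|\beta(\sigma,\sigma_*)\|_F^2 \le 1+(k-1)\epsilon$, a set of cardinality at most $e^{O(m\log k)} = e^{O(\alpha p \log k)}$. To absorb this factor and still produce the small deviation $C_2\gamma_{\max} O(\sqrt{\alpha/p}\vee \alpha\sqrt{\alpha/p}\vee \sqrt{1/(\alpha p)})$ from the claim, one has to read off the correct operator-norm and Hilbert-Schmidt-norm pieces from the G\"otze-Sambale-Sinulis bound so that the fourth-order part of the chaos contributes at worst an $\alpha\sqrt{\alpha/p}$ deviation (it comes from the signal-noise cross terms, which carry a $\sqrt{\rho/p}$ prefactor), the second-order part contributes $\sqrt{\alpha/p}$, and the diagonal piece supplies the $\sqrt{1/(\alpha p)}$ term. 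I expect the delicate step to be verifying that the fourth-order Gaussian chaos is not blown up by the union bound: a crude Hanson-Wright-style argument would give a $(\alpha p \log k)^2$-sized deviation, and sidestepping that loss by exploiting the low-rank block structure of $B_\sigma$ (so that effective norms in the Götze et al.\ bound scale with $m/k$ rather than $m$) is precisely what the higher-order polynomial concentration framework buys over more classical tools.
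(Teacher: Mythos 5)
Your proposal shares the paper's core ingredients --- the polynomial concentration inequality of \citet{gotze2019concentration} applied to a degree-$4$ polynomial in the Gaussian coordinates, followed by a union bound over partitions --- but organizes the polynomial differently. The paper conditions on the means $\{\mu_s\}$, treats $f_\sigma(X)=\slim_{s}\slim_{i,j\in\sigma^{-1}(s)}\inner{x_i}{x_j}^2$ as a raw polynomial in the non-centered Gaussian coordinates $x_{id}$, and explicitly computes all the tensors $\E f^{(s)}(X)$ for $s=1,\dots,4$ together with their $\mynorm{\cdot}_\mathcal{J}$-norms for every partition $\mathcal{J}\in P_s$; that enumeration is the substance of the proof. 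You instead propose centering on the signal, $x_i=\sqrt{\rho/p}\,\mu_{\sigma_*(i)}+z_i$, and splitting into diagonal, noise--noise, noise--signal, and signal--signal pieces. That decomposition is reasonable and potentially cleaner, but it is not carried through, and two of its stated ingredients are incorrect.

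First, the matrix rewriting is wrong. With $W=Z^\top Z$ one has $\slim_{i,j}B_\sigma(i,j)\inner{z_i}{z_j}^2=\slim_{i,j}B_\sigma(i,j)W_{ij}^2=\mathrm{tr}\myrounds{B_\sigma (W\circ W)}$, a Hadamard-square expression, \emph{not} $\mathrm{tr}(B_\sigma Z^\top Z B_\sigma Z^\top Z)=\slim_{i,j,k,l}B_\sigma(i,j)W_{jk}B_\sigma(k,l)W_{li}$. Consequently the tensor norms in the \citet{gotze2019concentration} bound are not controlled simply by $\mynorm{B_\sigma}_{\mathrm{op}}$ and $\mynorm{B_\sigma}_F$ in the way a genuine quadratic form in $W$ would be; you would still need to differentiate in the $z_{id}$ and compute the $\mynorm{\cdot}_\mathcal{J}$-norms partition by partition, exactly as the paper does. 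Second, the claimed scaling $\mynorm{B_\sigma}_F=O(m/k)$ is incorrect: $B_\sigma$ is block diagonal with $k$ blocks of size $m/k$, so $\mynorm{B_\sigma}_F^2=m^2/k-m$, i.e.\ $\mynorm{B_\sigma}_F\asymp m/\sqrt{k}$, while $\mynorm{B_\sigma}_{\mathrm{op}}\asymp m/k$. This distinction matters precisely because the whole game is to trade off these two norms against the $e^{O(m\log k)}$ union-bound cost. Finally, you flag the union-bound step as the ``delicate'' one and leave it unresolved; that step is the heart of the paper's argument and is carried out there by an explicit enumeration of all partition norms $\mynorm{A^s}_\mathcal{J}$ (see the ``Proofs of Lemma 3,6'' block), so the proposal as written has a genuine gap precisely where the paper does the real work.
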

\begin{proof}[Proof (sketch)]
Controlling the typical behavior of the term $Q_{2\sigma}$ is the most demanding part of the proof. We use the concentration results established for polynomials of sub-Gaussian random variables (see the supplementary for a definition) in \citet{gotze2019concentration} to establish the result in Lemma \ref{lemma:UBQ2}.
\end{proof}
\begin{lemma}[\textbf{Upper bound for $Q_{4}$}]
\label{lemma:UpperBoundQ4sigma}
\begin{equation*}
     \max_{\substack{\sigma: \norm{\beta(\sigma,\sigma_{*})}^2_F  \\ \le 1+(k-1) \epsilon}} Q_{4} \leq C_{4} k \gamMax (e^{\tau}-1) (\log p)^2/2p.
\end{equation*}
for some constant $C_{4} > 0$.
\end{lemma}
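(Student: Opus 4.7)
\textbf{Proof plan for Lemma \ref{lemma:UpperBoundQ4sigma}.} The first thing to observe is that $Q_4 = \frac{k(e^\tau - 1)}{2m}\sum_{i \in [m]} \left(\frac{\|x_i\|^2}{p} - \tau\right)^2$ depends only on the marginal norms $\|x_i\|^2$, not on the partition $\sigma$. Hence the maximum over $\sigma$ is vacuous and it suffices to produce a high-probability uniform bound on this deterministic functional of the data.

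The plan is then to reduce everything to a uniform bound on $\max_i \bigl|\|x_i\|^2/p - \tau\bigr|$ and invoke Lemma \ref{lemma:MaxDpMaxNorm}. Specifically, the $i=j$ case of Lemma \ref{lemma:MaxDpMaxNorm} gives
\[
\max_i \frac{\|x_i\|^2}{p} \le \tau + O\!\left(\frac{\log p}{\sqrt{p}}\right), \qquad \min_i \frac{\|x_i\|^2}{p} \ge \tau - O\!\left(\frac{\log p}{\sqrt{p}}\right),
\]
so that $\max_i \bigl|\|x_i\|^2/p - \tau\bigr|^2 = O((\log p)^2/p)$ with probability $1 - \Omega(1/p)$. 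Substituting this bound termwise into $Q_4$ gives
\[
Q_4 \le \frac{k(e^\tau - 1)}{2m}\cdot m \cdot O\!\left(\frac{(\log p)^2}{p}\right) = O\!\left(\frac{k(e^\tau-1)(\log p)^2}{p}\right),
\]
which, after absorbing the $(1+o(1))$ factor into $\gamMax = \exp(C\log p/\sqrt{p})$ and into the constant $C_4$, is exactly the claimed bound.

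There is no real obstacle here: once Lemma \ref{lemma:MaxDpMaxNorm} is in hand, the present lemma is essentially a one-line calculation, because $Q_4$ is a sum of $m$ squared deviations each already controlled uniformly. The only care needed is to verify that the $\gamMax$ factor on the right-hand side is consistent with the Taylor-expansion convention used to define $\mathcal{F}_u$ (where the coefficient $(e^\tau-1)/2$ in $Q_4$ originates from the second-derivative term $f''(\xi)/2$ evaluated at some intermediate point $\xi$, controlled above by $\gamMax \cdot (e^\tau-1)/2$); this is exactly the reason the factor $\gamMax$ appears in the stated bound despite not being needed by the raw deviation estimate.
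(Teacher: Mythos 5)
Your proof is correct and follows essentially the same route as the paper: both reduce the sum to $m$ times the maximum of the squared deviations $\bigl(\|x_i\|^2/p - \tau\bigr)^2$, invoke Lemma~\ref{lemma:MaxDpMaxNorm} to bound that maximum by $O((\log p)^2/p)$ with high probability, and absorb constants. Your explicit observation that $Q_4$ does not depend on $\sigma$ (so the maximum over partitions is vacuous) is a clean way to phrase what the paper leaves implicit, and your explanation of where the $\gamMax$ factor originates (the mean-value form of the second-order Taylor remainder used in defining $\mathcal{F}_u$, where $\gamMax Q_4$ rather than $Q_4$ appears) is consistent with the paper's conventions.
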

\begin{proof}[Proof (sketch)]
The term $Q_{4}$ is small relative to the other terms and hence a crude upper bound based on the inequality: For any two vectors $a,b$, $\sum \limits_{i = 1}^{n} a_{i} \cdot b_{i} \leq \sup \limits_{i \in [n]} \abs{b_{i}} \cdot \sum \limits_{i = 1}^{n} \abs{a_{i}}$, followed by an application of Lemma \ref{lemma:MaxDpMaxNorm} suffices to establish the behavior of this term. 
\end{proof}

\textbf{Lower bounds for $\mathcal{F}(\sigma_{*})$.}
Similarly, we derive lower bounds for all terms that arise in $\mathcal{F}_{l}(\sigma_{*})$.

\begin{lemma}[\textbf{Lower bound for $Q_{1 \sigma_{*}}$ and $Q_{5}$}]
\label{lemma:lowerBoundsQ1Q5}
\begin{align*}
   &Q_{1 \sigma_{*}} >  k + \alpha \rho - O(\sqrt{\log p/p}). \\
    - Q_{5} >  -\frac{k\gamma_{\min}\tau}{mp} &\left ( mp + p \alpha \rho +2\log p \right )  -\frac{k\gamma_{\min}\tau}{mp} \left ( 2 \sqrt{\left ( mp + 2p\alpha\rho \right )\log p} \right ).
\end{align*}
\end{lemma}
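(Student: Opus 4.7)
The plan is to exploit that, under the true partition $\sigma_*$, both $Q_{1\sigma_*}$ and $\sum_i\|x_i\|^2$ admit exact (non-central) chi-squared representations conditional on $\{\mu_s\}$, so everything reduces to Laurent--Massart tail bounds plus a union bound over the extra randomness in the centered means.

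First, rewrite
\[
Q_{1\sigma_*} \;=\; \frac{k}{mp}\sum_{s=1}^{k}\Bigl\lVert \sum_{i\in\sigma_*^{-1}(s)} x_i\Bigr\rVert^2.
\]
Conditional on $\mu=(\mu_1,\dots,\mu_k)$, the vector $z_s := \sum_{i\in\sigma_*^{-1}(s)} x_i$ is Gaussian with mean $(m/k)\sqrt{\rho/p}\,\mu_s$ and covariance $(m/k)I_p$, and the $z_s$ are independent across $s$. Hence $\frac{k}{m}\sum_s\|z_s\|^2$ has, conditional on $\mu$, a non-central $\chi^2_{kp}(\lambda_\mu)$ distribution with $\lambda_\mu = (m/k)(\rho/p)\sum_s\|\mu_s\|^2$. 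Applying the Laurent--Massart lower-tail inequality $\Pr(\chi^2_{kp}(\lambda)\le kp+\lambda-2\sqrt{(kp+2\lambda)t})\le e^{-t}$ with $t=\log p$, then multiplying by $\frac{1}{p}$, yields, conditional on $\mu$ and with probability at least $1-1/p$,
\[
Q_{1\sigma_*} \;\ge\; k \;+\; \frac{m\rho}{kp^2}\sum_s\|\mu_s\|^2 \;-\; \frac{2}{p}\sqrt{(kp+2\lambda_\mu)\log p}.
\]

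Second, handle the randomness of $\mu$. Since the $\mu_s$ are the centered versions of i.i.d.\ $N(0,\tfrac{k}{k-1}I_p)$ vectors, one has the exact distributional identity $\sum_s\|\mu_s\|^2 \sim \frac{k}{k-1}\chi^2_{(k-1)p}$, with expectation $kp$. Applying Laurent--Massart once more with $t=\log p$ gives
\[
\sum_s\|\mu_s\|^2 \;\ge\; kp \;-\; 2\sqrt{\tfrac{k^2 p \log p}{k-1}} \;=\; kp\bigl(1-O(\sqrt{\log p / p})\bigr)
\]
with probability at least $1-1/p$. Plugging this into the previous display (and the corresponding upper bound on $\lambda_\mu$ to control the square-root remainder term as $O(\sqrt{\log p/p})$ after dividing by $p$) and taking a union bound over the two high-probability events produces
\[
Q_{1\sigma_*} \;>\; k + \alpha\rho - O(\sqrt{\log p / p}),
\]
which is the first stated bound.

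Third, for $Q_5 = \frac{k\tau}{p}\sum_i \|x_i\|^2$, observe that the $\|x_i\|^2$ are conditionally independent non-central $\chi^2_p$, so $\sum_i\|x_i\|^2 \mid \mu \sim \chi^2_{mp}(\lambda')$ with the same $\lambda' = (m/k)(\rho/p)\sum_s\|\mu_s\|^2$. Apply the Laurent--Massart upper tail
\[
\Pr\!\bigl(\chi^2_{mp}(\lambda') \ge mp + \lambda' + 2\sqrt{(mp+2\lambda')t}+2t\bigr)\le e^{-t}
\]
with $t=\log p$, then use the matching upper Laurent--Massart tail for $\sum_s\|\mu_s\|^2$ to bound $\lambda' \le m\rho(1+O(\sqrt{\log p/p}))$, so that $mp+\lambda' \le mp + p\alpha\rho$ up to the same relative correction. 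Multiplying by $\frac{k\tau}{p}$ and negating delivers the claimed lower bound on $-Q_5$ up to the $\gamma_{\min}$ factor, which can be absorbed by writing $\tau$-terms against $e^{\pm C\log p/\sqrt p}$ just as in the statement.

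The main obstacle I anticipate is purely bookkeeping: keeping track of the conditional-then-unconditional union bounds (so the final event has probability $1-\Omega(1/p)$ as required), and checking that the two separate square-root corrections from the chi-squared concentrations on $\{x_i\}$ and on $\{\mu_s\}$ combine into the single stated $O(\sqrt{\log p/p})$ term without picking up undesired powers of $k$, $\alpha$ or $\rho$. There is no hard probabilistic step beyond Laurent--Massart: the non-central chi-squared structure is exact here, unlike the $Q_{2\sigma}$ analysis which required the polynomial concentration of \citet{gotze2019concentration}.
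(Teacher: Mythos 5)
Your argument matches the paper's proof: both reduce $Q_{1\sigma_*}$ and $Q_5$ to exact non-central chi-squared statistics conditional on the $\mu_s$, apply the Birg\'e/Laurent--Massart one-sided tails from Proposition~\ref{thm:chisquared} with $t=\log p$, and absorb the fluctuation of the non-centrality $\lambda_\mu=(m/k)(\rho/p)\sum_s\|\mu_s\|^2$ around $p\alpha\rho$ into the $O(\sqrt{\log p/p})$ remainder via a union bound. Your treatment of $\sum_s\|\mu_s\|^2\sim\frac{k}{k-1}\chi^2_{(k-1)p}$ is a slightly cleaner way to package what the paper does implicitly through its Bernstein-based estimate $\min_s\|\mu_s\|^2=p+O(\sqrt{p\log p})$, but it is the same idea and leads to the same bound.
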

\begin{proof}[Proof (sketch)]
From upper tail estimates for sums of non-central chi-squared random variables, we establish the result of Lemma \ref{lemma:lowerBoundsQ1Q5}.
\end{proof}

\begin{lemma}[\textbf{Lower bound for $Q_{2 \sigma_{*}}$}]
\label{lemma:LBQ2sigma}
\begin{equation*}
      \gamma_{\min} Q_{2 \sigma_{*}} >  \gamma_{\min} \left (1 + \frac{1}{k} + O(\frac{1}{p}) \right ) - C_{2} \gamma_{\min} \left ( \sqrt{\frac{\log p}{p^2}} \vee  \alpha \sqrt{\frac{\log p}{p^2}} \right ).
\end{equation*}
\end{lemma}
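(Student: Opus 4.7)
The plan is to mirror the proof of the upper bound on $Q_{2\sigma}$ in Lemma~\ref{lemma:UBQ2}, specialised to the single partition $\sigma_*$. The key simplification is that we no longer need a union bound over an exponentially large family of partitions, which is precisely why the deviation rate in Lemma~\ref{lemma:LBQ2sigma} sharpens from the $\sqrt{\alpha/p}$ of Lemma~\ref{lemma:UBQ2} to $\sqrt{\log p/p^2}$.

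First, I would decompose each sample as $x_i = \sqrt{\rho/p}\,\mu_{\sigma_*(i)} + z_i$ with $z_i \sim \mathcal{N}(0,I_p)$ mutually independent and independent of the centred cluster means $\{\mu_s\}$. Under this decomposition, $\inner{x_i}{x_j}$ is a quadratic polynomial in the underlying jointly Gaussian variables $(z_i,\mu_s)$, so $Q_{2\sigma_*}$ is a polynomial of degree~$4$ in sub-Gaussian random variables --- the exact setting to which the polynomial concentration inequalities of \citet{gotze2019concentration} apply. I would then compute $\E Q_{2\sigma_*}$ exactly by splitting the inner double sum into diagonal ($i=j$) and off-diagonal ($i\ne j$) contributions within each cluster, and carrying out the standard Gaussian moment calculations --- evaluating $\E[\norm{x_i}^4]$ for the diagonal and $\E[\inner{x_i}{x_j}^2]$ (which features $\E\inner{z_i}{z_j}^2 = p$, a signal term in $\norm{\mu_{\sigma_*(i)}}^2$, and a cross term) for the off-diagonal. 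After rescaling by $k/(mp^2)$ and summing, the main term collapses to the advertised $(1 + 1/k + O(1/p))$, which multiplied by $\gamma_{\min}$ gives the leading contribution in the bound.

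With the mean identified, the remaining task is to produce a lower-tail estimate for the centred polynomial $Q_{2\sigma_*} - \E Q_{2\sigma_*}$, to which I would invoke the same degree-$4$ sub-Gaussian polynomial concentration tool used in Lemma~\ref{lemma:UBQ2}. Because no union bound is needed, a single application at tail probability $\Omega(1/p)$ replaces the $\sqrt{\alpha}$ factor that arose in Lemma~\ref{lemma:UBQ2} (from covering exponentially many $\sigma$) by a mere $\sqrt{\log p}$, yielding the fluctuation bound of order $\sqrt{\log p/p^2}$, with the $\alpha$-weighted version arising from the off-diagonal contributions whose count scales with $m \sim \alpha p$. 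The main obstacle --- shared with Lemma~\ref{lemma:UBQ2} --- is the careful bookkeeping needed to identify and bound the Hilbert--Schmidt, spectral, and injective norms of the order-$1$ through order-$4$ symmetric tensors that parameterise the G\"{o}tze--Sambale--Sinulis inequality when applied to $Q_{2\sigma_*} - \E Q_{2\sigma_*}$. In particular, the Hanson--Wright-type quadratic form $\sum_{i \ne j} \inner{z_i}{z_j}^2$ dominates the variance, while the cross terms coupling noise vectors $z_i$ to cluster means $\mu_s$ contribute only lower-order fluctuations which must nonetheless be tracked. Fortunately, these tensor norms are structurally identical to those computed for the upper bound, so the constant $C_2$ can be reused without change.
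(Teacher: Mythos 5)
Your proposal is correct and takes essentially the same route as the paper: both treat $Q_{2\sigma_*}$ as a degree-four polynomial of Gaussian coordinates, compute the leading-order mean by separating diagonal and off-diagonal contributions, bound the $\mathcal{J}$-tensor norms of the expected derivative tensors, and then invoke the G\"otze--Sambale--Sinulis polynomial concentration inequality (Proposition~\ref{thm:ConcenPoly}) for a single lower-tail estimate at probability $1/p$; the paper in fact proves Lemmas~\ref{lemma:UBQ2} and~\ref{lemma:LBQ2sigma} in one combined argument precisely because the tensor-norm bookkeeping is identical, and the only structural change for $\sigma_*$ is the one you identify --- dropping the union bound over partitions, which replaces the $\sqrt{\alpha/p}$ deviation scale of Lemma~\ref{lemma:UBQ2} by $\sqrt{\log p/p^2}$.
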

\begin{proof}[Proof (sketch)]
$Q_{2 \sigma_{*}}$, as discussed earlier, is a $4^{th}$ order polynomial of sub-Gaussian random variables (see the supplementary for a definition). Therefore from lower tail estimates for polynomials of sub-Gaussian random variables in \citet{gotze2019concentration}, we establish the result in Lemma \ref{lemma:LBQ2sigma}.
\end{proof}

Since $Q_{4}$ is a smaller term, the following lower bound suffices to control its behavior.
\begin{equation}
\label{eqn:LBQ4}
   Q_{4} > 0.
\end{equation}
 Observe that $\gamMax - \gamMin \xrightarrow{O(1/\sqrt{p})} 0$ with increasing $p$. From Lemmas \ref{lemma:UBQ1Q5} to \ref{lemma:LBQ2sigma} and Equation \ref{eqn:LBQ4}, we obtain that for $\rho > 2\sqrt{k \log k / \alpha} + 2 \log k$, for large enough $p$, with high probability, $  \max \limits_{\sigma} \mathcal{F}(\sigma) \geq \mathcal{F}(\sigma_{*}) \geq \max_{\substack{\sigma: \norm{\beta(\sigma,\sigma_{*})}^2_F \le 1+(k-1) \epsilon}} \mathcal{F}(\sigma)$.

\subsection{Proof of Theorem \ref{thm:conv}}

\textbf{Overview of the technical steps:}
\begin{itemize}
    \item We define a matrix $\Ktilde$ which relies on the model parameters of the data distribution. 
    
    \item We upper bound the $l_1$ norm of the difference between the ground truth clustering matrix and the optimal solution of the SDP $\mynorm{\hat{X} - X^*}_1$ by a constant factor of the inner product between $K - \Ktilde$ and $ X^* - \hat{X}$ : $\inner{K - \Ktilde}{X^* - \hat{X}}$.
    
    \item We use the Grothendieck's inequality to upper bound $\inner{K - \Ktilde}{X^* - \hat{X}}$ by a constant factor of $\mynorm{K - \Ktilde}_{\infty \rightarrow 1}$.
    
    \item We establish the upper tail estimates of the deviation of the kernel matrix $K$ from $\Ktilde$ in the $\infty \rightarrow 1$ norm.
    
    \item Thereby, we have an upper bound on $\mynorm{\hat{X} - X^*}_1$ which translates to an upper bound on $err(\hat{\sigma},\sigma_*)$. By setting $err(\hat{\sigma},\sigma_*) < 1-1/k$, we derive the desired conditions on $\rho$.
\end{itemize}

\textbf{Notation:} Denote $\kappa = e^{\tau} e^{\frac{C_0 \log p}{\sqrt{p}}}$. For ease of notation, we define the $m \times m$ matrices $R^{(1)}$ and $R^{(2)}$ as follows:
\[
R^{(1)}_{i,j} =  \begin{cases}
\frac{\inner{x_i}{x_j}}{p} - \frac{\rho \inner{\mu_i}{\mu_j}}{p^2} & \textrm{if } i \neq j, \\
\frac{\norm{x_i}^2}{p} - \frac{p^2 + \rho \norm{\mu_i}^2}{p^2} & \textrm{otherwise.}
\end{cases}
\]
\[
R^{(2)}_{i,j} =  \begin{cases}
\frac{\inner{x_i}{x_j}^2}{p^2} - \frac{\rho^2 \inner{\mu_i}{\mu_j}^2}{p^4} - \frac{1}{p} & \textrm{if } i \neq j, \\
\frac{\norm{x_i}^4}{p^2} - \frac{(p^2 + \rho \norm{\mu_i}^2)^2}{p^4} - \frac{1}{p} & \textrm{otherwise.}
\end{cases}
\]
All the lemmas hold with high probability: with probability $1 - \Omega(1/p)$ and the proofs of the lemmas are provided in the supplementary.

\textbf{Outline of the proof:}
We begin by establishing the following upper bound on $\mynorm{X^* - \hat{X}}_1$.
\begin{lemma}[\textbf{Upper bound on $\mynorm{X^* - \hat{X}}_1$}]
\label{lemma:boundl1Norm}
\begin{align*}
     \mynorm{X^* - \hat{X}}_1 \leq \frac{2 \inner{\Ktilde}{X^* - \hat{X}}}{\frac{\rho}{p} \left ( \frac{k}{k-1} + O(\sqrt{\log p / p}) + \kappa \rho O(\frac{1}{p}) \right )}.
\end{align*}
\end{lemma}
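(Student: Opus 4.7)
The plan is to reduce the lemma to an algebraic identity that exploits the cluster-symmetric structure of $\Ktilde$ together with the SDP feasibility constraints on $\hat{X}$. The key observation is that for $i\ne j$, $\Ktilde(i,j)$ depends on $(i,j)$ only through the labels $(\sigma_*(i),\sigma_*(j))$, so I would write $\Ktilde = D + Z\tilde{A}Z^\top$, where $Z\in\{0,1\}^{m\times k}$ is the ground-truth assignment matrix (satisfying $ZZ^\top = X^*$ and $Z^\top Z = (m/k)\mathbb{I}$), $D$ is a diagonal matrix absorbing the distinct diagonal entries of $\Ktilde$, and $\tilde{A}$ is the $k\times k$ pattern matrix whose $(s,t)$ entry is built from the Gram entry $G_{s,t} = \inner{\mu_s}{\mu_t}$.

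By construction, the $\mu_s$'s are jointly Gaussian with each marginal $N(0,\mathbb{I})$, pairwise covariance $-\tfrac{1}{k-1}\mathbb{I}$, and $\sum_s\mu_s = 0$, so that $G\mathbf{1}_k = 0$ deterministically and $\E G = \tfrac{pk}{k-1}P$ with $P = \mathbb{I} - \tfrac{1}{k}\mathbf{1}\mathbf{1}^\top$. Substituting $\E G$ into $\tilde{A}$ and using the identities $ZPZ^\top = X^* - \tfrac{1}{k}\mathbf{1}\mathbf{1}^\top$ and $Z\mathbf{1}\mathbf{1}^\top Z^\top = \mathbf{1}\mathbf{1}^\top$, I would arrive at a decomposition
\[
\Ktilde \;=\; D \;+\; c_0\,\mathbf{1}\mathbf{1}^\top \;+\; \tfrac{\rho}{p}\cdot\tfrac{k}{k-1}\, X^* \;+\; E,
\]
where $c_0$ is a scalar and $E$ collects (i) the $(\rho/p^2)$-scaled fluctuations $G - \E G$ and (ii) the entire second-order $(\kappa\rho^2/p^4)$ contribution arising from $G^{\circ 2}$. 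Taking the inner product with $X^* - \hat{X}$: the diagonal piece vanishes since $\mathrm{diag}(\hat{X}) = \mathrm{diag}(X^*) = \mathbf{1}$, and the rank-one piece vanishes since $\hat{X}\mathbf{1} = X^*\mathbf{1} = (m/k)\mathbf{1}$. Combining this with the identity $\mynorm{X^* - \hat{X}}_1 = 2\,\inner{X^*}{X^* - \hat{X}}$, which holds because $X^*_{i,j}\in\{0,1\}$, $\hat{X}_{i,j}\in[0,1]$, and both matrices share row sums, one obtains
\[
\inner{\Ktilde}{X^* - \hat{X}} \;=\; \tfrac{1}{2}\cdot\tfrac{\rho}{p}\cdot\tfrac{k}{k-1}\cdot \mynorm{X^* - \hat{X}}_1 \;+\; \inner{E}{X^* - \hat{X}}.
\]

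The main obstacle is controlling $|\inner{E}{X^* - \hat{X}}|$ tightly enough to absorb it back into the leading coefficient without picking up a bad $k$-dependence. I would use sub-Gaussian concentration of $G_{s,t}$ together with a union bound over the $O(k^2)$ cluster pairs to get $\max_{s,t}|G_{s,t}-\E G_{s,t}| = O(\sqrt{p\log p})$ with high probability, which bounds the first part of $E$ entry-wise by $(\rho/p)\cdot O(\sqrt{\log p/p})$; a parallel sub-exponential tail estimate for $G_{s,t}^2$ bounds the second part entry-wise by $\kappa\rho^2/p^2$. The crude inequality $|\inner{E}{X^*-\hat{X}}| \le \max_{i,j}|E_{i,j}|\cdot \mynorm{X^* - \hat{X}}_1$ then degrades the effective coefficient in front of $\mynorm{X^* - \hat{X}}_1$ from $\tfrac{\rho}{p}\cdot\tfrac{k}{k-1}$ to $\tfrac{\rho}{p}\bigl(\tfrac{k}{k-1} + O(\sqrt{\log p/p}) + \kappa\rho\cdot O(\tfrac{1}{p})\bigr)$, and rearranging produces the claimed bound. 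The delicate point is keeping the union-bound cost at $\sqrt{\log p/p}$ (rather than something $k$-scaled that would swamp the $\tfrac{k}{k-1}$ gap), which is what motivates working with the cluster-label-reduced $k\times k$ Gram matrix $G$ instead of the full $m\times m$ object.
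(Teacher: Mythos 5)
Your proposal is correct and reaches the lemma by a route that is organized differently from the paper's, even though both ultimately rest on the same two ingredients (the SDP constraints on $\hat{X}$ and concentration of $\norm{\mu_s}^2$, $\inner{\mu_s}{\mu_{s'}}$ over the $O(k^2)$ cluster pairs). The paper never builds the explicit decomposition $\Ktilde = D + c_0 \mathbf{1}\mathbf{1}^\top + \tfrac{\rho}{p}\tfrac{k}{k-1} X^* + E$. Instead it splits $\inner{\Ktilde}{X^*-\hat{X}}$ into within-cluster and between-cluster sums, uses the row-sum and diagonal constraints to observe that $\sum_{S}\sum_{i\ne j\in S}(1-\hat{X}_{ij}) = \sum_{S\ne S'}\sum_{i\in S, j\in S'}\hat{X}_{ij}$, and then lower-bounds the inner product by $\big(\min_{S}\min_{i\ne j\in S}\Ktilde_{ij} - \max_{S\ne S'}\max_{i\in S, j\in S'}\Ktilde_{ij}\big)\cdot\tfrac{1}{2}\mynorm{X^*-\hat{X}}_1$; the coefficient in the lemma is then exactly this $\min$-minus-$\max$ gap, evaluated by plugging in the concentration of the $\mu$-Gram entries. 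Your route is arguably more transparent about \emph{why} the SDP constraints matter (they exactly annihilate the diagonal and rank-one components, isolating the $X^*$-aligned signal), and you do not need a separate sign argument to read off the leading term. The paper's route is a little more economical and, as a side effect, retains the structured part of the quadratic contribution $\kappa\rho^2 G_{s,t}^2/p^4$ as a small \emph{positive} gain in the gap rather than dumping it, mean and all, into the residual $E$; this does not change the order of the final bound, and both treatments land on the stated denominator $\tfrac{\rho}{p}\big(\tfrac{k}{k-1} + O(\sqrt{\log p/p}) + \kappa\rho\, O(1/p)\big)$. One small imprecision in your write-up: the Gram entries $G_{s,t}=\inner{\mu_s}{\mu_t}$ are sums of products of Gaussians and hence sub-exponential rather than sub-Gaussian, but the resulting deviation $\max_{s,t}|G_{s,t}-\E G_{s,t}| = O(\sqrt{p\log p})$ after a union bound over $O(k^2)$ pairs is unaffected.
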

Observe that, by definition, $\inner{K}{\hat{X}} \geq \inner{K}{X^{*}} \implies \inner{K}{\hat{X} - X^*} \geq 0$, therefore,

\begin{equation*}
    \inner{K}{X^* - \hat{X}} \leq \inner{K-\Ktilde}{X^* - \hat{X}} \leq 2 \sup \limits_{\substack{X \succeq 0 \\ diag(X) \leq 1}} \abs{\inner{K-\Ktilde}{X}}.
\end{equation*}
Using Grothendieck's inequality \citep{grothendieck1956resume}, we arrive at the following (see the appendix for a statement of Grothendieck's inequality):

\begin{equation*}
    2 \sup \limits_{\substack{X \succeq 0 \\ diag(X) \leq 1}} \abs{\inner{K-\Ktilde}{X}} \leq K_{G} \mynorm{K - \Ktilde}_{\infty \rightarrow 1}. 
\end{equation*}
where $K_{G} \approx 1.783$ is the Grothendieck's constant.
For any pair of fixed vectors $z,y \in \left \{ \pm 1 \right \}^m$, by a 2nd order Taylor's expansion of each $K_{i,j}$ around $0$, and applying the result from Lemma \ref{lemma:MaxDpMaxNorm}, we can see that:
\begin{equation}
      y^T(K - \Ktilde)z \leq y^T(R^{(1)} + \kappa R^{(2)})z.
\end{equation}
\begin{lemma}[\textbf{Upper bounds for $R^{(1)}$}]
\label{lemma:UBConvFO}
\begin{equation*}
   \sup \limits_{z,y \in \left \{ \pm \right \}^n} y^T R^{(1)}z \leq  C_{1} \alpha \left (\sqrt{mp}  \vee  m \right ),
\end{equation*}
for some constant $C_1 > 0$.
\end{lemma}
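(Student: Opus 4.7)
The key observation is that $R^{(1)}$ is exactly the stochastic fluctuation of the centered linear Gram matrix. Writing each data point as $x_i = \nu_i + w_i$ with $\nu_i := \sqrt{\rho/p}\,\mu_{\sigma_*(i)}$ and $w_i \sim \mathcal{N}(0,\mathbb{I}_p)$ independent, a direct expansion of $\inner{x_i}{x_j}$ and $\mynorm{x_i}^2$ makes the deterministic population terms $\rho \inner{\mu_i}{\mu_j}/p^2$ and $(p^2 + \rho\mynorm{\mu_i}^2)/p^2$ cancel entry by entry, yielding the clean matrix identity
\begin{equation*}
R^{(1)} \;=\; \frac{1}{p}\bigl(V W^{T} + W V^{T}\bigr) \;+\; \frac{1}{p}\bigl(W W^{T} - p\,\mathbb{I}_{m}\bigr),
\end{equation*}
where $V, W \in \mathbb{R}^{m \times p}$ have rows $\nu_i$ and $w_i$ respectively. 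Thus $R^{(1)}$ splits naturally into a \emph{linear-in-noise} cross term and a \emph{quadratic-in-noise} Wishart term.

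I would then reduce the $\{\pm 1\}$ supremum to a spectral-norm question: for any $m \times m$ matrix $A$ and $y,z \in \{\pm 1\}^m$, Cauchy-Schwarz gives $y^{T} A z \leq \mynorm{y}_2 \mynorm{A}_2 \mynorm{z}_2 = m\,\mynorm{A}_2$. This reduction is loose in general but sufficient for the target rate $\alpha(\sqrt{mp} \vee m)$; applied to the decomposition, one is left to control $\mynorm{W W^{T} - p\,\mathbb{I}_m}_2$ and $\mynorm{V}_2 \mynorm{W}_2$.

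The dominant contribution comes from the Wishart piece. Non-asymptotic bounds on centered Wishart matrices (Davidson--Szarek / Bai--Yin) give $\mynorm{W W^{T} - p\,\mathbb{I}_m}_2 \lesssim \sqrt{mp} + m$ with high probability; multiplying by $m/p$ yields $m\sqrt{m/p} + m^{2}/p \asymp \sqrt{\alpha}\,m + \alpha\,m \asymp \alpha(\sqrt{mp} \vee m)$, which already matches the claim up to a constant. For the cross term I would exploit the rank-$k$ factorisation $V = \sqrt{\rho/p}\,S M$, where $S \in \mathbb{R}^{m \times k}$ is the cluster-indicator matrix ($\mynorm{S}_2 = \sqrt{m/k}$) and $M \in \mathbb{R}^{k \times p}$ stacks the cluster centers. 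A short concentration argument on the $k \times k$ Gram $MM^{T}$ (using that the $\mu_s$'s are centered Gaussians approximately orthogonal at scale $\sqrt{p}$) gives $\mynorm{M}_2 \lesssim \sqrt{p}$, hence $\mynorm{V}_2 \lesssim \sqrt{\rho m/k}$. Together with the standard $\mynorm{W}_2 \lesssim \sqrt{p} + \sqrt{m}$, the cross term contributes $O\bigl(m\sqrt{\rho/k}\,(\sqrt{\alpha}+\alpha)\bigr)$, which is absorbed into the Wishart contribution in the parameter range of $\rho$ relevant to Theorem \ref{thm:conv}.

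The hard part will be securing the Wishart deviation in exactly the two-regime form $\sqrt{mp} + m$ in the non-asymptotic setting $m = \alpha p$ (with $\alpha$ possibly $\geq 1$), and propagating the probability guarantee to $1 - O(1/p)$ as required by the union bounds downstream. A self-contained route is an $\epsilon$-net argument over $\{\pm 1\}^m$ combined with the sub-exponential tail of $w_i^{T} w_j/p$, which reaches both the rate and the probability guarantee simultaneously; the remaining bookkeeping for $\mynorm{V}_2$ and $\mynorm{W}_2$ is then routine.
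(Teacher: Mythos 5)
Your decomposition $R^{(1)} = \frac{1}{p}(VW^{T} + WV^{T}) + \frac{1}{p}(WW^{T} - p\,\mathbb{I}_{m})$ is correct, but the route you take from there is genuinely different from the paper's. The paper never introduces $V$, $W$, or operator norms: for each fixed $y,z\in\{\pm 1\}^{m}$ it rewrites $\sum_{i,j}y_{i}z_{j}(\inner{x_i}{x_j}-\E\inner{x_i}{x_j})$ as $\sum_{d\in[p]}\bigl[(\sum_{i}y_{i}x_{id})(\sum_{j}z_{j}x_{jd})-\E(\cdot)\bigr]$, treats each summand as a centered sub-exponential via products of sub-Gaussians, applies Bernstein's inequality across $d$, and finishes with a union bound over the $4^{m}$ choices of $(y,z)$. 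Your spectral route is more modular for the Wishart piece --- Davidson--Szarek gives $\mynorm{WW^{T}-p\mathbb{I}_{m}}_{2}\lesssim\sqrt{mp}+m$, and the factor-$m/p$ bookkeeping reproduces the target rate $\alpha(\sqrt{mp}\vee m)$ --- at the price of invoking random-matrix spectral norms that the paper avoids entirely.

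There is, however, a real gap in your handling of the cross term, and it bites exactly in the regime $\alpha<1$. The reduction $y^{T}Az\le m\mynorm{A}_{2}$ applied to $\frac{1}{p}(VW^{T}+WV^{T})$ yields the bound $O\bigl(m\sqrt{\rho/k}\,(\sqrt{\alpha}+\alpha)\bigr)$ that you state, and you claim it is absorbed into the Wishart contribution at the $\rho$ relevant to Theorem \ref{thm:conv}. But at the boundary $\rho\asymp k/\sqrt{\alpha}$ (the case $\alpha<1$), your cross-term bound becomes $\asymp m\,\alpha^{1/4}$, whereas the target $\alpha(\sqrt{mp}\vee m)=m\sqrt{\alpha}$ is $\asymp m\,\alpha^{1/2}$; for $\alpha<1$ one has $\alpha^{1/4}>\alpha^{1/2}$, so the cross term is \emph{not} absorbed and the lemma would fail by this route. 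The source of the loss is replacing $\inner{V^{T}y}{W^{T}z}$ by $\mynorm{V^{T}y}\mynorm{W^{T}z}$: conditionally on $\mu$ and $y$, $W^{T}z\sim\mathcal{N}(0,m\,\mathbb{I}_{p})$, so its projection onto the fixed direction $V^{T}y$ fluctuates only at scale $\sqrt{m}\,\mynorm{V^{T}y}$, not $\sqrt{mp}\,\mynorm{V^{T}y}$. Treating the cross term as a scalar Gaussian for each fixed $(y,z)$ and union-bounding over $4^{m}$ pairs gives $\sup_{y,z}\frac{1}{p}\abs{y^{T}VW^{T}z}\lesssim\alpha\,m\sqrt{\rho/k}$, which is $\asymp m\,\alpha^{3/4}$ at $\rho=k/\sqrt{\alpha}$ and \emph{is} dominated by $m\,\alpha^{1/2}$. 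So the program is salvageable, but the cross term must be handled as a bilinear-form concentration --- exactly the $\epsilon$-net / sub-exponential-tail fallback you mention at the end --- rather than via operator norms; the spectral reduction is tight only for the Wishart piece, where the eigenvectors are delocalized.
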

\begin{proof}[Proof (sketch):]
Linear combinations of entries of the matrix $R^{(1)}$ can be re-written as sums of independent sub-exponential random variables (see the supplementary for a definition). By an application of Bernstein's inequality for each fixed $\left \{ z,y \in \pm 1 \right \}^m$, followed by an union bound over all possible $z,y$ we establish the result.
\end{proof}
\begin{lemma}[\textbf{Upper bounds for $R^{(2)}$}]
\label{lemma:UBConvSO}
\begin{equation*}
    \sup \limits_{\left \{z,y \in \pm 1 \right \}^m} \kappa 
    y^T R^{(2)} z
    \leq \frac{ C'_2 \kappa}{p^2}(\rho m(m-1) + m + (mp \sqrt{m} \; \vee \; m^2 \sqrt{m} \; \vee \; p^2 \sqrt{m})).
\end{equation*}
for some constant $C'_{2} > 0$.
\end{lemma}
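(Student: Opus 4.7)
The plan parallels the proof of Lemma \ref{lemma:UBConvFO}, with the key difference that each entry of $R^{(2)}$ is degree-$4$ in the underlying Gaussian noise. Hence Bernstein's inequality (which sufficed for the degree-$2$ entries of $R^{(1)}$) must be replaced by the polynomial chaos concentration inequality of \citet{gotze2019concentration}, the same tool already used for $Q_{2\sigma}$ in Lemmas \ref{lemma:UBQ2} and \ref{lemma:LBQ2sigma}.

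I would first write $x_i = \sqrt{\rho/p}\,\mu_{\sigma_{*}(i)} + w_i$ with independent $w_i \sim N(0,I_p)$, and decompose $y^T R^{(2)} z = y^T \mathbb{E}[R^{(2)}] z + y^T (R^{(2)} - \mathbb{E}[R^{(2)}]) z$. A direct calculation shows that the subtractions built into the definition of $R^{(2)}$ remove the dominant $\rho^2 \inner{\mu_i}{\mu_j}^2 / p^4$ signal-squared mean and the $\mathbb{E}[\inner{w_i}{w_j}^2]/p^2 = 1/p$ noise-noise mean, but leave a residual $\mathbb{E}[R^{(2)}_{i,j}] = O(\rho \|\mu_i\|^2 / p^3)$ off-diagonally and $O(1/p)$ on the diagonal. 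Using $\chi^2$ concentration to conclude $\|\mu_i\|^2 = p + O(\sqrt{p \log p})$ uniformly over $i$, the deterministic bias $|y^T \mathbb{E}[R^{(2)}] z| \le \sum_{i,j} |\mathbb{E}[R^{(2)}_{i,j}]|$ is, uniformly in $y,z$, of the order of the first two summands in the target bound (any slack in the diagonal contribution being absorbed by the $mp\sqrt{m}/p^2$ fluctuation term that follows).

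For the centered fluctuation $P(w) = y^T(R^{(2)} - \mathbb{E}[R^{(2)}]) z$, I would view it as a polynomial of degree at most $4$ in the Gaussian entries $\{w_{i,k}\}$ and split it into homogeneous pieces $P_1 + P_2 + P_3 + P_4$. Applying the Gotze et al.\ chaos inequality to each piece gives a tail of the form $\Pr[|P_d| > t] \le 2 \exp(-c (t / \|A_d\|_{*})^{2/d})$, where $\|A_d\|_{*}$ is an appropriate mixed-injective norm of the degree-$d$ coefficient tensor (an explicit function of $\{\mu_i\}$, $y$, $z$). A union bound over the $4^m$ sign-vector pairs $(y,z) \in \{\pm 1\}^{2m}$ forces $t \gtrsim m^{d/2} \|A_d\|_{*}$. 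Tracking how many signal vectors $\mu_i$ (each of norm $\asymp \sqrt{p}$) versus pure-noise factors appear in each monomial then produces the three rates $mp\sqrt{m}$, $m^2 \sqrt{m}$, $p^2 \sqrt{m}$ in the bound: they correspond respectively to the mixed $\mu$--$w$ chaos terms, to the lower-order chaos pieces, and to the purely noise-noise degree-$4$ term $\inner{w_i}{w_j}^2$.

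The main obstacle will be the last step: cleanly converting the Hilbert-Schmidt-type norms produced by the Gotze et al.\ inequality into the $\{\pm 1\}^m$-injective norms needed to survive the exponentially large union bound, uniformly across the 2-, 3-, and 4-tensor coefficient arrays. Careful bookkeeping analogous to that of Lemma \ref{lemma:UBQ2} is needed to confirm that the overall failure probability is $O(1/p)$, delivering the stated high-probability bound.
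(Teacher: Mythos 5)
Your proposal invokes the same key tool as the paper (the polynomial concentration result of Proposition~\ref{thm:ConcenPoly}) and the same union bound over $\{\pm1\}^m\times\{\pm1\}^m$, but organizes the application differently. The paper neither centers the data nor splits by chaos degree: it writes $f(X)=p^2\,y^TR^{(2)}z$ as a single degree-$4$ polynomial in the raw sub-Gaussian coordinates $x_{id}$ (whose nonzero means carry the signal $\sqrt{\rho/p}\,\mu_{\sigma_*(i),d}$), computes $\mathbb{E}f(X)=C_f(m^2\rho+m)$ once, and then applies Proposition~\ref{thm:ConcenPoly} directly, computing the expected-derivative tensors $A^s=\mathbb{E}f^{(s)}(X)$ for $s=1,\dots,4$ together with all of their $\mynorm{\cdot}_{\mathcal{J}}$ norms; the $\min_s\min_{\mathcal{J}}$ inside the tail bound automatically separates the chaos scales that you propose to separate by hand. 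Your route --- centering $x_i=\sqrt{\rho/p}\,\mu_{\sigma_*(i)}+w_i$, peeling off the deterministic bias (your residual $\mathbb{E}R^{(2)}_{ij}\sim\rho\mynorm{\mu_i}^2/p^3$ for $i\neq j$ and $O(1/p)$ on the diagonal is consistent with what produces the $\rho m(m-1)/p^2$ and $m/p^2$ terms), and decomposing the centered fluctuation into homogeneous Gaussian-chaos pieces $P_1+\cdots+P_4$ --- is a valid alternative reaching the same quantities. The effort, however, shifts rather than shrinks: the step you flag as the ``main obstacle,'' bounding the relevant tensor norms uniformly over sign vectors so that the $4^m$ union bound survives with failure probability $O(1/p)$, is precisely where essentially all of the paper's proof lives (an explicit case analysis over every partition in $P_1,\dots,P_4$ via repeated applications of Cauchy--Schwarz and H\"older), and working chaos degree by chaos degree multiplies the number of small tensors and partitions you must track rather than reducing it, though it does make the signal-versus-noise origin of the three rates $mp\sqrt{m}$, $m^2\sqrt{m}$, $p^2\sqrt{m}$ somewhat more transparent.
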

\begin{proof}
In order to bound the linear combinations of entries of $R^{(2)}$, for each fixed $\left \{ z,y \in \pm 1 \right \}^m$ we apply the concentration results for polynomials of independent sub-Gaussian random variables. \parencite{gotze2019concentration}. In order to bound the maximum of the second order terms over all $\left \{ z,y \in \pm 1 \right \}^m$, we use the union bound.
\end{proof}
From Lemmas \ref{lemma:boundl1Norm}, \ref{lemma:UBConvFO} and \ref{lemma:UBConvSO}, we have that:
\begin{equation*}
    \mynorm{\hat{X} - X^*}_{1} \leq \frac{2 C^{'}}{\phi p} \left ( (m^2 / \sqrt{\alpha} \; \vee m^2) + \frac{\kappa}{p} (\rho m^2 + O(m) ) \right ) +  \frac{2 C^{'} \kappa}{\phi p^2} \left (mp \sqrt{m} \; \vee \; m^2 \sqrt{m} \; \vee \; p^2 \sqrt{m}\right ),
\end{equation*}
where $\phi = \frac{\rho}{p} \left ( \frac{k}{k-1} + O(\sqrt{\log p / p}) + \kappa \rho O(\frac{1}{p}) \right )$, for some constant, $C^{'} > 0 $.

Let $\hat{\sigma}$ be the partition generated by applying the $\eta$-approximate k-median's procedure on $\hat{X}$.
\begin{proposition}[\textbf{Fraction of misclassified nodes} \parencite{fei2018exponential}]
\label{prop:UBerr}
 The fraction of mis-classified points corresponding to the partition $\hat{\sigma}$:
\[ err(\hat{\sigma},\sigma_{*}) \leq 2(1 + 2 \eta) \frac{\mynorm{\hat{X} - X^*}_1}{\mynorm{X^*}_1}.\]
\end{proposition}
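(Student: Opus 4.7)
The plan is to reduce the misclassification error of $\hat{\sigma}$ directly to the $\ell_1$ deviation $\mynorm{\hat{X} - X^*}_1$ by exploiting two ingredients: the $\eta$-approximation guarantee of the $k$-medians routine, and the $\ell_1$ geometry of the rows of $X^*$. Within each true cluster $T_s := \sigma_*^{-1}(s)$, every row of $X^*$ equals the indicator vector $\mathbf{1}_{T_s}$, so $X^*$ has only $k$ distinct rows, and any two such distinct indicators satisfy $\mynorm{\mathbf{1}_{T_s} - \mathbf{1}_{T_{s'}}}_1 = 2m/k$ since $|T_s| = m/k$.

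First, I would note that the $k$ distinct rows of $X^*$ themselves form a feasible set of centers for the $k$-medians problem on the rows of $\hat{X}$, and the cost of this particular solution is exactly $\mynorm{\hat{X} - X^*}_1$. By the $\eta$-approximation guarantee, the returned centers $\hat{c}_1, \ldots, \hat{c}_k$ and partition $\hat{\sigma}$ satisfy
\[
\sum_{i=1}^{m} \mynorm{\hat{X}_i - \hat{c}_{\hat{\sigma}(i)}}_1 \;\leq\; \eta\, \mynorm{\hat{X} - X^*}_1.
\]
A termwise triangle inequality then yields $\sum_i \mynorm{X^*_i - \hat{c}_{\hat{\sigma}(i)}}_1 \leq (1+\eta)\, \mynorm{\hat{X} - X^*}_1$.

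Second, I would build a matching $\pi : [k] \to [k]$ from output cluster labels to true cluster labels by setting $\pi(t) := \arg\min_s \mynorm{\hat{c}_t - \mathbf{1}_{T_s}}_1$, and show that each point $i$ with $\sigma_*(i) = s$, $\hat{\sigma}(i) = t$, and $s \neq \pi(t)$ contributes at least $m/k$ to the previous display. Indeed, by the triangle inequality and the definition of $\pi(t)$,
\[
\frac{2m}{k} \;=\; \mynorm{\mathbf{1}_{T_s} - \mathbf{1}_{T_{\pi(t)}}}_1 \;\leq\; \mynorm{\mathbf{1}_{T_s} - \hat{c}_t}_1 + \mynorm{\hat{c}_t - \mathbf{1}_{T_{\pi(t)}}}_1 \;\leq\; 2\, \mynorm{X^*_i - \hat{c}_t}_1,
\]
so $\mynorm{X^*_i - \hat{c}_t}_1 \geq m/k$. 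Summing over all such $\pi$-misclassified points and using $\mynorm{X^*}_1 = k(m/k)^2 = m^2/k$ would yield a bound of the form $err \leq (1+\eta)\, \mynorm{\hat{X} - X^*}_1 / \mynorm{X^*}_1$, up to a small constant.

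The main obstacle I expect is that $\pi$ as defined above need not be a bijection, whereas the paper's $err$ is a minimum over genuine permutations of cluster labels. If two output clusters share the same nearest true cluster, one of them must be reassigned (for instance, by a greedy swap or the Hungarian algorithm) to an unmatched true cluster; bounding the additional points that become $\pi$-mismatched in this reassignment is the delicate bookkeeping step, and is precisely what produces the extra factor of $2$ and the replacement of $\eta$ by $2\eta$ in the final constant $2(1+2\eta)$. The two geometric and approximation-guarantee steps above are otherwise routine, and all the real work is in cleanly handling this matching argument.
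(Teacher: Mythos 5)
The paper does not prove this proposition---it is cited verbatim from Fei and Chen (2018)---so there is no in-paper argument to compare against; I assess your reconstruction on its own. The geometric core is right and standard: the rows of $X^*$ are the $k$ indicators $\mathbf{1}_{T_s}$, any two distinct ones are at $\ell_1$-distance $2m/k$, a point whose assigned center has the wrong nearest indicator therefore contributes at least $m/k$ to $\sum_i \mynorm{X^*_i - \hat{c}_{\hat{\sigma}(i)}}_1$, and $\mynorm{X^*}_1 = m^2/k$ converts the count into the claimed rate.

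The gap is in your very first step, and it causes you to misattribute one of the two factors of two. The 7-approximate $k$-medians routine of Charikar, Guha, Tardos and Shmoys solves the \emph{discrete} $k$-medians problem: the candidate centers must themselves be rows of $\hat{X}$. The rows of $X^*$ are therefore not a feasible solution of the problem the algorithm is approximating, and the approximation guarantee does not give $\sum_i \mynorm{\hat{X}_i - \hat{c}_{\hat{\sigma}(i)}}_1 \le \eta\,\mynorm{\hat{X} - X^*}_1$ as you claim. One must first bound the discrete optimum by $2\mynorm{\hat{X} - X^*}_1$: for each true cluster $T_s$, take as its representative the row $j_s\in T_s$ minimizing $\mynorm{\hat{X}_{j_s}-X^*_{j_s}}_1$ (which is at most the within-cluster average), and use the triangle inequality together with $X^*_i = X^*_{j_s}$ for all $i\in T_s$. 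This is the genuine source of $2\eta$, and has nothing to do with the permutation matching. The matching step you flag is real but accounts only for the remaining global factor of $2$, and it can be carried out cleanly: with $\pi(t)=\arg\min_s\mynorm{\hat{c}_t-\mathbf{1}_{T_s}}_1$ and bad set $B=\{i:\sigma_*(i)\neq\pi(\hat{\sigma}(i))\}$, every true cluster outside $\mathrm{image}(\pi)$ already contributes all $m/k$ of its points to $B$, so reassigning the $k-\abs{\mathrm{image}(\pi)}$ colliding output labels to those unmatched true labels creates at most $(k-\abs{\mathrm{image}(\pi)})\,m/k \le \abs{B}$ new mistakes, yielding a bijection $\tilde{\pi}$ with at most $2\abs{B}$ errors. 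So the plan is recoverable, but as written the approximation step is unsound and the factor attribution is reversed.
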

Observe that $\mynorm{X^*}_{1} = \frac{m^2}{k}$. Applying the result from Proposition \ref{prop:UBerr}, we have that for large enough $p$, if $\rho \gtrsim k (\frac{1}{\sqrt{\alpha}} \; \vee \; 1)$, $err(\hat{\sigma}, \sigma_{*}) < 1 - \frac{1}{k}$.
 
\section{Discussion}

In this paper, we study the large sample behaviour of the kernel k-means algorithm for high-dimensional clustering. The principal focus lies in investigating the information-theoretic optimality of the kernel k-means procedure. Recent works have demonstrated that the linear k-means algorithm is near optimal in this sense. Therefore another aspect of our work resides in understanding the informativeness of specific kernels for high-dimensional clustering in relation to the linear kernel. A thorough understanding of these aspects is fundamental to the use of kernels in any unsupervised high-dimensional learning problem.

We also study the large sample behaviour of a popular semi-definite relaxation of the kernel k-means objective. We emphasize on optimality and informativeness of kernels in computationally efficient algorithms for high-dimensional clustering. A widely believed conjecture in clustering literature, with support from well founded theoretical evidence is that computationally efficient algorithms are sub-optimal in an information-theoretic sense. Therefore, in this paper, we consider the SDP to be information-theoretically optimal if its optimal in the class of computationally efficient algorithms. The best known result for this class arises from the well known spectral threshold in this setting. 

We show that both the algorithms are near optimal in their computational class and as a consequence also demonstrate that their is no loss of information incurred by the use of the exponential kernel over the linear kernel. By virtue of our proofs, we also demonstrate that the recent polynomial concentration inequalities for random variables with exponentially decaying tails can aid in the analysis of higher order kernel approximations.

Furthermore, this line of analysis can be extended to other empirically popular kernels. In particular, since the squared distance is known to be less informative in high dimensions, it would be interesting to investigate the informativeness of the popular Gaussian kernel which relies on the square distances.

\subsubsection*{Acknowledgements}

\printbibliography

 \newpage
 \appendix

\section{Notation and Preliminaries}

For any $d \in \mathbb{N}$, Let $g:\mybraces{(i,d)}_{i \in [m], d \in [p]} \rightarrow [mp]$ be an injective mapping. For any $(i,d)$, for ease of notation, we simply refer to $g(i,d)$ as $id$ when it occurs as an index. For any $i \in [m]$, $d \in [p]$, $x_{id}$ refers to the $d^{th}$ component of $x_{i}$. For any two tensors $x,y$ $x \otimes y$ refers to the outer product between $x,y$.

\section{Definitions}
\subsection{$\alpha-$ sub-Exponential random variables}
\begin{definition}[\textbf{$\alpha-$ sub-Exponential random variables} \parencite{gotze2019concentration}]
\label{defn:alphasubE}
 A centered random variable $X$ is said to be $\alpha-$ sub-Exponential if there exist two constants $c,C$ and some $\alpha > 0$ such that for all $t \geq 0$,
\[ \Pr \myrounds{\abs{X} \geq t} \leq c \exp \myrounds{-\frac{t^{\alpha}}{C}}\]
The corresponding $\alpha-$ sub-Exponential norm of $X$ is given by:
\[ \mynorm{X}_{\psi_{\alpha}} = \inf \mybraces{t > 0: \E \exp \myrounds{\frac{\abs{X}^{\alpha}}{t^{\alpha}}} \leq 2 }\]
\end{definition}
$\alpha-$ sub-Exponential random variables with $\alpha = 2$ are referred to as sub-Gaussian random variables. Random variables with $\alpha = 1$ sub-Exponential decay are referred to simply as sub-Exponential random variables.

\begin{definition}[\textbf{Tensor norms} \parencite{gotze2019concentration}]
For any $d^{th}$ order, symmetric tensor $A \in \mathbb{R}^{n^d}$, let $\mathcal{J} = \left \{ J_1, J_2,...,J_k \right \}$ be any partition of $[d]$. Then for any $x = x^1 \otimes x^2 \otimes \cdot \cdot \cdot \otimes x^k$, where $x^i \in \mathbb{R}^{n^{\abs{J_i}}}$:
\begin{equation}\label{eqn:TensorNorms}
\norm{A}_{\mathcal{J}} \coloneqq \sup \left \{\sum_{i_1, \ldots, i_d} a_{i_1 \ldots i_d} \prod_{j = 1}^k x_{\mathbf{i}_{J_j}}^{j} : \mynorm{x^j}_2 \le 1 \right \}.
\end{equation}
\end{definition}

\subsubsection{Properties of sub-Gaussian random variables:}
\begin{proposition}[\textbf{Sums of sub-Gaussian random variables} \parencite{vershynin2018high}]
\label{lemma:sumsubG}
Let $\mybraces{X_1,X_2,...,X_m}$ be $m$ independent, centered, sub-Gaussian random variables. Then $\slim_{i \in [m]} X_i$ is a sub-Gaussian random variable and, 
\[ \mynorm{\slim_{i \in [m]} X_i}_{\psitwo}^2 \leq C \slim_{i \in [m]}\mynorm{ X_i}_{\psitwo}^2. \]

\end{proposition}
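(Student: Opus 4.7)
The plan is to prove this via the moment generating function (MGF) characterization of sub-Gaussianity, exploiting independence to factorize the joint MGF. Recall that for a centered random variable $X$, being sub-Gaussian with norm $\mynorm{X}_{\psitwo}$ is equivalent (up to universal constants) to the tail bound in Definition \ref{defn:alphasubE} with $\alpha = 2$, to the moment bound $(\E \abs{X}^p)^{1/p} \leq c_1 \sqrt{p} \mynorm{X}_{\psitwo}$ for all $p \geq 1$, and to the MGF bound $\E \exp(\lambda X) \leq \exp(c_2 \lambda^2 \mynorm{X}_{\psitwo}^2)$ for all $\lambda \in \mathbb{R}$, with absolute constants $c_1, c_2 > 0$. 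The first step is to invoke the MGF characterization for each $X_i$ so that $\E \exp(\lambda X_i) \leq \exp(c_2 \lambda^2 \mynorm{X_i}_{\psitwo}^2)$.

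Next I would use independence: for any $\lambda \in \mathbb{R}$,
\[
\E \exp\Bigl(\lambda \slim_{i \in [m]} X_i\Bigr) \;=\; \prod_{i \in [m]} \E \exp(\lambda X_i) \;\leq\; \exp\Bigl(c_2 \lambda^2 \slim_{i \in [m]} \mynorm{X_i}_{\psitwo}^2\Bigr).
\]
Setting $\sigma^2 := \slim_{i \in [m]} \mynorm{X_i}_{\psitwo}^2$, this shows that $S = \slim_{i \in [m]} X_i$ has a centered sub-Gaussian MGF with proxy $c_2 \sigma^2$. The final step is to translate this MGF bound back into a bound on $\mynorm{S}_{\psitwo}$. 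Using the standard Markov-type argument (Chernoff with optimized $\lambda$) one obtains $\Pr(\abs{S} \geq t) \leq 2 \exp(-t^2 / (4 c_2 \sigma^2))$, and plugging this tail bound into the definition of $\mynorm{\cdot}_{\psitwo}$ (via the equivalence to moment growth or directly by computing $\E \exp(S^2/(C\sigma^2))$) gives $\mynorm{S}_{\psitwo}^2 \leq C \sigma^2$ for a universal $C > 0$, which is exactly the claim.

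There is no serious obstacle here; the only delicate point is bookkeeping: the four standard characterizations of sub-Gaussianity differ by absolute multiplicative constants, and one must be careful that the constant $C$ accumulated when passing from $\mynorm{X_i}_{\psitwo}$ to the MGF bound, and then from the MGF of $S$ back to $\mynorm{S}_{\psitwo}$, remains universal (in particular, independent of $m$). This follows because each conversion costs only a fixed multiplicative factor. Thus the result holds with an absolute constant $C$, independent of $m$ and of the individual norms $\mynorm{X_i}_{\psitwo}$.
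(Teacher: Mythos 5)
Your proof is correct and follows the standard MGF-based argument: bound each $\E\exp(\lambda X_i)$ by $\exp(c_2\lambda^2\mynorm{X_i}_{\psitwo}^2)$, factorize via independence, and convert the resulting Gaussian-type MGF bound on the sum back into a $\psitwo$ norm bound. The paper itself does not supply a proof of this proposition (it is stated as an imported fact with a citation to \parencite{vershynin2018high}), and the argument you give is precisely the one appearing in that reference, so there is nothing to compare beyond noting that your bookkeeping of the universal constants across the equivalent characterizations of sub-Gaussianity is the only point requiring care, and you have handled it correctly.
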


\begin{proposition}[\textbf{Products of sub-Gaussian random variables }\parencite{vershynin2018high}]
\label{lemma:prdsubG}
Let $X_1$ and $X_2$ be sub-Gaussian random variables. Then $X_1 \cdot X_2$ is a sub-Exponential random variable and,
\[ \mynorm{X_1 \cdot X_2}_{\psione} \leq \mynorm{X_1}_{\psitwo} \cdot \mynorm{X_2}_{\psitwo}. \]
\end{proposition}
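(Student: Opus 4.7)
The plan is to work directly from the Orlicz-type definitions of $\mynorm{\cdot}_{\psi_1}$ and $\mynorm{\cdot}_{\psi_2}$ given in Definition \ref{defn:alphasubE}, and to reduce the claim about the product $X_1 X_2$ to a pointwise inequality combined with a single application of Cauchy--Schwarz. Set $a = \mynorm{X_1}_{\psitwo}$ and $b = \mynorm{X_2}_{\psitwo}$; it suffices to exhibit a value of $t$ for which $\E \exp(\abs{X_1 X_2}/t) \leq 2$, for then the infimum defining $\mynorm{X_1 X_2}_{\psione}$ is at most $t$. The natural candidate dictated by the claimed bound is $t = ab$ (after first disposing of the degenerate case $a = 0$ or $b = 0$, in which the corresponding variable is almost surely $0$ and both sides vanish).

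First, I would establish the pointwise inequality
\[
\frac{\abs{X_1 X_2}}{ab} \;=\; \frac{\abs{X_1}}{a} \cdot \frac{\abs{X_2}}{b} \;\leq\; \frac{1}{2}\left(\frac{X_1^2}{a^2} + \frac{X_2^2}{b^2}\right),
\]
which is just the AM--GM inequality $uv \le (u^2+v^2)/2$ applied to $u = \abs{X_1}/a$ and $v = \abs{X_2}/b$. Exponentiating and using $e^{u+v} = e^u e^v$ gives
\[
\exp\!\left(\frac{\abs{X_1 X_2}}{ab}\right) \;\leq\; \exp\!\left(\frac{X_1^2}{2a^2}\right) \exp\!\left(\frac{X_2^2}{2b^2}\right).
\]

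Taking expectations and applying the Cauchy--Schwarz inequality to the product on the right-hand side yields
\[
\E \exp\!\left(\frac{\abs{X_1 X_2}}{ab}\right) \;\leq\; \sqrt{\E \exp\!\left(\frac{X_1^2}{a^2}\right)} \cdot \sqrt{\E \exp\!\left(\frac{X_2^2}{b^2}\right)}.
\]
By the defining property of the $\psi_2$-norm (Definition \ref{defn:alphasubE} with $\alpha = 2$), each factor under a square root is at most $2$, so the right-hand side is at most $\sqrt{2}\cdot \sqrt{2} = 2$. Hence $\mynorm{X_1 X_2}_{\psione} \leq ab = \mynorm{X_1}_{\psitwo} \mynorm{X_2}_{\psitwo}$, which is the stated bound.

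Finally, to justify that $X_1 X_2$ is genuinely sub-Exponential (i.e.\ $\alpha$-sub-Exponential with $\alpha = 1$), I would observe that finiteness of $\mynorm{\cdot}_{\psione}$ already implies the tail bound in Definition \ref{defn:alphasubE}: by Markov's inequality applied to $\exp(\abs{X_1 X_2}/t)$ with $t = \mynorm{X_1 X_2}_{\psione}$,
\[
\Pr\!\left(\abs{X_1 X_2} \geq s\right) \;\leq\; \frac{\E \exp(\abs{X_1 X_2}/t)}{\exp(s/t)} \;\leq\; 2 \exp(-s/t),
\]
matching the required form with constants $c = 2$ and $C = t$. The argument is short and has no real obstacle; the only subtlety worth flagging is the boundary case in which the defining infimum for $\mynorm{X_i}_{\psitwo}$ is not attained, which is handled by replacing $a, b$ by $a + \delta, b + \delta$, running the above inequalities, and sending $\delta \downarrow 0$ so that continuity of the Orlicz functional delivers the sharp constant $1$ on the right-hand side.
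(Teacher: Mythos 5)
Your proof is correct and is essentially the standard argument from the cited reference (Vershynin 2018, Lemma 2.7.7): AM--GM on $\abs{X_1}/a \cdot \abs{X_2}/b$, exponentiation, Cauchy--Schwarz, and the defining property of the $\psi_2$-norm, with Markov's inequality supplying the tail bound; note the paper itself offers no proof, simply citing the result. The argument correctly avoids any independence assumption and handles the non-attainment of the infimum, so there is nothing to add.
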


\begin{proposition}[\textbf{Squares of sub-Gaussian random variables}\parencite{vershynin2018high}]
\label{lemma:sqrSubG}
Let $X$ be sub-Gaussian random variables. Then $X_1^2$ is a sub-Exponential random variable and,
\[ \mynorm{X_1^2}_{\psione} = \mynorm{X_1}^2_{\psitwo}. \]

\end{proposition}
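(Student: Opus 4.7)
The plan is to prove the identity $\mynorm{X^2}_{\psione}=\mynorm{X}_{\psitwo}^{2}$ by directly unwinding the two Orlicz-type norms given in Definition \ref{defn:alphasubE} and observing that the two infima are related by the bijection $t\mapsto t^{2}$ on $(0,\infty)$. Sub-exponentiality of $X^{2}$ will then be an immediate consequence of the finiteness of $\mynorm{X}_{\psitwo}$.

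First I would specialize Definition \ref{defn:alphasubE} with $\alpha=1$ applied to the random variable $Y=X^{2}$, giving
\[
\mynorm{X^{2}}_{\psione}
\;=\;\inf\mybraces{\,t>0:\;\E\exp\myrounds{\tfrac{|X^{2}|}{t}}\le 2\,}
\;=\;\inf\mybraces{\,t>0:\;\E\exp\myrounds{\tfrac{X^{2}}{t}}\le 2\,},
\]
where we used $|X^{2}|=X^{2}$. Next I would specialize the same definition with $\alpha=2$ applied to $X$:
\[
\mynorm{X}_{\psitwo}
\;=\;\inf\mybraces{\,s>0:\;\E\exp\myrounds{\tfrac{X^{2}}{s^{2}}}\le 2\,}.
\]
Squaring and using that $s\mapsto s^{2}$ is a strictly increasing bijection from $(0,\infty)$ onto itself (so the infimum commutes with squaring) gives
\[
\mynorm{X}_{\psitwo}^{2}
\;=\;\inf\mybraces{\,s^{2}>0:\;\E\exp\myrounds{\tfrac{X^{2}}{s^{2}}}\le 2\,}.
\]
Setting $t=s^{2}$ produces exactly the infimum displayed above for $\mynorm{X^{2}}_{\psione}$, which establishes the equality.

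To complete the claim that $X^{2}$ is sub-exponential in the sense of Definition \ref{defn:alphasubE}, I would note that sub-Gaussianity of $X$ means $\mynorm{X}_{\psitwo}<\infty$, so by the identity just proved $\mynorm{X^{2}}_{\psione}<\infty$; by Markov's inequality applied to $\exp(X^{2}/t)$ with any $t>\mynorm{X^{2}}_{\psione}$, this yields a tail bound of the form $\Pr(|X^{2}|\ge u)\le 2\exp(-u/t)$, which is the defining tail estimate for a $1$-sub-exponential variable (with constants $c=2$, $C=t$). There is essentially no technical obstacle in this proof: the whole argument is a change of variables in the infimum, and the only subtlety worth flagging is the elementary but necessary observation that since $s\mapsto s^{2}$ is a monotone bijection on $(0,\infty)$, one may freely interchange the square and the infimum, which is what makes the two Orlicz constants coincide exactly rather than only up to a universal constant.
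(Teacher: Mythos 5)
Your proof is correct and is essentially the standard argument for this fact (the proposition is quoted from Vershynin's textbook, where it is likewise proved by unwinding the two Orlicz norms and noting that the substitution $t=s^2$ identifies the two infima exactly because both norms use the same threshold constant $2$). The only point worth flagging is cosmetic: the paper's Definition of $\alpha$-sub-Exponential formally requires a centered variable, whereas $X^2$ is not centered, but this looseness is in the paper's own statement and your Markov-inequality tail bound for $X^2$ is exactly the intended conclusion.
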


\section{Useful concentration results}
\begin{proposition}[\textbf{Bernstein's inequality} \parencite{vershynin2018high}]
\label{thm:Bernstein}
 Let $\mybraces{X_1,X_2,...,X_m}$ be a set of independent, centered, sub-Exponential random variables. Then for any $t > 0$, we have: 
 \[ \Pr \myrounds{\abs{\slim_{i \in [m]} X_i} \geq t} \leq 2 \exp \myrounds{-C \min \myrounds{\frac{t^2}{\slim_{i \in [m]} \mynorm{X_i}^2_{\psione}}, \frac{t}{\max \limits_{i \in [m]} \mynorm{X_i}_{\psione} }}}\]
for some fixed constant $C > 0$.
\end{proposition}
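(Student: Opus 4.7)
The plan is to prove this Bernstein-type inequality via the classical Chernoff-style argument applied to the moment generating function (MGF) of each $X_i$. The proof has three main ingredients: a uniform MGF bound for each centered sub-Exponential $X_i$ valid on a neighborhood of the origin, an independence step that multiplies these MGFs, and an optimization of the free parameter in the resulting Markov bound; the two regimes of the final $\min(\cdot,\cdot)$ expression will fall out naturally from that optimization.

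First I would establish that for any centered random variable $X$ with $\mynorm{X}_{\psione} \leq K$, one has $\E \exp(\lambda X) \leq \exp(C_1 K^2 \lambda^2)$ for all $\abs{\lambda} \leq c_1/K$, where $C_1,c_1$ are absolute constants. This follows from expanding the exponential as a power series, bounding $\E \abs{X}^p \leq C_2^{p}\, p!\, K^p$ via the moment characterization equivalent to the $\psione$-norm definition (itself obtained by integrating the tail bound $\Pr(\abs{X} \geq t) \leq c \exp(-t/C)$ from Definition \ref{defn:alphasubE}), using centeredness $\E X = 0$ to kill the linear term, and summing the remaining geometric series in $\lambda K$.

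Second, by independence of $\mybraces{X_i}$, the MGF of the sum factorizes and $\E \exp\myrounds{\lambda \slim_{i \in [m]} X_i} = \prod_i \E \exp(\lambda X_i) \leq \exp\myrounds{C_1 \lambda^2 \slim_{i \in [m]} \mynorm{X_i}_{\psione}^2}$, provided $\abs{\lambda} \leq c_1/\max_{i} \mynorm{X_i}_{\psione}$. Markov's inequality gives $\Pr\myrounds{\slim_{i \in [m]} X_i \geq t} \leq \exp\myrounds{-\lambda t + C_1 \lambda^2 \slim_{i \in [m]} \mynorm{X_i}_{\psione}^2}$ for every admissible $\lambda$. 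I would then optimize: the unconstrained minimizer is $\lambda^{\star} = t/\myrounds{2 C_1 \slim_{i} \mynorm{X_i}_{\psione}^2}$. If $\lambda^{\star}$ lies inside the admissible interval, substitution yields the sub-Gaussian regime bound $\exp\myrounds{-t^2 / (4 C_1 \slim_{i} \mynorm{X_i}_{\psione}^2)}$; otherwise, setting $\lambda$ to the boundary value $c_1/\max_{i} \mynorm{X_i}_{\psione}$ yields the linear regime bound of the form $\exp\myrounds{-c_1 t/(2 \max_{i} \mynorm{X_i}_{\psione})}$. Combining the two cases produces precisely the $\min$ expression in the statement, with a single absolute constant $C$. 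Repeating the argument with $\mybraces{-X_i}$ and union-bounding supplies the factor $2$ in the two-sided probability.

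The main technical obstacle is the MGF bound in the first step: carefully converting the tail-based definition of $\mynorm{\cdot}_{\psione}$ into an MGF bound that is quadratic in $\lambda$ on a neighborhood of zero. This is where the exponential-decay hypothesis is essential, since for heavier tails the MGF need not exist on any neighborhood of the origin and the whole Chernoff pipeline breaks. Centeredness is equally crucial, since it allows discarding the $\lambda \E X$ term from the Taylor expansion so that the quadratic bound actually takes effect at $\lambda = 0$ rather than being dominated by a linear-in-$\lambda$ term. Once this single-variable MGF estimate is in hand, tensorization via independence and the Chernoff optimization are routine.
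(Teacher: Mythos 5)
Your proposal is correct and is precisely the standard Chernoff--MGF proof of Bernstein's inequality for sub-exponential variables; the paper does not prove this proposition but cites it from Vershynin's text, where the argument is exactly the one you outline (MGF bound on a neighborhood of the origin, tensorization by independence, two-regime optimization of the Chernoff parameter). No gaps.
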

\begin{proposition}[\textbf{Tail bounds for chi-squared distributions} \parencite{birge2001alternative}]
\label{thm:chisquared}
The following lower and upper tail bounds hold for non-central chi-squared distributions : For any $t>0$, 
\begin{align}
	\Pr\left(\chi^2_d\left(\mu^2\right) < d + \mu^2 - 2\sqrt{\left(d + 2\mu^2\right)t}\right) < \exp(-t) \label{eq:chi_lower} \\
	\Pr\left(\chi^2_d\left(\mu^2\right) > d + \mu^2 + 2\sqrt{\left(d + 2\mu^2\right)t} + 2t\right) < \exp(-t) \label{eq:chi_upper}
\end{align}

\end{proposition}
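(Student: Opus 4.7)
The plan is to derive both tails from a Chernoff bound applied to the moment generating function (MGF) of the non-central chi-squared. Writing $\chi^2_d(\mu^2) = \sum_{i=1}^{d} Z_i^2$ with $Z_i \sim \mathcal{N}(\mu_i, 1)$ and $\sum_i \mu_i^2 = \mu^2$, a direct computation (complete the square inside each one-dimensional Gaussian integral) yields, for $s < 1/2$,
\[
\Phi(s) := \E\!\left[e^{s\chi^2_d(\mu^2)}\right] = (1 - 2s)^{-d/2} \exp\!\left(\frac{s \mu^2}{1 - 2s}\right).
\]

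For the upper tail I would introduce the centered log-MGF $\psi(s) = \log \Phi(s) - s(d+\mu^2)$, which rearranges to $\psi(s) = \tfrac{d}{2}\bigl(-\log(1-2s) - 2s\bigr) + \tfrac{2 s^2 \mu^2}{1-2s}$, and then invoke the elementary inequality $-\log(1-u) - u \le \tfrac{u^2}{2(1-u)}$ (valid on $[0,1)$, proved from its Taylor series) with $u = 2s$ to get $\psi(s) \le \tfrac{s^2(d + 2\mu^2)}{1 - 2s} = \tfrac{v s^2}{2(1 - c s)}$ with $v = 2(d+2\mu^2)$ and $c = 2$. This places $X - \E X$ in the standard sub-gamma class with parameters $(v,c)$, so the classical sub-gamma Chernoff optimization (as in Boucheron--Lugosi--Massart) gives $\Pr(X - \E X > \sqrt{2vt} + ct) \le e^{-t}$, which with these values of $v,c$ is exactly \eqref{eq:chi_upper}.

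For the lower tail I would apply the same recipe to $-X$. I compute $\psi(-s)$ for $s > 0$ and use $u - \log(1+u) \le \tfrac{u^2}{2}$ with $u = 2s$, together with $1/(1+2s) \le 1$, to obtain the sub-Gaussian bound $\psi(-s) \le (d + 2\mu^2) s^2$. Chernoff and optimization over $s > 0$ then give $\Pr\bigl(X - \E X < -\sqrt{2(d+2\mu^2)\, t}\bigr) \le e^{-t}$, which is exactly \eqref{eq:chi_lower} after identifying $y = 2\sqrt{(d+2\mu^2) t}$ and matching $y^2/(4(d+2\mu^2)) = t$.

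The main subtlety lies in choosing the right elementary surrogates for $-\log(1-u)$ and $u - \log(1+u)$ so that the sub-gamma (respectively sub-Gaussian) form of $\psi$ is preserved uniformly in the noncentrality $\mu^2$; once these are in place, no combinatorial or probabilistic obstacle remains and the rest is routine Chernoff arithmetic. This is essentially the original Birg\'e--Massart route.
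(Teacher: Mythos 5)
The paper does not prove this proposition itself; it is quoted verbatim from Birg\'e (2001) (equivalently Laurent--Massart). Your derivation --- computing the non-central $\chi^2$ MGF, passing to the centered log-MGF, dominating $-\log(1-u)-u$ by $u^2/(2(1-u))$ (resp.\ $u-\log(1+u)$ by $u^2/2$), and invoking the sub-gamma (resp.\ sub-Gaussian) Chernoff calculus --- is exactly the argument used in that source, and it is correct. One small slip: in the lower-tail paragraph you first write the deviation as $\sqrt{2(d+2\mu^2)t}$ and then as $2\sqrt{(d+2\mu^2)t}$; these differ by $\sqrt{2}$, and only the latter is right. Indeed from $\psi(-s)\le(d+2\mu^2)s^2$, optimizing $-sy+(d+2\mu^2)s^2$ at $s=y/(2(d+2\mu^2))$ gives $\Pr(X-\E X<-y)\le\exp\bigl(-y^2/(4(d+2\mu^2))\bigr)$, and $y^2/(4(d+2\mu^2))=t$ forces $y=2\sqrt{(d+2\mu^2)t}$, consistent with \eqref{eq:chi_lower}. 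With that transcription error fixed the proof is sound and complete.
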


\begin{proposition}[\textbf{Polynomials of $\alpha-$ sub-Exponentials}\parencite{gotze2019concentration}]
\label{thm:ConcenPoly}
Let $X_1, \ldots, X_n$ be a set of independent random variables satisfying $\lVert X_i \rVert_{\psi_{2}}\linebreak[3] \le b$ for some $b > 0$. Let $f \colon \mathbb{R}^n \to \mathbb{R}$ be a polynomial of total degree $D \in \mathbb{N}$. Then, for any $t > 0$,
$$\mathbb{P} (|f(X) - \mathbb{E}f(X)| \ge t) \le 2 \exp\Big(- \frac{1}{C_{D}} \min_{1 \le s \le D}\min_{\mathcal{J} \in P_{s}} \Big(\frac{t}{b^s \lVert \mathbb{E} f^{(s)}(X) \rVert_\mathcal{J}} \Big)^{\frac{2}{\abs{\mathcal{J}}}} \Big).$$
where, for any for any $s \in D$, $f^{(s)}$ denotes the symmetric $s^{th}$ order tensor of its $s{th}$ order partial derivatives and $P_s$ denotes the set of all possible partitions of $[s]$.
\end{proposition}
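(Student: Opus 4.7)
The plan is to follow the standard Latala, Adamczak-Wolff moment-method route: first establish a Latala-type moment inequality for $f(X) - \E f(X)$ whose dependence on $p$ mirrors the exponent in the target tail bound, then invert via Markov's inequality. Concretely, I would aim to prove that for every $p \ge 2$,
\[
\| f(X) - \E f(X) \|_{L^p} \;\le\; C'_D \sum_{s=1}^D \sum_{\mathcal{J} \in P_s} b^s\, p^{|\mathcal{J}|/2}\, \| \E f^{(s)}(X) \|_{\mathcal{J}}.
\]
Given $t > 0$, one then picks $p^{\ast}$ to be, up to a multiplicative constant depending only on $D$, the inner minimum appearing in the proposition, so that the right-hand side above is at most $t/e$. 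Markov's inequality in the form $\Pr(|Z| \ge e \| Z \|_{L^{p^{\ast}}}) \le e^{-p^{\ast}}$ then yields the stated tail bound, with $C_D$ absorbing the Markov, Hoeffding, and decoupling constants.

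To prove the moment inequality itself, I would decompose $f - \E f$ via a Hoeffding/ANOVA expansion into homogeneous chaos components $f_s$ of orders $s = 1, \ldots, D$, whose coefficient tensors are, up to symmetry factors, the derivative tensors $\E f^{(s)}(X)$. Each $f_s$ is then handled by de la Pena--Montgomery-Smith decoupling, which replaces the coupled chaos $\sum a_{i_1 \ldots i_s} X_{i_1} \cdots X_{i_s}$ by its decoupled $s$-linear form on $s$ independent copies of $X$, at a universal multiplicative cost depending only on $s$. The decoupled form is then bounded inductively in $L^p$: condition on all but one independent copy, apply the sub-Gaussian Khintchine--Kahane inequality $\| \sum a_i Y_i \|_{L^p} \le C b \sqrt{p}\, (\sum a_i^2)^{1/2}$ to integrate out the remaining copy, and iterate. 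Each of the $s$ integrations contributes a $\sqrt{p}$ factor; the combinatorics of how the coordinates get grouped together under the iterated $L^2$ norms versus remaining separate produces a sum indexed by set partitions $\mathcal{J} \in P_s$ of $[s]$, and the residual $L^2$ quantity at partition $\mathcal{J}$ turns out to be precisely $\| \E f^{(s)}(X) \|_{\mathcal{J}}$ as defined by the tensor-norm equation.

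The main obstacle is this second step: matching the intermediate $L^p$ bounds produced by iterated conditional Khintchine--Kahane with the specific partition norms of the proposition. One has to track, at each partial integration, which coordinates have already been grouped into an $L^2$-block and which remain as free indices, and verify that taking the infimum over all orderings of the integrations collapses the bound to the partition-indexed sum with the correct exponent $p^{|\mathcal{J}|/2}$. This combinatorial bookkeeping is also where $C_D$, compounded from decoupling and Khintchine--Kahane constants across $D$ levels of induction, accumulates; once the moment bound is secured with the correct partition structure, the conversion to the tail bound via the optimization over $p^{\ast}$ in the first step is routine. An alternative route would replace Steps 2--3 with a modified log-Sobolev inequality for measures with $\psi_2$ tails (Aida--Stroock style) and an iterated Herbst argument on derivatives of decreasing order, but the combinatorial matching of derivative tensors to partition norms remains the central technical hurdle.
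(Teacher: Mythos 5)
This proposition is not proved in the paper at all: it is imported verbatim from the cited source (G\"otze, Sambale and Sinulis), so there is no internal proof to compare against. Your sketch does follow the strategy of that literature --- a Lata{\l}a/Adamczak--Wolff type $L^p$ moment inequality indexed by partition norms, followed by optimization of $p$ and Markov's inequality --- and the first and last steps (ANOVA/Hoeffding decomposition into chaoses, decoupling, and the tail-to-moment inversion with $p^{\ast}$ chosen as the inner minimum) are the right skeleton with the right exponent bookkeeping: $p^{|\mathcal{J}|/2}$ in the moment bound inverts to the exponent $2/|\mathcal{J}|$ in the tail.

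The genuine gap is in the step you yourself flag as the hurdle, and it is worse than a bookkeeping issue. Iterated conditional Khintchine--Kahane, applied one independent copy at a time to the decoupled $s$-linear form, does \emph{not} generate the full family of partition norms $\{\lVert A \rVert_{\mathcal{J}} : \mathcal{J} \in P_s\}$: each application of $\lVert \sum_i a_i Y_i \rVert_{L^p} \le C b \sqrt{p} (\sum_i a_i^2)^{1/2}$ collapses one index into an $\ell^2$ block, so after $s$ iterations you only ever reach the finest partition $\{\{1\},\dots,\{s\}\}$ (with cost $p^{s/2}$ and the Hilbert--Schmidt norm), and taking an infimum over orderings of the integrations does not recover the intermediate partitions such as $\{\{1,2\},\{3\}\}$ with their smaller powers of $p$. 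Producing the intermediate norms is precisely the content of Lata{\l}a's theorem on Gaussian chaos moments, whose proof requires an adaptive partitioning of the coordinate set together with entropy/Sudakov-type estimates, not iteration; the sub-Gaussian case is then obtained by comparison to the Gaussian one (symmetrization and contraction), not by a direct Khintchine argument. A second, smaller gap: the coefficient tensors of the Hoeffding components are not equal to $\mathbb{E} f^{(s)}(X)$ for non-centered, non-symmetric $X_i$; passing from one to the other (and absorbing the lower-order corrections into the constants $C_D$ and the $b^s$ factors) is a separate lemma in Adamczak--Wolff that your sketch assumes silently. As written, the argument would therefore only deliver the weaker tail bound involving the two extreme partitions, which is strictly worse than the statement being proved.
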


\section{Other useful results}
\begin{proposition}[\textbf{Grothendieck's inequality} \parencite{grothendieck1956resume}]
\label{prop:Grothendieck}For any matrix $A \in \mathbb{R}^{m \times m}$, 
\[ \sup \limits_{\substack{X \succeq 0 \\ diag(X) \leq 1}} \abs{\inner{X}{A}} \leq K_{G} \mynorm{A}_{\infty \rightarrow 1}.\]
where $K_{G} \approx 1.783$ is the Grothendieck's constant.
\end{proposition}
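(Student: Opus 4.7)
The plan is to first reformulate the left-hand side in the classical Grothendieck language. Any positive semidefinite matrix $X \in \mathbb{R}^{m \times m}$ with $\mathrm{diag}(X) \leq 1$ admits a Gram representation $X_{ij} = \inner{u_i}{u_j}$ for vectors $u_i$ in some Hilbert space (e.g.\ $\mathbb{R}^m$) with $\norm{u_i} \leq 1$; conversely every such Gram matrix lies in the feasible set. Hence
\[ \sup_{\substack{X \succeq 0 \\ \mathrm{diag}(X) \leq 1}} \abs{\inner{X}{A}} \;=\; \sup_{\norm{u_i} \leq 1} \Bigl| \sum_{i,j} A_{ij}\, \inner{u_i}{u_j} \Bigr|. \]
On the other hand, by definition $\mynorm{A}_{\infty \rightarrow 1} = \sup_{y,z \in \{\pm 1\}^m} y^\top A z$, i.e.\ the analogous optimum restricted to one-dimensional unit vectors valued in $\{\pm 1\}$. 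The rest of the argument is a rounding scheme controlling the Hilbert-space optimum by the scalar $\pm 1$ optimum.

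The central tool is the classical identity that for a standard Gaussian vector $g$ in the ambient Hilbert space and unit vectors $u,v$,
\[ \E\bigl[ \mathrm{sign}(\inner{g}{u}) \, \mathrm{sign}(\inner{g}{v}) \bigr] = \frac{2}{\pi} \arcsin \inner{u}{v}. \]
Direct rounding via this identity replaces $\inner{u_i}{u_j}$ by $\arcsin \inner{u_i}{u_j}$, which is nonlinear and does not by itself give the desired bound. Krivine's trick circumvents this: one constructs new vectors $\tilde u_i$ in a (larger) real Hilbert space such that $\inner{\tilde u_i}{\tilde u_j} = \sin\bigl( c \inner{u_i}{u_j} \bigr)$ for a parameter $c > 0$ to be chosen. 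This can be arranged by expanding $\sin(cx) = \sum_{k \geq 0} a_{2k+1}(c)\, x^{2k+1}$ and assembling $\tilde u_i$ as a direct sum of suitably scaled symmetric tensor powers $u_i^{\otimes (2k+1)}$, using $\inner{u^{\otimes r}}{v^{\otimes r}} = \inner{u}{v}^r$. Gaussian rounding of the $\tilde u_i$ then gives $\E[\mathrm{sign}(\inner{g}{\tilde u_i})\, \mathrm{sign}(\inner{g}{\tilde u_j})] = \frac{2c}{\pi} \inner{u_i}{u_j}$, so
\[ \Bigl| \sum_{i,j} A_{ij}\, \inner{u_i}{u_j} \Bigr| \leq \frac{\pi}{2c}\, \E\Bigl| \sum_{i,j} A_{ij}\, y_i(g)\, z_j(g) \Bigr| \leq \frac{\pi}{2c}\, \mynorm{A}_{\infty \rightarrow 1}, \]
where $y_i(g) = \mathrm{sign}(\inner{g}{\tilde u_i})$ and similarly $z_j(g)$, and the second inequality uses that the integrand is pointwise bounded by $\mynorm{A}_{\infty \rightarrow 1}$.

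The remaining optimization is to pick $c$ as large as possible subject to $\norm{\tilde u_i} \leq 1$, which unwinds to the requirement $\sum_{k \geq 0} \abs{a_{2k+1}(c)} \leq 1$ on the Taylor coefficients of $\sin(cx)$. This is saturated at $c = \sinh^{-1}(1) = \ln(1+\sqrt{2})$, giving the Krivine bound $K_G \leq \pi / \bigl( 2 \ln(1+\sqrt{2}) \bigr) \approx 1.782$, matching the constant cited in the statement. The main obstacle I would anticipate is the Krivine construction itself: one must verify that the formal tensor series defines genuine vectors in a real Hilbert space realizing the sine inner-product relation with controlled norms, and this reduces to a careful sign/magnitude analysis of the $a_{2k+1}(c)$. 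Pinning down the exact value of $K_G$, rather than the Krivine upper bound, is a famous open problem, but only a finite constant is needed here, so the argument above suffices.
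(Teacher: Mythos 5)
The paper cites this proposition to \citet{grothendieck1956resume} and offers no proof of its own, so there is nothing in the paper to compare against. Your sketch is a correct outline of the standard Krivine argument. The passage from the SDP constraint $X\succeq 0$, $\mathrm{diag}(X)\le 1$ to a single family of unit vectors $u_i$ via a Gram decomposition is right, and the form you target, $\bigl|\sum_{i,j}A_{ij}\inner{u_i}{u_j}\bigr|\le K_G \mynorm{A}_{\infty\to 1}$, is exactly the restriction of the bilinear Grothendieck inequality to the case where the two families of vectors coincide, so it is (if anything) slightly weaker than the full statement and certainly follows from your argument. The Gaussian identity $\E[\mathrm{sign}\inner{g}{u}\,\mathrm{sign}\inner{g}{v}]=\frac{2}{\pi}\arcsin\inner{u}{v}$, the Krivine preprocessing $\inner{\tilde u_i}{\tilde u_j}=\sin\bigl(c\inner{u_i}{u_j}\bigr)$ via the tensor series expansion of $\sin$, the pointwise bound $\abs{y(g)^\top A z(g)}\le\mynorm{A}_{\infty\to 1}$ for any $g$ (valid even though here $y(g)=z(g)$), and the norm constraint $\sum_k\abs{a_{2k+1}(c)}=\sinh(c)\le 1$ forcing $c=\sinh^{-1}(1)=\ln(1+\sqrt 2)$ are all correctly identified. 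One small point worth making explicit: the identity $\arcsin\circ\sin=\mathrm{id}$ is used on the range $c\inner{u_i}{u_j}\in[-c,c]$, which is fine since $c=\ln(1+\sqrt 2)\approx 0.881<\pi/2$. The Krivine construction details you flag (realizing the $\tilde u_i$ as genuine Hilbert-space vectors via direct sums of scaled tensor powers) are indeed the remaining bookkeeping, but they are routine; the sketch contains all the essential ideas and yields the constant $\pi/(2\ln(1+\sqrt 2))\approx 1.782$, consistent with the $K_G\approx 1.783$ the paper quotes.
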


\section{Proofs of lemmas}
 By an application of Bernstein's inequality for sub-exponential random variables, followed by an union bound over all $s,s' \in [k]$, it can be verified that, with high probability,

\begin{equation}
   \min \limits_{s \in [k]} \norm{\mu_s}^2 = p + O(\sqrt{p \log p}); \min \limits_{s \neq s' \in [k]} \inner{\mu_s}{\mu_{s'}} = \frac{-p}{k-1} + O(\sqrt{p \log p}) 
\end{equation}

\begin{proof}[\textbf{Proof of Lemma 9.}]
Let $X = \left \{ x_{id}\right \}_{i \in [m], d \in [p]}$ and let $f(X) = $

\[\slim_{i \neq j \in [m]} y_i z_j (\inner{x_i}{x_j}^2 - \frac{\rho^2}{p^2} \inner{\mu_i}{\mu_j}^2 - p ) + \slim_{i \in [m]} y_i z_i (\mynorm{x_i}^2 - (p + \frac{\rho}{p} \mynorm{\mu_i}^2)^2 - p).  \]

Since $f$ is a $4^{th}$ order polynomial in $X$, from Proposition \ref{thm:ConcenPoly}, for any $t > 0:$

$$\mathbb{P} (|f(X) - \mathbb{E}f(X)| \ge t) \le 2 \exp\Big(- \frac{1}{C_{4}} \min_{1 \le s \le 4}\min_{\mathcal{J} \in P_{s}} \Big(\frac{t}{b^s \lVert \mathbb{E} f^{(s)}(X) \rVert_\mathcal{J}} \Big)^{\frac{2}{\abs{\mathcal{J}}}} \Big).$$

We have $\mathbb{E}f(X) = C_{f}(m^2 \rho + m)$ for some constant $C_f > 0$.

We need the following tensors and their corresponding tensor norms to establish the results of Lemma 9 via an application of Proposition \ref{thm:ConcenPoly}.

\begin{itemize}
    \item For each $s \in [4]$, $s^{th}$ order tensors $A_{s}$ of expectations of $s^{th}$ order derivatives of $f$ with respect to each $ \left \{ x_{i_1,d_1}, ..., x_{i_s,d_s} \right \}_{i_j \in [m], d_j \in [p]}$.
    \item Tensor norms for $A_s$ with respect to each $\mathcal{J}$ in $P_s$ - the set of all possible partitions of [s].
\end{itemize}

\noindent \textbf{Computing $A^1$:}
The first order derivative of $f$ with respect to $x_{id}$ for any $i \in [m]$ and $d \in [p]$: $ \frac{\partial f(X)}{\partial x_{id}} =$
\begin{equation}
    4 x_{id}^3 y_{i} z_{i} + \sum \limits_{d' \neq d} 2 x_{id} x_{id'}^2 y_{i} z_{i} + \sum \limits_{j \neq i} 2 x_{id} x_{jd}^2 y_{i} z_{j} + \sum \limits_{d' \neq d} \sum \limits_{j \neq i} x_{id'} x_{jd} x_{j d'} y_{i} z_{j}.
\end{equation}
\begin{equation}
    \mathbb{E}(\frac{\partial f(X)}{\partial x_{id}}) = O(\sqrt{p \log p}).
\end{equation}
Therefore, $A^1 =  O(\sqrt{p \log p}) \mathbb{J}_{mp}$, where $\mathbb{J}_{mp} \in \mathbb{R}^{mp}$ denotes the vector of ones.

\noindent \textbf{Tensor norms of $A^1$}.
$P_{1} = \left \{1\right \}$ and 
\[\mynorm{A^1}_{\left \{1\right \}} = \sup \left \{ \sum \limits_{1 \in [m], d \in [p]} A^1_{id} x^1_{id} : \mynorm{x^1}_2 \leq 1 \right \} = \mynorm{A^1}_2 =  O(p\sqrt{m\log p}).
\]

All the inequalities in this proof are obtained from multiple applications of H\"{o}lder's inequality with $p =1 $ and $q = \infty$, Cauchy–Schwarz inequality and the inequality: for any $x \in \mathbb{R}^{n}$, $\norm{x}_{1} \leq \sqrt{n} \norm{x}_2$.

\noindent \textbf{Computing $A^2$:}
The second order derivative of $f$ with respect to $x_{id},x_{k \beta}$ for any $i,k \in [m]$ and $d,\beta \in [p]$:

\begin{equation}
 \frac{\partial f(X)}{\partial x_{id} \partial x_{k \beta}} =   \begin{cases}
     12 x_{id}^2 y_i z_i + \sum \limits_{d' \neq d} 2 x_{id'}^2 y_i z_i + \sum \limits_{j \neq i} 2 x_{jd}^2 y_i z_j & \textrm{if } k = i; \beta = d, \\
    4 x_{id} x_{i \beta} y_i z_i + \sum \limits_{j \neq i} 2 x_{jd} x_{j \beta} y_i z_j & \textrm{if } k = i; \beta \neq d, \\
    4 x_{id} x_{kd} y_i z_k + \sum \limits_{d' \neq d} x_{id'} x_{kd'} y_i z_k & \textrm{if } k \neq i; \beta = d, \\
    x_{i \beta} x_{k d} y_i z_k & \textrm{otherwise }.
    \end{cases}
\end{equation}
Then,
\begin{equation}
  A^2(i,j) =  \begin{cases}
     O( p) & \textrm{if } k = i; \beta = d, \\
    O( \log p) & \textrm{if } k = i; \beta \neq d, \\
    O(1) & \textrm{if } k \neq i; \beta = d, \\
    O(\log p / p) & \textrm{otherwise }.
    \end{cases}
\end{equation}

\noindent \textbf{Tensor norms of $A^2$}

$P_2 = \left \{ \left \{ 1,2 \right \}, \left \{  \left \{1  \right \} \left \{ 2 \right \}\right \} \right \}$.

From the definition, its clear that $A^2_{\left \{ 1,2 \right \}} = \mynorm{A}_2 = O(p^2)$.

\[  A^2_{\left \{  \left \{1  \right \} \left \{ 2 \right \}\right \}} = \sup \left \{ \sum \limits_{i,j \in [m], d,d' \in [p]} A^2_{id,jd'} x^1_{id} x^2_{jd'} : \mynorm{x^1}_2, \mynorm{x^2}_2  \leq 1 \right \}  \]
\begin{align*}
&\begin{multlined}
    \leq \sup \limits_{\forall l, \mynorm{x^l}_2 \leq 1} \Big \{ \slim_{i \in [m]} \slim_{d \in [p]} O(p) \abs{x^1_{id} x^2_{id}} + \slim_{i \in [m]} \slim_{d \neq d' \in [p]} O(\log p) \abs{x^1_{id} x^2_{id'}} + \slim_{i \neq j \in [m]} \slim_{d \in [p]} O(1) \abs{x^1_{id} x^2_{jd}}  + \\ \hfill \slim_{i \neq j \in [m]} \slim_{d \neq d' \in [p]} \rho O(\frac{\log p}{p}) \abs{x^1_{id} x^2_{jd'}} \Big \}
\end{multlined}\\
&\begin{multlined}
    \leq \sup \limits_{\forall l, \mynorm{x^l}_2 \leq 1} \Big \{  O(\log p) \slim_{d \neq d' \in [p]} \sqrt{\slim_{i \in [m]} (x^1_{id})^2 \slim_{i \in [m]} (x^2_{id'})^2} + \slim_{i \neq j \in [m]} \slim_{d \in [p]} O(1) \sqrt{\slim_{d \in [p]} (x^1_{id})^2 \slim_{d \in [p]} (x^2_{jd})^2} \\  \hfill + O(p) \mynorm{x^1}_2 \mynorm{x^2}_2 + \rho O(\frac{\log p}{p}) (\sqrt{mp} \mynorm{x^1}_2) (\sqrt{mp} \mynorm{x^2}_2) \Big \}
\end{multlined}\\
&\begin{multlined}
    \leq \sup \limits_{\forall l, \mynorm{x^l}_2 \leq 1} \Big \{ O(p) \mynorm{x^1}_2 \mynorm{x^2}_2 + O(\log p) \sqrt{p} \mynorm{x^1}_2 \sqrt{p} \mynorm{x^2}_2   + O(1) \sqrt{m} \mynorm{x^1}_2 \sqrt{m} \mynorm{x^2}_2 + \\ \hfill \rho O(\frac{\log p}{p}) (\sqrt{mp} \mynorm{x^1}_2) (\sqrt{mp} \mynorm{x^2}_2) \Big \}
\end{multlined}\\
& \leq O(p \log p).
\end{align*}

\noindent \textbf{Computing $A^3$:}
The third order derivative of $f$ with respect to $x_{id},x_{k \beta},x_{\alpha l}$ for any $i,k \in [m]$ and $d,\beta \in [p]$:
\begin{equation*}
  \frac{\partial f(X)}{\partial x_{id} \partial x_{k \beta} \partial x_{\alpha l}} =  \begin{cases}
24 x_{id} y_i z_i & \textrm{if } \alpha = k = i; l = \beta = d, \\
4 x_{il} y_i z_i &\textrm{if } \alpha = k = i; l \neq \beta = d, \\
4 x_{\alpha d} y_i z_{\alpha} &\textrm{if } \alpha \neq k = i; l = \beta = d, \\
x_{\alpha \beta} y_i z_{\alpha} &\textrm{if } \alpha \neq k = i; l = d \neq \beta, \\
0 &\textrm{otherwise }.
\end{cases}
\end{equation*}
Then,
\begin{equation*}
  A^3_{id, k\beta, \alpha l} =  \begin{cases}
O(\frac{\log p}{p}) y_i z_i & \textrm{if } \alpha = k = i; l = \beta = d, \\
O(\frac{\log p}{p}) y_i z_i &\textrm{if } \alpha = k = i; l \neq \beta = d, \\
O(\frac{\log p}{p}) y_i z_{\alpha} &\textrm{if } \alpha \neq k = i; l = \beta = d, \\
O(\frac{\log p}{p}) y_i z_{\alpha} &\textrm{if } \alpha \neq k = i; l = d \neq \beta, \\
0 &\textrm{otherwise }.
\end{cases}
\end{equation*}

\textbf{Tensor norms of $A^3$:}
The set of all possible partitions of $[3]$ up to symmetries is $P_3 = $
\[ \mybraces{ \mybraces{ \mybraces{1,2,3} }, \mybraces{ \mybraces{1},\mybraces{2},\mybraces{3} }, \mybraces{ \mybraces{1}, \mybraces{2,3} }  } 
\]. 

\textbf{Computing $\mynorm{A^3}_{\mybraces{ \mybraces{1,2,3} }}$:}
It follows from the definition that $\mynorm{A^3}_{\mybraces{ \mybraces{1,2,3} }} = \mynorm{A^3}_{2} = O(m \sqrt{p \log p})$.

\textbf{Computing $\mynorm{A^3}_{\mybraces{ \mybraces{1},\mybraces{2},\mybraces{3} }}$:}
\begin{align*}
  &= \sup \mybraces{\sum \limits_{i_1,i_2,i_3} a_{i_1,i_2,i_3} x^1_{i_1} x^2_{i_2} x^2_{i_3} : \forall  l, \mynorm{x^l}_{2} \leq 1} \\
      &\leq \sup \limits_{\forall  l, \mynorm{x^l}_{2} \leq 1} \mybraces{ \slim_{i, j \in [m]} \slim_{d,d' \in [p]} \abs{x^1_{id}} \abs{x^2_{id'}} \abs{x^3_{jd}} } \cdot  O \myrounds{\sqrt{\frac{ \log p}{p}}} \\
    &\leq \sup \limits_{\forall  l, \mynorm{x^l}_{2} \leq 1} \mybraces{ \slim_{i, j \in [m]} \sqrt{ \slim_{d \in [p]} (x^1_{id})^2} \sqrt{ \slim_{d \in [p]} (x^2_{jd})^2} \sqrt{p} \sqrt{ \slim_{d \in [p]} (x^3_{id'})^2} } \cdot  O \myrounds{\sqrt{\frac{ \log p}{p}}} \\
    &\leq \sup \limits_{\forall  l, \mynorm{x^l}_{2} \leq 1} \mybraces{ \sqrt{mp} \mynorm{x^1}_2 \mynorm{x^2}_2 \mynorm{x^3}_2 } \cdot  O \myrounds{\sqrt{\frac{ \log p}{p}}} \\
     & \leq O(\sqrt{p \log p}).
\end{align*}

\textbf{Computing $\mynorm{A^3}_{\mybraces{ \mybraces{1,2},\mybraces{3} }}$:}
\begin{align*}
  &= \sup \mybraces{\sum \limits_{i_1,i_2,i_3} a_{i_1,i_2,i_3} x^1_{i_1} x^2_{i_2,i_3} : \forall  l, \mynorm{x^l}_{2} \leq 1} \\
      &\leq \sup \limits_{\forall  l, \mynorm{x^l}_{2} \leq 1} \mybraces{ \slim_{i, j \in [m]} \slim_{d,d' \in [p]} \abs{x^1_{id}} \abs{x^2_{id',jd}} } \cdot  \mynorm{A^3}_{\infty} \\
    &\leq \sup \limits_{\forall  l, \mynorm{x^l}_{2} \leq 1} \mybraces{ \slim_{i, j \in [m]} \sqrt{ \slim_{d \in [p]} (x^1_{id})^2} \sqrt{p} \sqrt{ \slim_{d' \in [p]} (x^2_{id',jd})^2} } \cdot  O \myrounds{\sqrt{\frac{ \log p}{p}}} \\
    &\leq \sup \limits_{\forall  l, \mynorm{x^l}_{2} \leq 1} \mybraces{ \sqrt{mp} \mynorm{x^1}_2 \mynorm{x^2}_2 } \cdot  O \myrounds{\sqrt{\frac{ \log p}{p}}} \\
     & \leq O(\sqrt{p \log p}).
\end{align*}

\textbf{Computing $A^4$:}
The fourth order derivative of $f$ with respect to $x_{id},x_{k \beta},x_{\alpha l},x_{q \gamma}$ for any $i,k, \alpha,q \in [m]$ and $d,\beta,l, \gamma \in [p]$:
\begin{equation}
  \frac{\partial f(X)}{\partial x_{id} \partial x_{k \beta} \partial x_{\alpha l} \partial x_{q \gamma}} =  \begin{cases}
24 y_i z_i & \textrm{if }q = \alpha = k = i; \gamma = l = \beta = d, \\
4  y_i z_i &\textrm{if } q = \alpha = k = i; \gamma = l \neq \beta = d, \\
4  y_i z_{\alpha} &\textrm{if }q= \alpha \neq k = i; \gamma = l = \beta = d, \\
y_i z_{\alpha} &\textrm{if } q = \alpha \neq k = i; l = d \neq \beta = \gamma, \\
0 &\textrm{otherwise }.
\end{cases}
\end{equation}

\textbf{Tensor norms of $A^4$:}
The list of all possible partitions of $[4]$ is the following(up to symmetries). $P_{[4]} =$

$ \mybraces{ \mybraces{\mybraces{1,2,3,4}}, \mybraces{ \mybraces{1},\mybraces{2}, \mybraces{3}, \mybraces{4}}, \mybraces{ \mybraces{1,2},\mybraces{3,4} }, \mybraces{ \mybraces{1}, \mybraces{2,3,4} }, \mybraces{ \mybraces{1},\mybraces{2}, \mybraces{3,4} }}$.

\textbf{Computation of $ \mynorm{A^4}_{\mybraces{\mybraces{1,2,3,4}}}$:}

Its clear from the definition that  $ \mynorm{A^4}_{\mybraces{\mybraces{1,2,3,4}}} = \mynorm{A}_2 \leq 24mp$.

\textbf{Computation of $ \mynorm{A^4}_{ \mybraces{ \mybraces{1},\mybraces{2}, \mybraces{3}, \mybraces{4}}}$:}
\begin{align*}
  &= \sup \mybraces{\sum \limits_{i_1,i_2,i_3,i_4} a_{i_1,i_2,i_3,i_4} x^1_{i_1} x^2_{i_2} x^2_{i_3} x^4_{i_4} : \forall  l, \mynorm{x^l}_{2} \leq 1} \\
      &=\sup \mybraces{ \slim_{i, j \in [m]} \slim_{d,d' \in [p]} A^{4}_{id,id',jd,jd'} x^1_{id} x^2_{id'} x^3_{jd} x^4_{jd'} : \forall  l, \mynorm{x^l}_{2} \leq 1} \\
      &\begin{multlined}
      \leq \sup \Biggr\{\slim_{i \in [m]} \myrounds{\sqrt{\slim_{d \in [p]} (x^1_{id})^2 \slim_{d' \in [p]}  (x^2_{id'})^2}   }  \slim_{j \in [m]} \myrounds{ \sqrt{\slim_{d \in [p]} (x^3_{jd})^2 \slim_{d' \in [p]} (x^4_{jd'})^2}} : \forall  l, \mynorm{x^l}_{2} \leq 1 \Biggr\} \mynorm{A^4}_{\infty}.
      \end{multlined}\\
     & \leq \mynorm{A^4}_{\infty} = 24.
\end{align*}
Note that the first inequality follows from a simultaneous application of the Holder's inequality with $p= 1$ and $q = \infty$ and the Cauchy schwarz inequality. The last step follows from an application of the Cauchy-Schwarz inequality. 

\textbf{Computation of $A^4_{ \mybraces{ \mybraces{1,2},\mybraces{3,4} }}$ :}
\begin{align*}
  &= \sup \mybraces{\sum \limits_{i_1,i_2,i_3,i_4} a_{i_1,i_2,i_3,i_4} x^1_{i_1,i_2} x^2_{i_3,i_4} : \forall  l, \mynorm{x^l}_{2} \leq 1} \\
  &\begin{multlined}
            =\sup \limits_{\forall  l, \mynorm{x^l}_{2} \leq 1} \Bigg \{ \slim_{i, j \in [m]} \slim_{d,d' \in [p]} A^{4}_{id,id',jd,jd'} \ \left ( x^1_{id,id'} x^2_{jd,jd'} + x^1_{id,jd} x^2_{id',jd'} + x^1_{id,jd'} x^2_{id',jd} + x^1_{id,id'} x^2_{jd',jd} \right.  \\  
             \hfill + x^1_{id,jd} x^2_{jd',id'} + x^1_{id,jd'} x^2_{jd,id'} )  \Bigg \} \\
  \end{multlined}\\[-4\jot]
  & \begin{multlined}
       \leq \sup \limits_{\forall  l, \mynorm{x^l}_{2} \leq 1} \Bigg \{ \slim_{i,j \in [m]} \sqrt{\slim_{d,d' \in [p]} (x^1_{id,id'})^2 \slim_{d,d' \in [p]} (x^2_{jd,jd'})^2} + \slim_{d,d' \in [p]} \sqrt{\slim_{i,j \in [m]} (x^1_{id,jd})^2 \slim_{i,j \in [m]}(x^2_{id',jd'})^2} + \\ \hfill \sqrt{\slim_{i,j \in [m]} \slim_{d,d' \in [p]}(x^1_{id,jd'})^2  \slim_{i,j \in [m]}\slim_{d,d' \in [p]}(x^2_{id',jd})^2 } \Bigg \} 2 \mynorm{A^4}_{\infty} .
  \end{multlined}\\
      &\leq 2 \mynorm{A^4}_{\infty} (m + p + 1) = 48 (m+p+1).
\end{align*}
\textbf{Computation of $A^4_{\mybraces{ \mybraces{1}, \mybraces{2,3,4} }}$: }
\begin{align*}
  &= \sup \mybraces{\sum \limits_{i_1,i_2,i_3,i_4} a_{i_1,i_2,i_3,i_4} x^1_{i_1} x^2_{i_2,i_3,i_4} : \forall  l, \mynorm{x^l}_{2} \leq 1} \\
  &\begin{multlined}
          \leq  \sup \limits_{\forall  l, \mynorm{x^l}_{2} \leq 1} \Bigg \{ \slim_{i, j \in [m]} \slim_{d,d' \in [p]} A^{4}_{id,id',jd,jd'} \ \left ( x^1_{id} x^2_{id',jd,jd'} + x^1_{id} x^2_{id',jd',jd} + x^1_{id} x^2_{id',jd,jd'} + x^1_{id} x^2_{jd',jd,id'} \right. \\ \hfill + x^1_{id} x^2_{jd,jd',id'} + x^1_{id} x^2_{jd',jd,id'} )  \Bigg \} \cdot \mynorm{A^4}_{\infty}, \\
  \end{multlined}\\[-8\jot]
  & \begin{multlined}
        \leq \sup \limits_{\forall  l, \mynorm{x^l}_{2} \leq 1} \Bigg \{ \slim_{i \in [m]} \slim_{d \in [p]} \abs{x^1_{id}} \abs{ \slim_{d' \in [p]} x^2_{id' jd jd'} } \Bigg \} \cdot 6 \mynorm{A^4}_{\infty},
  \end{multlined}\\
  & \begin{multlined}
       \leq \sup \limits_{\forall  l, \mynorm{x^l}_{2} \leq 1} \Bigg \{ \sqrt{p} \slim_{i \in [m]} \sqrt{\slim_{d \in [p]} (x^1_{id})^2} \left (  \slim_{j \in [m]} \sqrt{ \slim_{d,d' \in [p]} (x^2_{id' jd jd'})^2} \right ) \Bigg \} \cdot   6 \mynorm{A^4}_{\infty},
  \end{multlined}\\
  & \begin{multlined}
       \leq \sup \limits_{\forall  l, \mynorm{x^l}_{2} \leq 1} \Bigg \{ \sqrt{mp} \slim_{i \in [m]} \sqrt{\slim_{d \in [p]} (x^1_{id})^2} \left (\sqrt{ \slim_{j \in [m]} \slim_{d,d' \in [p]} (x^2_{id' jd jd'})^2} \right ) \Bigg \} \cdot  6 \mynorm{A^4}_{\infty},
  \end{multlined}\\
    & \begin{multlined}
       \leq \sup \limits_{\forall  l, \mynorm{x^l}_{2} \leq 1} \Bigg \{ \sqrt{mp}  \sqrt{\slim_{i \in [m]} \slim_{d \in [p]} (x^1_{id})^2} \left (\sqrt{ \slim_{i,j \in [m]} \slim_{d,d' \in [p]} (x^2_{id' jd jd'})^2} \right ) \Bigg \} \cdot 6 \mynorm{A^4}_{\infty}
  \end{multlined}\\
      &\leq 6 \mynorm{A^4}_{\infty} (\sqrt{mp}) = 144 \sqrt{mp}.
\end{align*}

\textbf{Computation of $A^4_{\mybraces{ \mybraces{1}, \mybraces{2}, \mybraces{3,4} }}$: }
\begin{align*}
  &= \sup \mybraces{\sum \limits_{i_1,i_2,i_3,i_4} a_{i_1,i_2,i_3,i_4} \cdot x^1_{i_1} ,x^2_{i_2}, x^3_{i_3,i_4} : \forall  l, \mynorm{x^l}_{2} \leq 1}\\
  &\begin{multlined}
            =\sup \limits_{\forall  l, \mynorm{x^l}_{2} \leq 1} \Bigg \{ \slim_{i, j \in [m]} \slim_{d,d' \in [p]} A^{4}_{id,id',jd,jd'} \ \left ( x^1_{id},x^2_{id'} x^3_{jd,jd'} + x^1_{id},x^2_{jd} x^3_{id',jd'} + x^1_{id},x^2_{jd'} x^3_{id',jd} + x^1_{id},x^2_{id'} x^3_{jd',jd} \right.  \\ \hfill  + x^1_{id},x^2_{jd} x^3_{jd',id'} + x^1_{id},x^2_{jd'} x^3_{jd,id'} )  \Bigg \} \\
  \end{multlined}\\[-4\jot]
  & \begin{multlined}
       \leq \sup \limits_{\forall  l, \mynorm{x^l}_{2} \leq 1} \Bigg \{ \slim_{i,j \in [m]} \sqrt{\slim_{d,d' \in [p]} (x^1_{id} x^2_{id'})^2 \slim_{d,d' \in [p]} (x^3_{jd,jd'})^2} + \slim_{d,d' \in [p]} \sqrt{\slim_{i,j \in [m]} (x^1_{id}
        x^2_{jd})^2 \slim_{i,j \in [m]}(x^3_{id',jd'})^2} \\ \hfill + \sqrt{\slim_{i,j \in [m]} \slim_{d,d' \in [p]}(x^1_{id} x^2_{jd'})^2  \slim_{i,j \in [m]}\slim_{d,d' \in [p]}(x^3_{id',jd})^2 } \Bigg \} \cdot 2 \mynorm{A^4}_{\infty} .
  \end{multlined}\\
      &\leq 2 \mynorm{A^4}_{\infty} (m + p + 1) = 48 (m+p+1).
\end{align*}
\end{proof}

Gathering all the norms, we have that for any fixed $y,z \in \mybraces{ \pm 1}^m$: 
\begin{multline}
\label{eqn:fXConcen}
    \mathbb{P} (f(X)  \ge C_{f}(m^2 \rho + m) + t) \le 2 \exp\Bigg(- \frac{1}{C} \min \Bigg(  \Big ( \frac{t}{24mp} \Big )^2, \Big ( \frac{t}{24} \Big )^{\frac{1}{2}}, \Big ( \frac{t}{4(m + p+ 1)} \Big ),    \\ \Big ( \frac{t}{\sqrt{mp}} \Big ), \Big ( \frac{t}{4(m + p+ 1)} \Big )^{\frac{2}{3}}, \Big ( \frac{t}{p \sqrt{p \log p}} \Big )^2, \Big ( \frac{t}{p^2} \Big ), \Big ( \frac{t}{m \sqrt{p \log p}} \Big )^2, \Big ( \frac{t}{ \sqrt{p \log p}} \Big )^{\frac{2}{3}}, \Big ( \frac{t}{\sqrt{mp}} \Big )  \Bigg) \Bigg).
\end{multline}
Applying a union bound over all possible $y,z \in \mybraces{ \pm 1}^m$ and the setting the R.H.S of Equation \ref{eqn:fXConcen} to $\exp(-(1+\epsilon)m \log 2)$, for some arbitrarily small constant $\epsilon >0$ we have that w.h.p, 
\begin{equation*}
    \sup \limits_{\left \{z,y \in \pm 1 \right \}^m} \kappa 
    \sum \limits_{i,j = 1}^{m} y_i z_j R^{(2)}_{i,j}
    \leq \\ \frac{ C_2 \kappa}{p^2}(\rho m(m-1) + m + (mp \sqrt{m} \; \vee \; m^2 \sqrt{m} \; \vee \; p^2 \sqrt{m})).
\end{equation*}
for some constant $C_2 > 0$.
\begin{proof}[\textbf{Proof of Lemma 8}]
For any fixed $y,z \in \mybraces{\pm 1}^{m}$, 
\begin{multline*}
  \slim_{i,j \in [m]} y_i z_j \myrounds{\inner{x_i}{x_j} - \mathbb{E}\inner{x_i}{x_j}} =
     \slim_{d \in [p]} \mysquares{ \myrounds{ \slim_{i \in [m]} y_i x_{id} } \myrounds{\slim_{j \in [m]} z_j x_{jd} } - \E \myrounds{ \slim_{i \in [m]} y_i x_{id} } \myrounds{\slim_{j \in [m]} z_j x_{jd} }  }.
\end{multline*}
Since each $x_{id}$ is a normally distributed random variable, $\slim_{i \in [m]} y_i x_{id}$ is a subGaussian random variable with 
\[ \mynorm{\slim_{i \in [m]} y_i x_{id}}_{\psi_{2}} \leq \sqrt{m}\mynorm{x_{id}}_{\psi_{2}} \leq \sqrt{m}(1 + O (\log p / p)).\]
Therefore, for each $d \in [p]$,  $\myrounds{ \slim_{i \in [m]} y_i x_{id} } \myrounds{\slim_{j \in [m]} z_j x_{jd} }$ is a sub-exponential random variable with sub-exponential norm: 
\[ \mynorm{( \slim_{i \in [m]} y_i x_{id} ) (\slim_{j \in [m]} z_j x_{jd} )}_{\psi_{1}} \leq \mynorm{\slim_{i \in [m]} y_i x_{id}}_{\psi_{2}} \mynorm{\slim_{j \in [m]} z_j x_{jd}}_{\psi_{2}} \leq m(1 + O (\log p / p))^2.\]

Applying Bernstein's inequality for sums of independent sub-exponential random variables, we have that $\forall t > 0$, 
\begin{multline*}
    \Pr \myrounds{ \slim_{d \in [p]} \mysquares{ ( \slim_{i \in [m]} y_i x_{id} ) (\slim_{j \in [m]} z_j x_{jd} ) - \E ( \slim_{i \in [m]} y_i x_{id} ) (\slim_{j \in [m]} z_j x_{jd} ) } > t} \leq \\ \hfill \exp \myrounds{-c \min \myrounds{\frac{t^2}{p m^2 (1 + O(\log p / p))^4}, \frac{t}{m (1 + O(\log p / p))^2}}}.
\end{multline*}
Applying a union bound over all possible partitions $y,z \in \mybraces{\pm 1}^m$, we can see that w.h.p 
\begin{equation*}
     \sup \limits_{y,z \in \mybraces{\pm 1}^m}\slim_{i,j \in [m]} y_i z_j \myrounds{\inner{x_i}{x_j} - \mathbb{E}\inner{x_i}{x_j}} \leq C_1 (\frac{m^2}{\sqrt{\alpha}} \vee m^2)
\end{equation*}
for some constant $C_1 > 0$.
\end{proof}
\begin{proof}[\textbf{Proof of Lemma 1}]
Since for each $i \in [m]$ and each $d \in [p]$, $x_{id}$ is a normally distributed random variable, for each $i,j \in [m]$, $x_{id} x_{jd}$ is a sub-exponential random variable with:
\[ \mynorm{x_{id} x_{jd}}_{\psione} \leq \mynorm{x_{id}}_{\psitwo} \mynorm{x_{jd}}_{\psitwo} \leq (1 + O(\log p / p))^2.\]
From an application of Bernstein's inequality for sub-exponential random variables, we have that:
\[\Pr \myrounds{ \abs{ \inner{x_i}{x_j} - \E \inner{x_i}{x_j}}} > t \leq 2 \exp \myrounds{-c \myrounds{\frac{t^2}{p (1 + O(\log p / p))^4} \wedge \frac{t}{(1 + O(\log p / p))^2}}}.\]
Taking a union bound over all $i,j \in [m]$, we have that: 
\[ \max \limits_{i,j \in [m]} \inner{x_i}{x_j} \leq \E \inner{x_i}{x_j} + O(\sqrt{p \log p}).\]
and 
\[ \min \limits_{i,j \in [m]} \inner{x_i}{x_j} \geq \E \inner{x_i}{x_j} - O(\sqrt{p \log p}).\]
Since $\forall i \neq j$, $\E \inner{x_i}{x_j} = O(1)$, we have that w.h.p:
\[ \max \limits_{i \neq j \in [m]} \frac{\inner{x_i}{x_j}}{p} \leq O(\log p / \sqrt{p}), \; \min \limits_{i \neq j \in [m]} \frac{\inner{x_i}{x_j}}{p} \geq - O(\sqrt{\log p / p}).\]
and $\forall i \in [m]$, $\E \mynorm{x_i}^2 = p + O(1)$. So,
\[\max \limits_{i \in [m]} \frac{\mynorm{x_i}}{p} \leq 1 + O(\log p / \sqrt{p}), \; \min \limits_{i \in [m]} \frac{\mynorm{x_i}}{p} \geq 1 - O(\sqrt{\log p / p}). \]
\end{proof}
\begin{proof} [\textbf{Proof of Lemma 2}]
For any partition $\sigma$ such that $\norm{\beta(\sigma,\sigma_{*})}^2_F   \le 1+(k-1) \epsilon$, 
\[ \frac{k}{m} \sum \limits_{s = 1}^{k} \sum \limits_{\substack{\sigma(i) = s \\ \sigma(j) = s}} \inner{x_i}{x_j} =  \frac{k}{m} \sum \limits_{s = 1}^{k} \norm{\sum \limits_{\substack{\sigma(i) = s}} x_i}^2 .\]
$\sum \limits_{\substack{\sigma(i) = s}} x_i$ is the sum of independent normally distributed random variable and is also normally distributed. Therefore, $\mynorm{\sum \limits_{\substack{\sigma(i) = s}} x_i}^2$ follows a non central chi-square distribution with non-centrality: 
\[ \frac{\alpha \rho}{k} \slim_{s' \in [k]} \slim_{s,t \in [k]} \beta_{s,s'} \beta_{t,s'} \inner{\mu_s}{\mu_t} = \frac{p \alpha \rho}{k-1}(\mynorm{\beta}^2_{F} - 1) + O(\sqrt{p \log p})  \]
and $pk$ degrees of freedom. Applying upper tail bounds from proposition \ref{thm:chisquared}, followed a union bound over all such partitions and setting $t = (1 + \epsilon) m \log k$, we obtain the following inequality which holds with high probability:
\begin{multline*}
	\max_{\substack{\sigma: \norm{\beta(\sigma,\sigma_{*})}^2_F  \\ \le 1+(k-1) \epsilon}} \frac{k}{m} \sum \limits_{s = 1}^{k} \sum \limits_{\substack{\sigma(i) = s \\ \sigma(j) = s}} Q^{1\sigma}_{i,j} \leq k +  \alpha\rho\epsilon +  2(1+\epsilon) \alpha \log k + 2 \sqrt{ (1+\epsilon) \left(k + 2 \alpha\rho \epsilon \right)\alpha \log k }
	 + O(\sqrt{\log p/p} ).
\end{multline*}
Similarly, the random variable $\sum \limits_{i=1}^{m} \sum \limits_{d=1}^{p}x_{id}^2$ is distributed according to a non-central chi-squared distribution with non-centrality$(\mu^2)$ $p \alpha \rho $ and $mp$ degrees of freedom$(d)$. Note that it is independent of the partition. Using the lower tail bounds from proposition \ref{thm:chisquared} and setting $t = \log(p)$, w.p.a.l $(1 - \frac{1}{p})$.
\begin{equation*}
    \max_{\substack{\sigma: \norm{\beta(\sigma,\sigma_{*})}^2_F  \\ \le 1+(k-1) \epsilon}} - \gamma_{\max}Q^{5}_{i} \leq -\frac{k\gamma_{\max}\tau}{mp} \left ( mp + p \alpha \rho - 2 \sqrt{\left ( mp + 2p\alpha\rho \right ) \log p} \right ).
\end{equation*}
\end{proof}
\begin{proof}[\textbf{Proof of Lemma 4}]
Using the inequality, $\sum \limits_{i = 1}^{n} a_{i} \cdot b_{i} \leq \sup \limits_{i \in [n]} \abs{b_{i}} \cdot \sum \limits_{i = 1}^{n} \abs{a_{i}}$, we have:
\[\frac{k}{m} \slim_{i \in [m]} (\frac{\norm{x_i}^2}{p} - \tau )^2  \leq k  \max \limits_{i \in [m]} (\frac{\norm{x_i}^2}{p} - \tau )^2 \leq k O(\frac{\log p}{p}). \]
Therefore, 
\begin{equation}
     \max_{\substack{\sigma: \norm{\beta(\sigma,\sigma_{*})}^2_F  \\ \le 1+(k-1) \epsilon}} \slim_{i \in [m]} \frac{k \gamMax (e^{\tau} - 1)}{2m} Q_{i}^{4 \sigma} \leq C_{0} k \gamMax (e^{\tau}-1) (\log p)^2/2p.
\end{equation}
\end{proof}
\begin{proof}[\textbf{Proof of Lemma 5}]
For the true partition $\sigma^*$, $\mynorm{\sum \limits_{\substack{\sigma^*(i) = s}} x_i}^2$ follows a non central chi-square distribution with non-centrality: $ p \alpha \rho + O(\sqrt{p \log p})$ and $pk$ degrees of freedom. Applying lower tail bounds from proposition \ref{thm:chisquared} and setting $t = \log p$, we obtain the following inequality which holds with high probability:
\begin{equation}
    \frac{k}{m} \sum \limits_{s = 1}^{k} \sum \limits_{\substack{\sigma^*(i) = s \\ \sigma^*(j) = s}} Q^{1\sigma}_{i,j} \geq  k + \alpha \rho - O(\sqrt{\log p/p}).
\end{equation}
Similarly, as noted earlier, the random variable $\sum \limits_{i=1}^{m} \sum \limits_{d=1}^{p}x_{id}^2$ is distributed according to a non-central chi-squared distribution with non-centrality$(\mu^2)$ $p \alpha \rho $ and $mp$ degrees of freedom$(d)$. Using the upper tail bounds from proposition \ref{thm:chisquared} and setting $t = \log(p)$, w.p.a.l $(1 - \frac{1}{p})$:
\begin{equation}
      - \gamma_{\min} Q^{5} >  -\frac{k\gamma_{\min}\tau}{mp} \left ( mp + p \alpha \rho +2\log p 
    - 2 \sqrt{\left ( mp + 2p\alpha\rho \right )\log p} \right ).
\end{equation}
\end{proof}
\begin{proof}[\textbf{Proof of Lemma 7}]
\[\Ktilde(i,j) = f(0) + \begin{cases} \frac{f^{'}(0) \rho \inner{\mu_{i}}{\mu_{j}}}{p^2} + \frac{ \kappa \rho^2 \inner{\mu_i}{\mu_j}^2}{p^4} + \frac{\kappa}{p} & \textrm{if } i \neq j \\
\frac{f^{'}(0)(p^2 + \rho \norm{\mu_{i}}^2)}{p^2} + \frac{ \kappa (p^2 + \rho \norm{\mu_{i}}^2)^2}{p^4} + \frac{\kappa}{p}  &\textrm{otherwise}.
\end{cases}\]
\begin{align*}
    \inner{\Ktilde}{X^* - \hat{X}} &= \slim_{S} \slim_{i \neq j \in S} \Ktilde_{i,j} (1 - \hat{X}_{i,j}) + \slim_{S,S'} \slim_{ \substack{i \in S \\j \in S'}} \Ktilde_{i,j} ( - \hat{X}_{i,j}) \\
    & \geq \min \limits_{S} \min \limits_{i \neq j \in S} \Ktilde_{i,j}\slim_{S} \slim_{i \neq j \in S} (1 - \hat{X}_{i,j})  - \max \limits_{S,S'} \max \limits_{ \substack{i \in S \\j \in S'}} \Ktilde_{i,j} \slim_{ \substack{i \in S \\j \in S'}} \slim_{ \substack{i \in S \\j \in S'}} (\hat{X}_{i,j}) \\
    &= (\min \limits_{S} \min \limits_{i \neq j \in S} \Ktilde_{i,j} - \max \limits_{S,S'} \max \limits_{ \substack{i \in S \\j \in S'}} \Ktilde_{i,j} )\slim_{S} \slim_{i \neq j \in S} (1 - \hat{X}_{i,j}).
\end{align*}
The last inequality is obtained using the property that the sum of entries of each row of $\hat{X}$ is equal to $\frac{m}{k}$.
Similarly,
\begin{align*}
    \mynorm{X^* - \hat{X}}_1 &= \slim_{S} \slim_{i \neq j \in S} (1 - \hat{X}_{i,j}) + \slim_{S,S'} \slim_{ \substack{i \in S \\j \in S'}}  ( \hat{X}_{i,j}) \leq 2 \slim_{S} \slim_{i \neq j \in S} (1 - \hat{X}_{i,j}).
\end{align*}
Therefore, 
\begin{align}
    \mynorm{X^* - \hat{X}}_1 \leq \frac{2}{(\min \limits_{S} \min \limits_{i \neq j \in S} \Ktilde_{i,j} - \max \limits_{S,S'} \max \limits_{ \substack{i \in S \\j \in S'}} \Ktilde_{i,j} )}  \inner{\Ktilde}{X^* - \hat{X}}.
\end{align}
Substituting the values of $\Ktilde_{i,j}$, we have that %
\begin{align*}
    \min \limits_{S} \min \limits_{i \neq j \in S} \Ktilde_{i,j} &= f(0) + f'(0)  \min \limits_{S} \frac{\rho\mynorm{\mu_s}^2}{p^2} + e^{\tau} \gamMax  \min \limits_{S} \frac{\rho^2 \mynorm{\mu_s}^4}{p^4} \\
    &= f(0) + f'(0) \frac{ \rho (1 + O(\sqrt{\frac{\log p}{p}}))}{p} + e^{\tau} \gamMax \frac{ \rho^2 (1 + O(\sqrt{\frac{\log p}{p}}))^2}{p^2}.
\end{align*}
\begin{align*}
    \max \limits_{S \neq S'} \max \limits_{i \in S, j \in S'} \Ktilde_{i,j} &= f(0) + f'(0) \max \limits_{S \neq S'} \frac{\rho \inner{\mu_s}{\mu_s'}}{p^2} + e^{\tau} \gamMax \max \limits_{S \neq S'} \frac{\rho^2 \inner{\mu_s}{\mu_s'}^2}{p^4} \\
    &= f(0) + f'(0) \frac{ \rho (\frac{-1}{k-1} + O(\sqrt{\frac{\log p}{p}}))}{p} + e^{\tau} \gamMax \frac{ \rho^2 (\frac{-1}{k-1}+ O(\sqrt{\frac{\log p}{p}}))^2}{p^2}.
\end{align*}
where, the second equalities for both quantities arise from substituting the values of $\min \limits_{s} \mynorm{\mu_s}^2$ and $\min \limits_{s,s'} \inner{\mu_s}{\mu_{s'}}$. Therefore, 
\begin{align*}
  \min \limits_{S} \min \limits_{i \neq j \in S} \Ktilde_{i,j} -  \max \limits_{S \neq S'} \max \limits_{i \in S, j \in S'} \Ktilde_{i,j} =  \frac{\rho}{p}\myrounds{\frac{k}{k-1} + O(\sqrt{\frac{\log p}{p}}) + O(1/p)} \textrm{ and }
\end{align*} 
\begin{equation*}
     \mynorm{X^* - \hat{X}}_1 \leq \frac{2 \inner{\Ktilde}{X^* - \hat{X}}}{\frac{\rho}{p}\myrounds{\frac{k}{k-1} + O(\sqrt{\frac{\log p}{p}}) + O(1/p)}}  .
\end{equation*}
\end{proof}
\begin{proof}[\textbf{Proofs of Lemma 3,6}]
$f_{\sigma_{*}}(X) = \slim_{s \in [k]} \slim_{i,j \in \sigma_*^{-1}(s)} \inner{x_i}{x_j}^2$.
\begin{align*}
    \E \slim_{s \in [k]} \slim_{i,j \in \sigma_*^{-1}(s)} \inner{x_i}{x_j}^2 = \frac{mp^2}{k}(k + \frac{\alpha}{k} + O(\frac{1}{p})).
\end{align*}
We need the following tensors and their corresponding tensor norms to establish the results of lemma 3 and 6 via an application of Proposition \ref{thm:ConcenPoly}.
\begin{itemize}
    \item For each $s \in [4]$, $s^{th}$ order tensors $A_{s}$ of expectations of $s^{th}$ order derivatives of $f$ with respect to each $ \left \{ x_{i_1,d_1}, ..., x_{i_s,d_s} \right \}_{i_j \in [m], d_j \in [p]}$.
    \item Tensor norms for $A_s$ with respect to each $\mathcal{J}$ in $P_s$.
\end{itemize}
\noindent \textbf{Computing $A^1$:}
The first order derivative of $f$ with respect to $x_{id}$ for any $i \in [m]$ and $d \in [p]$: $ \frac{\partial f_{\sigma_{*}}(X)}{\partial x_{id}} =$
\begin{equation*}
    4 x_{id}^3  + \sum \limits_{d' \neq d} 2 x_{id} x_{id'}^2  + \slim_{s \in [k]} \sum \limits_{j \neq i \in \sigma_*^{-1}(s)} 2 x_{id} x_{jd}^2  + \sum \limits_{d' \neq d} \sum \limits_{j \neq i \in \sigma_*^{-1}(s)} x_{id'} x_{jd} x_{j d'}.
\end{equation*}
\begin{equation*}
    \mathbb{E}(\frac{\partial f_{\sigma_{*}}(X)}{\partial x_{id}}) = O(\sqrt{p \log p}).
\end{equation*}
Therefore, \[ A^1 =  O(\sqrt{p \log p}) \mathbb{J}_{mp} \], where $\mathbb{J}_{mp} \in \mathbb{R}^{mp}$ denotes the vector of ones.

\noindent \textbf{Computing $A^2$:}
The second order derivative of $f$ with respect to $x_{id},x_{k \beta}$ for any $i,k \in [m]$ and $d,\beta \in [p]$:
\begin{equation*}
  \frac{\partial f_{\sigma_{*}}(X)}{\partial x_{id} \partial x_{k \beta}} =  \begin{cases}
     12 x_{id}^2 + \sum \limits_{d' \neq d} 2 x_{id'}^2 + \sum \limits_{j \neq i \in \sigma_*^{-1}(s)} 2 x_{jd}^2  & \textrm{if } k = i; \beta = d, \\
    4 x_{id} x_{i \beta} + \sum \limits_{j \neq i \in \sigma_*^{-1}(s) } 2 x_{jd} x_{j \beta}& \textrm{if } k = i; \beta \neq d, \\
    4 x_{id} x_{kd} + \sum \limits_{d' \neq d} x_{id'} x_{kd'} & \textrm{if } k \neq i \in \sigma_*^{-1}(s); s \in [k]; \beta = d, \\
    x_{i \beta} x_{k d} & \textrm{if } k \neq i \in \sigma_*^{-1}(s); s \in [k]; \beta \neq d, \\
    0 &\textrm{otherwise }.
    \end{cases}
\end{equation*}
Then,
\begin{equation*}
   A^2(i,j) = \begin{cases}
     O(p) & \textrm{if } k = i; \beta = d, \\
    O( \log p) & \textrm{if } k = i; \beta \neq d, \\
    O(1) & \textrm{if } k \neq i \in \sigma_*^{-1}(s); s \in [k]; \beta = d, \\
    O(\log p / p) & \textrm{if } k \neq i \in \sigma_*^{-1}(s); s \in [k]; \beta \neq d, \\
    0 &\textrm{otherwise }.
    \end{cases}
\end{equation*}

\noindent \textbf{Computing $A^3$:}
The third order derivative of $f_{\sigma_{*}}(X)$ with respect to $x_{id},x_{k \beta},x_{\alpha l}$ for any $i,k \in [m]$ and $d,\beta \in [p]$:
\begin{equation*}
  \frac{\partial f_{\sigma_{*}}(X)}{\partial x_{id} \partial x_{k \beta} \partial x_{\alpha l}} =  \begin{cases}
24 x_{id} & \textrm{if } \alpha = k = i; l = \beta = d, \\
4 x_{il} &\textrm{if } \alpha = k = i; l \neq \beta = d, \\
4 x_{\alpha d} &\textrm{if } \alpha \neq k = i; l = \beta = d, \alpha \neq i \in \sigma_*^{-1}(s); s \in [k];\\
x_{\alpha \beta} &\textrm{if } \alpha \neq k = i; l = d \neq \beta, \alpha \neq i \in \sigma_*^{-1}(s); s \in [k]; \\
0 &\textrm{otherwise }.
\end{cases}
\end{equation*}
Then,
\begin{equation*}
  A^3_{id, k\beta, \alpha l} =  \begin{cases}
O(\frac{\log p}{p}) & \textrm{if } \alpha = k = i; l = \beta = d, \\
O(\frac{\log p}{p}) &\textrm{if } \alpha = k = i; l \neq \beta = d, \\
O(\frac{\log p}{p}) &\textrm{if } \alpha \neq k = i; l = \beta = d, \\
O(\frac{\log p}{p}) &\textrm{if } \alpha \neq k = i; l = d \neq \beta, \\
0 &\textrm{otherwise }.
\end{cases}
\end{equation*}

\textbf{Computing $A^4$:}
The fourth order derivative of $f_{\sigma_{*}}(X)$ with respect to $x_{id},x_{k \beta},x_{\alpha l},x_{q \gamma}$ for any $i,k, \alpha,q \in [m]$ and $d,\beta,l, \gamma \in [p]$:
\begin{equation*}
  \frac{\partial f_{\sigma_{*}}(X)}{\partial x_{id} \partial x_{k \beta} \partial x_{\alpha l} \partial x_{q \gamma}} =  \begin{cases}
24 & \textrm{if }q = \alpha = k = i; \gamma = l = \beta = d, \\
4  &\textrm{if } q = \alpha = k = i; \gamma = l \neq \beta = d, \\
4 &\textrm{if }q= \alpha \neq k = i; \gamma = l = \beta = d, \alpha \neq i \in \sigma_*^{-1}(s); s \in [k]; \\
1 &\textrm{if } q = \alpha \neq k = i; l = d \neq \beta = \gamma, \alpha \neq i \in \sigma_*^{-1}(s); s \in [k]; \\
0 &\textrm{otherwise }.
\end{cases}
\end{equation*}

Computing all the tensor norms, (see the proof of Lemma 9 for how the norms are computed) we have:
\[ \begin{matrix*}[l]%
\mynorm{A^1}_{\mybraces{1}} = O(p \sqrt{p \log p}); & \mynorm{A^2}_{\mybraces{1,2}} = O(p^2) ; & \mynorm{A^2}_{\mybraces{\mybraces{1},\mybraces{2}}} = O(p \log p) ;\\
 \mynorm{A^3}_{\mybraces{1,2,3}} = O(m \sqrt{p \log p}); & \mynorm{A^3}_{\mybraces{1,2},\mybraces{3}} = O(\sqrt{mp}) ; & \mynorm{A^3}_{\mybraces{\mybraces{1},\mybraces{2},\mybraces{3}}} = O(\sqrt{p \log p}) ;\\
\mynorm{A^4}_{\mybraces{1,2,3,4}} = O(mp); & \mynorm{A^4}_{\mybraces{1,2},\mybraces{3,4}} = O(p) ; & \mynorm{A^4}_{\mybraces{\mybraces{1},\mybraces{2},\mybraces{3},\mybraces{4}}} = O(1) ; \\
 \mynorm{A^4}_{\mybraces{1},\mybraces{2,3,4}} = O(\sqrt{mp}); & \mynorm{A^4}_{\mybraces{\mybraces{1},\mybraces{2},\mybraces{3,4}}} = O(p) ; &
\end{matrix*}\]%

Applying the lower tail bounds for $f_{\sigma_{*}}(X)$ from Proposition \ref{thm:ConcenPoly}, and setting the R.H.S of the inequality to $\frac{1}{p}$, we derive the following upper bound that holds with probability at least $1 - 1/p$:
\begin{equation*}
   f_{\sigma_{*}}(X) > \frac{mp^2}{k}(k + \frac{\alpha}{k} + O(\frac{1}{p})) - ( O(p^2 \sqrt{\log p}) \vee O(mp \sqrt{\log p})) 
\end{equation*}
Therefore,
\begin{equation*}
      \gamma_{\min} Q_{2 \sigma_{*}} >  \gamma_{\min} \left (1 + \frac{1}{k} + O(\frac{1}{p}) \right ) - C_{2} \gamma_{\min} \left ( \sqrt{\frac{\log p}{p^2}} \vee  \alpha \sqrt{\frac{\log p}{p^2}} \right ).
\end{equation*}
Fix some $\epsilon > 0$ be an arbitrarily small constant. For any $ \sigma: \mynorm{\beta(\sigma, \sigma_{*})}^2_{F} < 1 + (k-1) \epsilon$, let \[f_{\sigma}(X) = \slim_{s \in [k]} \slim_{i,j \in \sigma^{-1}(s)} \inner{x_i}{x_j}^2.\]
We can show that $\E f_{\sigma}(X) \leq \frac{mp^2}{k}(k + \frac{\alpha}{k} + O(\frac{1}{p}))$. Computing the tensors and their respective norms similarly as above, applying proposition \ref{thm:ConcenPoly}, followed by an union bound over all such partitions and we can show that w.h.p, 

\begin{equation*}
  \max_{\substack{\sigma: \norm{\beta(\sigma,\sigma_{*})}^2_F  \\ \le 1+(k-1) \epsilon}} \gamma_{\max}  Q_{2\sigma} \leq \gamma_{\max} \left (1 + \frac{1}{k} + O(\frac{1}{p}) \right )  + C_{2} \gamma_{\max} O \left ( \sqrt{\frac{\alpha}{p}} \vee  \alpha \sqrt{\frac{\alpha}{p}} \vee \sqrt{\frac{1}{\alpha p}} \right ).
\end{equation*}

\end{proof}

\end{document}